\documentclass{article} 
\usepackage[preprint]{colm2026_conference}

\usepackage{amsmath,amsfonts,bm}
\usepackage[most]{tcolorbox}
\usepackage{siunitx}
\usepackage{graphicx}
\usepackage{subcaption}

\usepackage[utf8]{inputenc} 
\usepackage[T1]{fontenc}    
\usepackage{hyperref}       
\usepackage{url}            
\usepackage{booktabs}       
\usepackage{nicefrac}       
\usepackage{microtype}      
\usepackage{xcolor}         

\usepackage{floatflt} 
\usepackage{paralist} 
\usepackage{float}
\usepackage{amsthm}
\usepackage{amssymb}
\usepackage{listings}  
\usepackage{xcolor}
\usepackage{hyperref}
\usepackage{indentfirst}
\usepackage{booktabs}
\usepackage{caption}
\usepackage{calligra}
\usepackage{fancyhdr}
\usepackage{pifont}
\usepackage{colortbl} 
\usepackage{amsbsy}
\usepackage{multirow}
\usepackage{wrapfig}
\usepackage{bm}
\usepackage{afterpage}
\usepackage{mathtools}
\usepackage{threeparttable}
\usepackage{setspace}
\usepackage{algorithm}

\usepackage{nicematrix, tikz} 

\newtheorem{prop}{Proposition}

\newlength{\boxunit}
\newtcolorbox{lightgrey}{
  colback=gray!10, 
  colframe=gray!60,
  boxrule=0.2pt,      
  auto outer arc,     
  width=\linewidth,
  left=0.7mm,
  right=0.7mm,
  top=0.3\boxunit,
  bottom=0.3\boxunit,
  before skip=0.3\boxunit,
  after skip=0.3\boxunit
}

\newtcolorbox{greybox}{
  boxsep=0mm, 
  top=1.5\boxunit,
  bottom=1.5\boxunit,
  before skip=\boxunit,
  after skip=\boxunit,
  width=\linewidth,
  left=2.5mm,
  right=2.5mm,
}

\def\eqref#1{equation~\ref{#1}}

\def\1{\bm{1}}

\DeclareMathAlphabet{\mathsfit}{\encodingdefault}{\sfdefault}{m}{sl}
\SetMathAlphabet{\mathsfit}{bold}{\encodingdefault}{\sfdefault}{bx}{n}

\def\gB{{\mathcal{B}}}

\def\gI{{\mathcal{I}}}

\def\gL{{\mathcal{L}}}

\def\gR{{\mathcal{R}}}

\def\gV{{\mathcal{V}}}

\def\gX{{\mathcal{X}}}

\newcommand{\E}{\mathbb{E}}

\newcommand{\softmax}{\mathrm{softmax}}

\definecolor{Gray}{gray}{0.9}
\definecolor{LightGray}{gray}{0.97}

\usepackage{microtype}
\usepackage{hyperref}
\usepackage{url}
\usepackage{booktabs}

\usepackage[utf8]{inputenc} 
\usepackage[T1]{fontenc}    
\usepackage{hyperref}       
\usepackage{url}            
\usepackage{booktabs}       
\usepackage{amsfonts}       
\usepackage{nicefrac}       
\usepackage{microtype}      
\usepackage{xcolor}         

\usepackage{algorithm}
\usepackage{algpseudocode}
\usepackage{subcaption} 

\usepackage[capitalize]{cleveref}

\crefname{equation}{eq.}{eqs.}
\Crefname{equation}{Eq.}{Eqs.}
\crefname{section}{sec.}{secs.}
\Crefname{section}{Sec.}{Secs.}

\theoremstyle{plain}
\newtheorem{theorem}{Theorem}[section]

\theoremstyle{definition}
\newtheorem{definition}[theorem]{Definition}

\theoremstyle{remark}

\usepackage{lineno}

\definecolor{darkblue}{rgb}{0, 0, 0.5}
\hypersetup{colorlinks=true, citecolor=darkblue, linkcolor=darkblue, urlcolor=darkblue}

\title{Semantic DLM+: Improving Diffusion Language Models through Bias-variance Trade-off in Transition Kernel Design}

\author{
\textbf{Keyue Jiang\textsuperscript{1 3},
Yuxiang Wang\textsuperscript{1 2},
Yanan Zhao\textsuperscript{4},
Xiang Yu\textsuperscript{1 2},
Qifang Zhao\textsuperscript{1},}\\
\textbf{Bohan Tang\textsuperscript{5},
Baojian Zhou\textsuperscript{2},
Yanghua Xiao\textsuperscript{2},
Lin Qu\textsuperscript{1},
Xiaoxiao Xu\textsuperscript{1}}\\[2pt]
{\normalfont\textsuperscript{1} Alibaba Group \quad
\textsuperscript{2} Fudan University\quad \normalfont\textsuperscript{3} University College London}\\
{\normalfont\textsuperscript{4} Nanyang Technological University \quad
\textsuperscript{5} University of Oxford}
}

\begin{document}

\ifcolmsubmission
\linenumbers
\fi

\maketitle

\begin{abstract}
Diffusion Language Models (DLMs) have demonstrated strong scaling capacity as alternatives to autoregressive language models. However, their performance is highly sensitive to the choice of transition kernels, and poorly designed kernels can lead to issues like training instability, slow convergence, and biased sampling. In this paper, we study this sensitivity through a principled analysis of generalization error and identify three critical factors: asymptotic bias (difficulty in approximating the posterior distribution), exposure bias (error propagation during sampling), and optimization variance induced by kernel dispersion. We further compare different transition kernels: masking diffusion yields sparse and easier posterior-approximation targets, while uniform diffusion provides stronger sampling-side repair but induces harder approximation. Motivated by this trade-off, we revisit a previously overlooked variant, semantic DLM (SemDLM), where the transition kernel corrupts tokens to neighborhoods that are semantically similar. Our theory suggests that SemDLM can serve as a plausible middle ground by reducing the posterior approximation difficulty of uniform diffusion while retaining repair ability. However, we find that SemDLM suffers from a semantic basin problem, where sampling repeatedly stays within a semantic region and produces low-diversity text. To address this, we propose SemDLM+, which adds a global transition and a semantic-frequency penalty during sampling. Experiments on LM1B and OpenWebText show that SemDLM+ improves training dynamics and achieves competitive language modeling and generation quality with satisfactory diversity.
\end{abstract}

\section{Introduction}

Diffusion language models (DLMs) \citep{dream2025, nie2025llada} have emerged as a compelling alternative to autoregressive language models (ALMs)~\cite{Dubey2024TheL3,Yang2024Qwen25TR,DeepSeekAI2024DeepSeekV3TR} due to their parallelizable training and faster decoding speeds. DLMs need to design a transition kernel to gradually corrupt the clean data to noise, and different kernels can lead to substantially different training and sampling dynamics. The predominant archetype is absorbing kernel~\citep{sahoo2024simple, DBLP:conf/nips/ShiHWDT24, ou2025your}, as prior work shows that masking DLMs~\citep{austin2021structured,hoogeboom2021argmax} can effectively alleviate slow convergence, training instability, and weak generalization~\citep{wang202610openchallengessteering} compared to others such as uniform, marginal, and semantic neighborhood diffusion~\citep{austin2021structured,hoogeboom2021argmax,DBLP:conf/iclr/SchiffSPWBDARPK25,qin2025defog}. However, recent studies demonstrated that uniform DLMs can enjoy better scaling capacity when sufficient data and training budget are given~\citep{vonruette2025scalingbehaviordiscretediffusion, DBLP:conf/icml/RutteFDOS025, sahoo2026scalingmaskeddiffusionlanguage, wang2026trainabilitymaskeddiffusionlanguage}. This reveals a gap that is not fully explained: 
\begin{center}
\textit{RQ1: Why does uniform diffusion scale well under large resources, while masking diffusion remains stronger in many practical regimes? Can a transition kernel combine the advantages of both?}
\end{center}

In this paper, we answer this question by developing a principled error analysis framework for DLMs. We first decompose the generation error into \textit{Approximation Error}, \textit{Sampling Error}, and \textit{Forward Kernel Mismatch}, and then separate the first two through a bias-variance lens. This highlights three kernel-dependent factors: asymptotic bias that reflects approximation difficulty, exposure bias that measures error accumulation during reverse sampling, and optimization variance that captures finite-resource instability. Our analysis uncovers a trade-off in existing paradigms: masking diffusion has sparse posterior targets that are easier to fit, but provides limited intrinsic repair during sampling. 
Uniform diffusion has denser posterior targets that are harder to optimize, but can be preferable for sampling because its reverse dynamics naturally preserve the ability to repair earlier errors.

This trade-off motivates us to revisit a previously overlooked variant, namely semantic diffusion language models (SemDLMs), where the forward kernel corrupts a token into semantically related tokens. SemDLMs are theoretically attractive because they restrict the posterior to a meaningful neighborhood, reducing the approximation difficulty relative to uniform diffusion, while still allowing richer transitions than masking diffusion. However, prior SemDLM designs have not consistently delivered strong generation performance~\citep{austin2021structured,DBLP:journals/corr/abs-2603-21342}. This leads to our second question:
\begin{center}
\textit{RQ2: While theoretically plausible, why does SemDLM still underperform existing methods? And how can we translate the theoretical advantages of SemDLM into practical gains?}
\end{center}
In our experiments, we find a \emph{semantic basin} issue: the reverse sampling may repeatedly generate semantically adjacent tokens, producing locally plausible but low-diversity text. This happens because the semantic likelihood term and the model's rollout-induced bias can reinforce sampling within a same semantic cluster. To solve this issue, we propose SemDLM+. First, we add a global transition on top of the semantic transition kernel to prevent the sampling being trapped in some semantic neighborhoods. Second, we introduce a semantic-frequency penalty mechanism during sampling that counteracts the rollout-induced tendency to overproduce tokens from the same semantic basin. Together, these two mechanisms turn SemDLM into a powerful variants as DLMs: easier to train than fully uniform diffusion, but more repairable during sampling than purely masking-based diffusion.

In summary, our contributions are threefold. 1) We provide a principled error analysis for DLMs that explains how transition-kernel design affects approximation difficulty, sampling dynamics, and finite-resource optimization. 2) Guided by this analysis, we develop an improved SemDLM+, which augments SemDLM with a global transition for sampling repair and semantic-frequency penalty to avoid semantic-basin collapse. These designs make SemDLM practically successful and preserve its properties of efficient training and reliable sampling. 3) Experiments on LM1B and OpenWebText show that SemDLM+ improves training dynamics and achieves strong language modeling and generation performance, highlighting SemDLM+ as a promising direction for DLM kernel design.

\section{Preliminaries}
\label{sec:preliminaries}

We denote by $q_0$ the data distribution over support $\mathcal{X}$, and by $q_1$ a reference distribution that is easy to sample (e.g., the absorbing or uniform distribution). In language modeling, $\gX$ represents the space of length-$L$ sequences where $x=\left(x^{(1)}, \ldots, x^{(L)}\right) \in \mathcal{X}:=\mathcal{V}^L$ with $\gV$ being a vocabulary of size $|\gV|=V$. Diffusion models aim to construct a probability path $q_t, 0\leq t\leq 1$ such that one can sample from \(q_1\) and transform it through the learned reverse process to get samples that approximately follow \(q_0\).

\textbf{Diffusion as Continuous-time Markov Chains (CTMC).} Following~\citet{DBLP:conf/icml/LouME24}, we view the forward noising process to construct the probability path as a CTMC with \textit{infinitesimal generator} $Q_t$, i.e., \(\frac{d q_t}{d t}=Q_t q_t,0\leq t\leq 1\). One can simulate the forward CTMC via:
\begin{equation}
\text{Forward Process via Euler Sampling: }q\left(x_{t+dt}=y \mid x_t=z\right)=\delta_{z y}+Q_t(z, y) dt+O\left(dt^2\right)
\end{equation}

Diffusion deploys a parameterized model to mimic the reverse process, \(p_\theta(x_{t-dt} \mid x_t) \approx q(x_{t-dt} \mid x_t)\), such that one can iteratively sample a trajectory from the reference distribution to the data distribution through $p_\theta(x_{t-dt}\mid x_t)$. A predominantly used parameterization in DLMs is $x$-prediction~\citep{nie2025llada, dream2025, cheng2025sdarsynergisticdiffusionautoregressionparadigm, liu2025wedlmreconcilingdiffusionlanguage}, which builds $p_\theta\left(x_0 \mid x_t\right)$ instead of directly approximating \(q(x_{t-dt} \mid x_t)\) as:
\begin{equation}
\label{eq:x-prediction}
p_\theta\left(x_{t-dt} \mid x_t\right):=\int_{x_0 \in \mathcal{X}} q\left(x_{t-dt} \mid x_t, x_0\right) p_\theta\left(x_0 \mid x_t\right)dx_0.
\end{equation}
The posterior is \(q\left(x_{t-dt} \mid x_t, x_0\right) \propto q\left(x_t \mid x_{t-dt}\right) q(x_{t-dt} \mid x_0)\). We note that the local transition \(q_{t\mid t-dt}\), the cumulative forward \(q_{t\mid 0}\), and the generator $Q_t$ are equivalent representations of the same forward process. As such, we describe the process in terms of \(q_{t\mid 0}\) and $x$-prediction in following.

\textbf{Training objective.} DLMs optimize over a variational upper bound of negative log-likelihood~\citep{ho2020ddpm}, $-\log p_\theta(x_0) \le \ell_0 +\ell_{\text{prior}} + \sum_{t} \ell_t$, with $\ell_t =  \mathbb{E}_{x_t}[D_{KL}(q(x_{t-dt}|x_t, x_0) \| p_\theta(x_{t-dt}|x_t))]$, $\ell_0 =\mathbb{E}_{q(x_{0:1}|x_0)} [-\log p_\theta(x_{0:1})]$ and $\ell_{\text{prior}} = D_{KL}(q(x_1|x_0) \| p_\theta(x_1))$. Taking $dt \to 0$ will make the first two terms negligible. With~\Cref{eq:x-prediction}, we can derive \(D_\mathrm{KL}\left(q\left(x_{t-dt} \mid x_t\right) \| p_\theta\left(x_{t-dt} \mid x_t\right)\right)  \leq D_\mathrm{KL}\left(q\left(x_0 \mid x_t\right) \| p_\theta\left(x_0 \mid x_t\right)\right)\)~\citep{li2023on}, 
which then yields the common training objective for DLMs:
\begin{equation}
\label{eq:ori_loss}
\gL(\theta)=\mathbb{E}_{t, x_0, x_t \sim q(x_t\mid x_0)} D_\mathrm{KL}\left(q\left(x_0\mid x_t\right) \| p_\theta\left(x_0 \mid x_t\right)\right).
\end{equation}
\textbf{Sampling Objective.} After learning $p_\theta(x_0\mid x_t)$, the synthesized data points $x_0$ are generated by iteratively sampling from the induced reverse kernel \(x_{t-dt}\sim p_\theta(x_{t-dt}\mid x_t)\) from $t=1$ to 0. The quality of synthesized samples is measured by the generation risk:
\begin{equation}
\label{eq:generation_error}
    \gR(\theta) = D_\mathrm{KL}\left(q\left(x_0\right) \| p_\theta\left(x_0\right)\right).
\end{equation}
\textbf{Transition Kernel Design.} With the CTMC framework, we can interpret DLM variants via their transition kernel design. In this paper, we mainly consider the following variants of \(q(x_t \mid x_0)\).

\textbf{Absorbing (Masking) Transition} defines a special token $\mathtt{[MASK]}$ as the absorbing state such that,
\begin{equation}
    q(x_t = j \mid x_0 = i) = \alpha_t  \delta_{ij} + (1 - \alpha_t)  \delta_{j, \mathtt{[MASK]}},
    \label{eq:masking_kernel}
\end{equation}
where $\alpha_t$ is the decay factor and $\delta_{ij}$ is the Kronecker delta.

\textbf{Uniform transition.}
Uniform diffusion spreads the corrupted mass over the full vocabulary:
\begin{equation}
q(x_t=j\mid x_0=i)
=
\alpha_t\delta_{ij}
+ V^{-1}(1-\alpha_t),
\qquad i,j\in\mathcal V .
\label{eq:uniform_kernel}
\end{equation}
\textbf{Semantic Transition} is first introduced in~\citet{austin2021structured}, which has its kernel defined as,
\begin{equation}
    q(x_t = j \mid x_0 = i)
    =
    \alpha_t \delta_{ij}+
    (1-\alpha_t) s_t^{\mathrm{sem}}(j\mid i),
    \label{eq:knn_kernel}
\end{equation}
where the semantic kernel was initially designed as transitions over semantic clusters, such that $s_t^{\mathrm{sem}}(j\mid i)=(k_t)^{-1}\mathbb I\bigl(j\in \mathcal N_{k_t}(i)\bigr)$ where $\mathcal N_{k_t}(i)$ is the top-$k_t$ semantic neighborhood of token $i$. Unfortunately, this design suffers from strong training-sampling mismatch~\citep{DBLP:conf/iclr/NingLSSE24} that leads to significant performance degeneration. So we can lightly modify the kernel to make sure the reference distribution matches in forward and reverse process. This gives the transition as
\begin{equation}
    s_t^{\mathrm{sem}}(j\mid i)
    =\exp(\tau_t^{-1}\mathrm{sim}(i,j)) / \sum_{k\in\mathcal V}\exp(\tau_t^{-1}\mathrm{sim}(i,k)),
    \label{eq:semantic-proposal}
\end{equation}
where $\mathrm{sim}(i,j)$ is a similarity score between embeddings for word $i$ and $j$; $\tau_t>0$ is a temperature parameter scheduled to monotonically increase from $\tau_t \rightarrow 0 ,t \rightarrow 0$ and $\tau_t \rightarrow +\infty, t \rightarrow 1$.

\section{Principled Error Analysis in DLMs through Bias-Variance Trade-off}
\label{sec:error_analysis}

In this section, we first build a principled error analysis for DLMs (\Cref{sec:error_analysis_framework}), where we identify several sources that cause the generation error. Then, we provide an interpretation of the transition kernel designs' impact on algorithm behaviors in~\cref{sec:bias-variance-from-loss}.

\subsection{Error Analysis and Bias-variance Trade-off in DLMs}
\label{sec:error_analysis_framework}

We can decompose the generation error in~\cref{eq:generation_error} as (detailed proof in~\cref{apdx:bias_decomp}):
\begin{equation}
\label{eq:kl-three-term-avg}
\begin{aligned}
&D_{\mathrm{KL}}\big(q(x_0)\|p_\theta(x_0)\big)
=\mathbb{E}_{t}\Big[D_{\mathrm{KL}}\big(q(x_t)\|p_\theta(x_t)\big)\Big]+\\
&\mathbb{E}_{t, x_t\sim q_t}\left[
D_{\mathrm{KL}}\big(q(x_0\mid x_t)\|p_\theta(x_0\mid x_t)\big)
\right]-\mathbb{E}_{t,x_0\sim q}\left[
D_{\mathrm{KL}} \big(q(x_t\mid x_0)\| p_\theta(x_t\mid x_0)\big)
\right].
\end{aligned}
\end{equation}
Effectively, 1) \textbf{Approximation Error.} The second term is the approximation error under the true marginals
$\{q_t\}$, which is our training objective in~\Cref{eq:ori_loss}. 2) \textbf{Sampling Error.} The first term is a time-averaged marginal mismatch between the model's roll-out $\{p_t\}$ and the true marginals $\{q_t\}$. 3) \textbf{The forward path mismatch.} The last term is a correction term measuring the mismatch between the model's forward conditionals $p(x_t\mid x_0)$ and the true forward process $q(x_t\mid x_0)$. 
\begin{lightgrey}
\begin{prop}[Error Decomposition]
It is common to assume that $p(x_t\mid x_0)\equiv q(x_t\mid x_0)$ for all $(x_0,t)$ as the forward paths are manually designed. Then the generation error becomes,
\begin{equation}
\label{eq:kl-two-term-avg}
D_{\mathrm{KL}}\big(q(x_0)\|p(x_0)\big)= \underbrace{\mathbb{E}_{t}\Big[D_{\mathrm{KL}}\big(q_t\|p_t\big)\Big]}_{\textbf{Sampling Error}}
+\underbrace{\mathbb{E}_{t, x_t}\left[
D_{\mathrm{KL}}\big(q(x_0\mid x_t)  \|  p(x_0\mid x_t)\big)
\right]}_{\textbf{Approximation Error}}.
\end{equation}
\end{prop}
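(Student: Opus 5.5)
The plan is to derive the identity pointwise in $t$ from the chain rule for the KL divergence applied to a joint law of $(x_0,x_t)$, and then average over $t$. This amounts to re-deriving \Cref{eq:kl-three-term-avg} and then specializing it under the stated assumption.

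For a fixed $t\in[0,1]$, I take two joint distributions on $\mathcal X\times\mathcal X$:
\begin{itemize}
\item the true joint $q(x_0,x_t)=q_0(x_0)\,q(x_t\mid x_0)$;
\item the model joint $p(x_0,x_t)=p_t(x_t)\,p(x_0\mid x_t)$, where $p_t$ is the marginal of the model's reverse rollout at time $t$ and $p(x_0\mid x_t)$ is the conditional law of the rollout's endpoint given its state at time $t$.
\end{itemize}
From $p(x_0,x_t)$ I define $p_0$ and $p(x_t\mid x_0)$ by marginalization and Bayes' rule. With this construction both factorizations of the model joint are available:
\[
p(x_0,x_t)=p_0(x_0)\,p(x_t\mid x_0)=p_t(x_t)\,p(x_0\mid x_t).
\]

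Next I expand $D_{\mathrm{KL}}\big(q(x_0,x_t)\,\|\,p(x_0,x_t)\big)$ with the chain rule in both orders:
\[
D_{\mathrm{KL}}(q_0\|p_0)+\mathbb E_{x_0\sim q_0}D_{\mathrm{KL}}\big(q(x_t\mid x_0)\|p(x_t\mid x_0)\big)
= D_{\mathrm{KL}}(q_t\|p_t)+\mathbb E_{x_t\sim q_t}D_{\mathrm{KL}}\big(q(x_0\mid x_t)\|p(x_0\mid x_t)\big).
\]
Each side is obtained by writing $\log\frac{q(x_0,x_t)}{p(x_0,x_t)}$ as a sum of a marginal log-ratio and a conditional log-ratio and taking expectations under $q$. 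Solving for $D_{\mathrm{KL}}(q_0\|p_0)$ gives, for every $t$, exactly the bracketed integrand of \Cref{eq:kl-three-term-avg}. The left-hand side does not depend on $t$, so taking $\mathbb E_t$ under any time distribution leaves it unchanged and yields \Cref{eq:kl-three-term-avg}. Finally, the hypothesis $p(x_t\mid x_0)\equiv q(x_t\mid x_0)$ makes every inner KL in the last term vanish, which gives \Cref{eq:kl-two-term-avg}.

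The algebra is routine. The main obstacle is the bookkeeping about which $p(x_0\mid x_t)$ appears.
\begin{itemize}
\item In the decomposition it is the rollout-induced conditional. This coincides with the network's $p_\theta(x_0\mid x_t)$ from \Cref{eq:x-prediction} only in the idealized reading that the reverse chain run from $x_t$ has endpoint law $p_\theta(x_0\mid x_t)$.
\item Under the assumption $p(x_t\mid x_0)=q(x_t\mid x_0)$ this reading is consistent, since the model joint then has the same forward kernel as $q$.
\end{itemize}
I will state this identification explicitly, so that the approximation term is recognized as the objective \Cref{eq:ori_loss}.

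Two technical points also need care. The KLs must be finite, which requires $p_t$ to have full support wherever $q_t$ does; this holds for all the kernels in \Cref{eq:masking_kernel,eq:uniform_kernel,eq:knn_kernel} at $t>0$, and I will treat $t=0$ by a limiting argument. Interchanging $\mathbb E_t$ with the pointwise identity is harmless, since the identity holds for every $t$.
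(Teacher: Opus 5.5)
Your proposal is correct and follows the paper's proof essentially step for step: expand $D_{\mathrm{KL}}\big(q(x_0,x_t)\,\|\,p(x_0,x_t)\big)$ by the chain rule in both orders, equate the two expansions, average over $t$, and drop the forward-mismatch term under the hypothesis. Your explicit construction of a single model joint that admits both factorizations makes precise what the paper leaves implicit. Also, under the hypothesis you can equate the two chain-rule expansions directly (both hold in $[0,\infty]$) without ever subtracting a term, so the finiteness and support discussion is not needed.
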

\end{lightgrey}

\paragraph{Bias-Variance Decomposition.} In general ML, the generalization error, a measurement that gives the prediction ability of an ML algorithm, can be decomposed into 3 meaningful terms\footnote{Irreducible risks are usually introduced by the noise, so we omit them in the following analysis. }:
\[
\text{Generalization Error}=\text {Bias}+\text{Variance}+ \text{Irreducible Risk}
\]
\emph{Bias} is the error between the model's expected prediction and the ground truth, primarily stemming from limitations in the algorithm design or hypothesis space. \emph{Variance} quantifies the spread of the estimated model parameters around their expected value and its impact on inference, typically arising from sensitivity to finite data sampling and the stochasticity of the optimization process. 

We can utilize a similar framework to respectively decompose the approximation and sampling error from~\Cref{eq:kl-two-term-avg} (details in Appendix~\ref{apdx:bvd}). 
\begin{lightgrey}
\begin{prop}[Bias-Variance Trade-off]
Let \(\bar p(x_0\mid x_t)
:= \mathbb E_S\big[p_{\hat \theta_S}(x_0\mid x_t)\big] =
\mathbb E_S\big[\hat p_S(x_0\mid x_t)\big]\) denote the expected predictive distribution,
where $\hat \theta(S)$ is the parameter induced by a specific training set and optimization randomness $S$ (short as $\hat p_S := p_{\hat \theta_S}$). The approximation error in~\Cref{eq:kl-two-term-avg} can be decomposed as,
\begin{equation}
\label{eq:kl-bv-global}
\mathbb{E}_{t,x_t}\mathbb E_S\Big[ D_{\mathrm{KL}}(q(x_0\mid x_t) \| \hat p_S(x_0\mid x_t))\Big]
=
\underbrace{
\mathbb E_{t, x_t}\Big[
D_{\mathrm{KL}}\big(q(x_0\mid x_t)\| \bar p(x_0\mid x_t)\big)
\Big]}_{\textbf{Asymptotic Bias } \gB_{\text{asym}}}
+
\underbrace{
\mathbb E_{t, x_t}\big[\mathcal V(x_t)\big]}_{\textbf{Variance } \gV},
\end{equation}
where $x_t$ is constructed over the forward process, $q(x_t)=\int_{x_0}q(x_t\mid x_0)p(x_0)dx_0$. The posterior approximation variance term is \(\gV(x_t)
:=
\mathbb E_{q(x_0\mid x_t)}
\Big[
\log \bar p(x_0\mid x_t) -
\mathbb E_S \log (\hat p_S(x_0\mid x_t))
\Big]
\ge 0\). And the step-wise sampling error in~\Cref{eq:kl-two-term-avg} can be written as,
\begin{equation}
\label{eq: sampling_decomp}
D_{\mathrm{KL}}\big(q_t(x_t)\|p_t(x_t)\big) = \underbrace{D_{\mathrm{KL}}\big(q_t(x_t)\|\bar{p}_t(x_t)\big)}_{\textbf{Exposure Bias }\gB_{t}} + \underbrace{\mathbb{E}_{x_t}[\log \bar{p}(x_t) - \mathbb{E}_S (\log \hat{p}_S(x_t))]}_{\textbf{Sampling Roll-out Variance }\gV_t}
\end{equation}
\end{prop}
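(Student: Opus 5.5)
The plan is to prove both identities with the same algebraic trick: insert the expected predictive distribution inside the logarithm, then use linearity of expectation. No inequality is needed for the equalities themselves; nonnegativity of the variance terms comes separately from Jensen's inequality.

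\textbf{Approximation error.} Fix $(t,x_t)$. For any training realisation $S$, write
\[
\log\frac{q(x_0\mid x_t)}{\hat p_S(x_0\mid x_t)}=\log\frac{q(x_0\mid x_t)}{\bar p(x_0\mid x_t)}+\Big[\log \bar p(x_0\mid x_t)-\log \hat p_S(x_0\mid x_t)\Big].
\]
Next, take $\mathbb E_{q(x_0\mid x_t)}$ and then $\mathbb E_S$.
\begin{itemize}
\item The first bracket does not depend on $S$. Its expectation is exactly $D_{\mathrm{KL}}(q(x_0\mid x_t)\|\bar p(x_0\mid x_t))$.
\item For the second bracket, swap $\mathbb E_S$ and $\mathbb E_{q(x_0\mid x_t)}$ by Fubini. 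This is legitimate because the sum over the finite set $\mathcal X=\mathcal V^L$ is finite. The result is precisely $\mathcal V(x_t)$.
\end{itemize}
Averaging over $t$ and $x_t\sim q_t$ gives \Cref{eq:kl-bv-global}.

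\textbf{Nonnegativity of $\mathcal V(x_t)$.} By concavity of $\log$ and Jensen's inequality applied pointwise in $x_0$,
\[
\mathbb E_S\log\hat p_S(x_0\mid x_t)\le\log\mathbb E_S\hat p_S(x_0\mid x_t)=\log\bar p(x_0\mid x_t).
\]
So the integrand of $\mathcal V(x_t)$ is nonnegative for every $x_0$, and therefore $\mathcal V(x_t)\ge 0$.

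\textbf{Sampling error.} The same argument applies to the marginals, once the objects are defined. I would take $\hat p_{S,t}$ to be the time-$t$ marginal of the reverse roll-out under $\hat\theta_S$. I would define $\bar p_t(x_t):=\mathbb E_S[\hat p_{S,t}(x_t)]$, which is the roll-out marginal averaged over training randomness. With these definitions:
\begin{itemize}
\item Insert $\bar p_t$ into $\log\big(q_t/\hat p_{S,t}\big)$.
\item Take $\mathbb E_{x_t\sim q_t}\mathbb E_S$.
\item This splits into $\mathcal B_t=D_{\mathrm{KL}}(q_t\|\bar p_t)$ plus $\mathcal V_t$.
\item Jensen gives $\mathcal V_t\ge 0$ exactly as above.
\end{itemize}

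\textbf{Main obstacle.} The main obstacle is interpretive rather than computational. The left-hand side of \Cref{eq: sampling_decomp} is written as $D_{\mathrm{KL}}(q_t\|p_t)$ for a single model, yet the right-hand side involves $\mathbb E_S$. So the identity must be read with the left side averaged over $S$, that is, as $\mathbb E_S D_{\mathrm{KL}}(q_t\|\hat p_{S,t})$, in parallel with \Cref{eq:kl-bv-global}. I would state this reading explicitly.

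A second point to stress is that $\bar p_t$ is the average of the roll-out marginals. It is generally not the roll-out marginal of the averaged conditional $\bar p(x_0\mid x_t)$, because the reverse roll-out composes the learned kernel many times, and composition does not commute with averaging over $S$. This is why $\mathcal B_t$ is a genuine exposure-bias term, distinct from the asymptotic bias. With this definition fixed, the identity is exact.
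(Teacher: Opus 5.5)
Your proposal is correct and follows the paper's own argument. Both add and subtract $\log\bar p$ inside the log-ratio, use linearity with an exchange of $\mathbb E_S$ and $\mathbb E_{q(x_0\mid x_t)}$ to split off $D_{\mathrm{KL}}(q\|\bar p)$ plus $\mathcal V(x_t)$, obtain $\mathcal V(x_t)\ge 0$ from Jensen, and repeat the same steps for the marginals. Your explicit reading of the sampling identity as $\mathbb E_S D_{\mathrm{KL}}(q_t\|\hat p_{S,t})$ with $\bar p_t:=\mathbb E_S\hat p_{S,t}$ is exactly what the paper's ``same derivation, replacing $p(x_0\mid x_t)$ with $p(x_t)$'' implicitly requires.
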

\end{lightgrey}
\Cref{apdx:unify_ts_variance} shows that sampling variance in~\Cref{eq: sampling_decomp} can be controlled by approximation variance. We therefore unify posterior and roll-out variances in the following analysis.

\subsection{How Transition Kernel Design affects Generation Error?}
\label{sec:bias-variance-from-loss}


From~\Cref{sec:error_analysis_framework}, we effectively identified three important sources of error: asymptotic bias in training, exposure bias in sampling, and the variance. In this section, we will illustrate how different transition designs introduced in~\Cref{sec:preliminaries} affect the model performance of DLMs through these terms. We first provide an overview of the three sources.

\begin{lightgrey}
\textbf{Asymptotic Bias.} $\gB_{\text{asym}}$ reflects the intrinsic difficulty of approximating the true denoising posterior \(q(x_0\mid x_t)\) with a parameterized predictor \(p_\theta\) even with infinite training data and optimal convergence. It comes from \textit{Model Misspecification} and \textit{Capacity Bottlenecks}, especially when the architecture lacks the expressivity to represent the complexity of the true noise-corrupted posterior.

\textbf{Exposure Bias.} $\mathcal{B}_{\exp}
:= \sum_{t=0}^{T} \gB_t$. While sampling, the model generates the sequence iteratively from $t=1$ to $t=0$. The input $x_t$ is not drawn from the true marginal $q_t$, but from the model's own previous generative distribution $\bar{p}_t$. If the distribution $\bar{p}_t$ deviates even slightly from $q_t$, then subsequent reverse steps are evaluated on shifted inputs. This phenomenon is known as Exposure Bias, and it quantifies the error coming from \textit{Sampling Dynamics} and \textit{Error Accumulation}.

\textbf{Variance.} $\gV$ measures the instability of the learned predictor under finite training resources, which captures the error caused by \textit{Finite Data}, \textit{Optimization Stochasticity}, and \textit{Resource-limited Training}. Different transition kernels \(q(x_t\mid x_0)\) can induce different levels of variance.
\end{lightgrey}

\subsubsection{Asymptotic Bias Analysis through Posterior Geometry.}
\label{sec:asy_bias}

To describe the posterior approximation difficulty, we can characterize the \emph{local geometry} of the target posterior. Let \(f_\theta(x_t)^i \in \mathbb R^{|V|}\) denote the predicted logits at position \(i\in[1:L]\) such that $p_\theta^i(\cdot\mid x_t)=\softmax \bigl(f_\theta(x_t)^i\bigr)$.
Assuming conditional factorization across positions given \(x_t\), the asymptotic bias can be decomposed into local losses:
\begin{equation}
\gB_{\text{asym}}
=
\mathbb E_{t,x_t} \left[
\sum_{i=1}^L
\ell \left(f_\theta(x_t)^i; q(x_0^i\mid x_t)\right)
\right],
\quad
\ell(f;q):=D_{\mathrm{KL}} \left(q(x_0^i\mid x_t)\,\|\,\softmax(f)\right),
\label{eq:global-kl-local-decomp}
\end{equation}
Applying the chain rule, the parameter gradient can be written as,
\begin{equation}
\nabla_\theta \gL
=
\mathbb E_{t,x_t}
\left[
\sum_{i=1}^n
\sum_{k\in V}
\left(
p_\theta(x_0^i=k\mid x_t)-q(x_0^i=k\mid x_t)
\right)
\cdot
\nabla_\theta f_\theta(x_t)^i_k
\right].
\label{eq:param-grad-local-mismatch}
\end{equation}

To formalize the local approximation difficulty, let \(f^\star(q)\) denote optimal logit vector satisfying $\softmax \bigl(f^\star(q)\bigr)=q$. We apply a local Taylor expansion which yields
\begin{equation}
\nabla_f \ell(f;q)
=
\Sigma(q)(f- f^*(q))
+o \left(\|f - f^*(q)\|\right), \text{with }\Sigma(q):=\nabla^2_f\ell(f;q)=\text{Diag}(q)-qq^\top.
\label{eq:local-grad-linearization}
\end{equation}
where $\Sigma(q)$ is the softmax Hessian. Thus, the posterior approximation difficulty is fully determined by the following properties of \(\Sigma(q)\).

\begin{lightgrey}
\paragraph{Posterior approximation difficulty.}
For a target posterior \(q:=q(x_0\mid x_t)\), we quantify its local approximation difficulty via the following metrics of \(\Sigma(q)\):
\begin{equation}
\begin{aligned}
\text{Logits Active Direction:  }&d_{\mathrm{act}}(q):=
\text{rank} \bigl(\Sigma(q)\bigr)
=
|\text{supp}(q)|-1,
\\
\text{Logits Error Sensitivity: } &\gI_1(q) :=
\text{tr} \bigl(\Sigma(q)\bigr)
=1-\|q\|_2^2, \quad\gI_2(q):= \text{tr} \bigl(\Sigma(q)^2\bigr).
\end{aligned}
\end{equation}
\(d_{\mathrm{act}}\) measures the number of independent logit directions that must be simultaneously fitted. \(\mathcal I_1\) quantifies how local logit mismatch is converted into error, and \(\mathcal I_2\) quantifies gradient energy. Lower values mean easier approximation. We provide more explanations in~\cref{apdx:asym_bias}. 
\end{lightgrey}

Based on these metrics, we can now study how the shape of the posterior
\(q(x_0\mid x_t)\) affects the difficulty of fitting. By Bayes' rule, \(q(x_0\mid x_t)
\propto
p_{\mathrm{data}}(x_0)\, q(x_t\mid x_0)\). Therefore, the geometry of the posterior is determined jointly by \(p_{\mathrm{data}}(x_0)\) and the forward kernel \(q(x_t\mid x_0)\). Substituting the kernel for masking (\Cref{eq:masking_kernel}), uniform (\Cref{eq:uniform_kernel}), and semantic diffusion (\Cref{eq:knn_kernel}) obtains:

\begin{lightgrey}
\begin{prop}[Approximation difficulty across diffusion variants.]
\label{prop:appx_diff}
We derive in~\Cref{apdx:asym_bias} that, 
\begin{equation}
\label{eq:strict-gap-dact}
\mathbb E \left[d_{\mathrm{act}}^{\mathrm{mask}}\right]
\leq
\mathbb E \left[d_{\mathrm{act}}^{\mathrm{sem}}\right]
\mathbb E 
\leq\left[d_{\mathrm{act}}^{\mathrm{uni}}\right], \quad
\mathbb E \left[\gI_1^{\mathrm{mask}}\right]
\leq
\mathbb E \left[\gI_1^{\mathrm{sem}}\right]
\leq
\mathbb E \left[\gI_1^{\mathrm{uni}}\right]
\end{equation}
\end{prop}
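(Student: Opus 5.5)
We work position-wise, using the factorized form in~\Cref{eq:global-kl-local-decomp}. By Bayes' rule the local target is $q(x_0^i=k\mid x_t)\propto p_{\mathrm{data}}(k\mid\cdot)\,q(x_t^i\mid x_0^i=k)$. The plan is to write each kernel's likelihood column $k\mapsto q(x_t^i=j\mid x_0^i=k)$ explicitly and read off the support and the collision mass $\|q\|_2^2$. Both $d_{\mathrm{act}}=|\mathrm{supp}(q)|-1$ and $\gI_1=1-\|q\|_2^2$ are functionals of the posterior only, so the comparison reduces to comparing the three posteriors generated from a common data prior $\pi(k):=p_{\mathrm{data}}(x_0^i=k\mid\text{context})$ at the same $\alpha_t$.

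\textbf{Masking.} There are two cases for an observed token $j$.
\begin{itemize}
\item If $j\neq\mathtt{[MASK]}$, the likelihood is $\alpha_t\delta_{kj}$, so the posterior is $\delta_j$. Hence $d_{\mathrm{act}}=0$ and $\gI_1=0$.
\item If $j=\mathtt{[MASK]}$, the likelihood $1-\alpha_t$ is constant in $k$, so the posterior equals $\pi$.
\end{itemize}
The expectation is therefore $(1-\alpha_t)\big(|\mathrm{supp}\,\pi|-1,\;1-\|\pi\|^2\big)$.

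\textbf{Uniform and semantic.} We write the likelihood as $\alpha_t\delta_{kj}+(1-\alpha_t)s(j\mid k)$, with $s\equiv V^{-1}$ for uniform and $s=s^{\mathrm{sem}}_t$ for semantic. The posterior is then a mixture
\[
q(\cdot\mid j)=\lambda_j\,\delta_j+(1-\lambda_j)\,r_j,\qquad r_j(k)\propto\pi(k)s(j\mid k).
\]
For uniform, $r_j=\pi$. For semantic, $r_j$ is $\pi$ reweighted by $\exp(\tau_t^{-1}\mathrm{sim}(k,j))$, which concentrates on the semantic neighbourhood of $j$.

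\textbf{Comparing $d_{\mathrm{act}}$.} Consider first the uniform kernel. Every observed $j$, including ones with $\pi(j)=0$, produces a posterior whose support is $\mathrm{supp}\,\pi\cup\{j\}$. This is never smaller than the masking posterior, which is either $\delta_j$ or $\pi$. For the semantic kernel we would interpret the neighbourhood kernel as effectively supported on $\mathcal N_{k_t}$, as in the original top-$k$ design of~\Cref{eq:knn_kernel}. The softmax version gives full support formally, so we take an effective-support, thresholded reading. Then $\mathrm{supp}(r_j)\subseteq\mathrm{supp}\,\pi\cap\mathcal N(j)$, which gives
\[
d^{\mathrm{mask}}\le d^{\mathrm{sem}}\le d^{\mathrm{uni}}
\]
pointwise in $j$, once we couple the events "masked" and "corrupted" through the common probability $1-\alpha_t$.

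\textbf{Comparing $\gI_1$.} We use convexity of $\|\cdot\|_2^2$ along the mixture. For fixed $\lambda$,
\[
\|\lambda\delta_j+(1-\lambda)r\|^2=\lambda^2+2\lambda(1-\lambda)r(j)+(1-\lambda)^2\|r\|^2 .
\]
The semantic reweighting increases both $r(j)$ and $\|r\|^2$ relative to $\pi$. Increasing $\|r\|^2$ is a Rényi-2 concentration fact: an exponential tilt toward $j$ raises collision probability when $\pi$ is not already peaked at the tilt. The semantic kernel also increases $\lambda_j$, because $s(j\mid j)\ge V^{-1}$. Both effects lower $\gI_1$, giving $\gI_1^{\mathrm{sem}}\le\gI_1^{\mathrm{uni}}$.

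For masking versus semantic, we would average over $x_t$. The uncorrupted branch contributes $0$ under masking but a positive amount under the other kernels. The corrupted branch contributes $1-\|\pi\|^2$ under masking, which we must show is at most the averaged $1-\|q(\cdot\mid j)\|^2$. This is the step we expect to be the main obstacle. A noisy semantic token can be more informative than a mask, so in that branch the semantic posterior can be more concentrated than $\pi$, and pointwise domination fails. Our plan is to drop pointwise comparison and use the law of total expectation. Since $\mathbb E_j[q(\cdot\mid j)]=\pi$, we need to control $\mathbb E_j\|q(\cdot\mid j)\|^2$ against $\alpha_t+(1-\alpha_t)\|\pi\|^2$.

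We would first check whether the masking quantity equals $\mathbb E\|q\|^2$ for the "perfectly informative or uninformative" channel. If so, a data-processing/Blackwell argument may order the kernels, because masking is an erasure channel with the same erasure rate. If that argument fails, we would state the inequality under the mild assumption that $\tau_t$ is large enough for the corrupted semantic posterior to be no more concentrated than the erased one, in line with the schedule $\tau_t\to\infty$ as $t\to1$.

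Finally, we take expectations over $t$ and sum over positions, which preserves all the inequalities.
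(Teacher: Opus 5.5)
Your per-position Bayes setup, the exact masking computation, and the uniform support count match the paper's appendix. The right-hand $d_{\mathrm{act}}$ inequality is fine, with one minor slip: if $\pi(j)=0$ then $\lambda_j=0$, so the uniform posterior has support $\operatorname{supp}\pi$, not $\operatorname{supp}\pi\cup\{j\}$.

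The genuine gap is in the left-hand inequalities, semantic versus masking. You claim $d^{\mathrm{mask}}\le d^{\mathrm{sem}}$ pointwise once ``masked'' is coupled with ``corrupted''. This fails exactly on the corrupted branch. There the masking posterior is $\pi$, with $d=|S|-1$, while your own inclusion $\operatorname{supp}(r_j)\subseteq\operatorname{supp}\pi\cap\mathcal N(j)$ makes the semantic support smaller. This is the same phenomenon you flagged for $\mathcal I_1$, and under your local reading both left inequalities are false, not merely hard.

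Take $k_t=1$ with mutual nearest-neighbour pairs $\sigma$ (a fixed-point-free involution) and $\pi$ uniform on $S=\mathcal V$. Every semantic posterior is $(\alpha_t,1-\alpha_t)$ on $\{j,\sigma(j)\}$. Hence $d^{\mathrm{sem}}\equiv1<(1-\alpha_t)(|S|-1)=\mathbb E[d^{\mathrm{mask}}]$. Likewise $\mathbb E[\mathcal I_1^{\mathrm{sem}}]=2\alpha_t(1-\alpha_t)<(1-\alpha_t)(1-|S|^{-1})=\mathbb E[\mathcal I_1^{\mathrm{mask}}]$ whenever $2\alpha_t<1-|S|^{-1}$. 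Under the exact-support reading of the softmax kernel the $d_{\mathrm{act}}$ chain does hold, but only trivially, with $d^{\mathrm{sem}}=d^{\mathrm{uni}}=|S|-1$.

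The paper's appendix makes the same leap. It reads the ordering off $\{j\}\cap S\subseteq S_{\mathrm{sem}}(j)\subseteq S$, which compares only with the visible-token branch of masking. What is really needed is an explicit hypothesis, for example $\mathbb E_j|S_{\mathrm{sem}}(j)|-1\ge(1-\alpha_t)(|S|-1)$ for the (effective) candidate sets. Your fallback assumption for $\mathcal I_1$ plays that role, but it is essentially the conclusion restated.

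Blackwell cannot replace that assumption. Any garbling of the erasure channel satisfies $P(\cdot\mid k)\ge(1-\alpha_t)g$ for one fixed distribution $g$ and all $k$, and a local semantic kernel violates this. The Blackwell argument does, however, rigorously give $\mathbb E[\mathcal I_1^{\mathrm{mask}}]\le\mathbb E[\mathcal I_1^{\mathrm{uni}}]$, which the paper only asserts: uniform is erasure followed by ``replace $\mathtt{[MASK]}$ with a uniform token'', and $\|\cdot\|_2^2$ is convex.

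Your argument for $\mathcal I_1^{\mathrm{sem}}\le\mathcal I_1^{\mathrm{uni}}$ also rests on claims that fail in general:
\begin{itemize}
\item An exponential tilt toward $j$ can lower $\|r\|_2^2$. If $\pi$ puts $0.9$ on a token far from $j$ and $0.1$ on $j$'s neighbourhood, a relative tilt factor of $9$ gives $\|r\|_2^2\le\tfrac12<0.81\le\|\pi\|_2^2$.
\item The weight $\lambda_j=\alpha_t\pi(j)/\bigl(\alpha_t\pi(j)+(1-\alpha_t)\sum_k\pi(k)s(j\mid k)\bigr)$ is set by the corrupted inflow into $j$, not by $s(j\mid j)$. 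A large $s(j\mid j)$ or likely neighbours push that inflow above the uniform value $V^{-1}$ and so \emph{decrease} $\lambda_j$.
\item The map $\lambda\mapsto\|\lambda\delta_j+(1-\lambda)r\|_2^2$ is convex, not monotone; its slope at $\lambda=0$ is $2(r(j)-\|r\|_2^2)$, which can be negative.
\item Even a pointwise-in-$j$ comparison would not pass to the expectation, because $x_t$ has different marginals under the two kernels.
\end{itemize}

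The paper does not prove this step either. It posits locally flat posteriors over the candidate sets, so that $\mathcal I_1=1-1/m$ is monotone in the candidate count and the $\mathcal I_1$ chain reduces to the support chain. To turn your proposal into a proof, you need to state such modeling hypotheses explicitly rather than present the orderings as general facts.
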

where superscripts suggest diffusion variants. This suggests two complementary regimes. When compute or model capacity is limited, uniform diffusion is harder to optimize because it forces the model to fit dense posteriors at nearly every position (large $d_{\text{act}}$), and any local mismatch would lead to large generalization error (large $\gI_1$). Masking diffusion and semantic diffusion are easier to optimize as they have a more restricted posterior space and activate fewer logit directions. In the data-sufficient and compute-abundant regime, however, these additional active directions can become useful supervision:  masking diffusion receives little learning signal from visible tokens, whereas uniform diffusion keeps almost every position informative for training. This partially explains why uniform diffusion scales better but underperforms other variants in most scenarios.
\end{lightgrey}

\subsubsection{The Impact of Generators on Exposure Bias Propagation}

\paragraph{Exposure Bias.} \(\mathcal{B}_{\exp}
:=
\sum_{t=0}^{T} \gB_t\) measures the mismatch accumulated along the generation trajectory. Unlike the asymptotic bias in the approximation stage, exposure bias propagates through the reverse generator itself. We give the following proposition and leave the derivation in~\Cref{apdx:exposure_bias_proof}. 

\begin{lightgrey}
\begin{prop}[Exposure Bias Propagation]
\label{prop:exposure_bias}
With mild conditions, the propagation satisfies:
\begin{equation}
\gB_{t-dt}
\le
\eta_t \,\gB_t + \rho_t,
\label{eq:main-exp-bias-recursion}
\end{equation}
where \(\rho_t\) upper-bounds the step-wise error:
\( \rho_t \geq \sup_{x_t}
D_{\mathrm{KL}}\bigl(q_t(x_{t-dt}\mid x_t) \| p_t^\theta(x_{t-dt}\mid x_t)\bigr) \) and \(\eta_t^{\mathrm{KL}}\in[0,1]\) is the error propagation coefficient for forwarding \(q_t\) to \(q_{t-dt}\). In particular,
\begin{equation}
\eta_{t,\mathrm{mask}} \approx 1,
\quad
\eta_{t,\mathrm{uni}}
\le
1-\lambda_t^{\mathrm{uni}}<1,
\quad
\eta_{t,\mathrm{sem}}
\le
1-\lambda_t^{\mathrm{sem}}<1. \nonumber
\end{equation}
Consequently, for \textbf{semantic and uniform diffusion}, \(\rho_t\le \rho\) and \(\eta_t \le \eta<1\), then \(\mathcal{B}_{\exp} = \sum_{t}\gB_t = \mathcal{O}\!\left(\frac{T\,\rho}{1-\eta}\right)\)
; whereas for \textbf{masking diffusion} \(\eta_t \approx 1\) and \(\mathcal{B}_{\exp} = \mathcal{O}\!\left(T^2 \rho\right)\) with sampling step \(T\).
\end{prop}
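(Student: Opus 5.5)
The plan is to write one reverse step of both chains as the same kind of Markov operator acting on different inputs. Let $K_t^q(x_{t-dt}\mid x_t):=q(x_{t-dt}\mid x_t)$ be the true reverse kernel and $K_t^p:=p_t^\theta(x_{t-dt}\mid x_t)$ the learned one. Then $q_{t-dt}=K_t^q q_t$ and $\bar p_{t-dt}=K_t^p\bar p_t$, and we insert the intermediate distribution $K_t^q\bar p_t$. Because the KL divergence is not a metric, we cannot simply apply a triangle inequality. Instead we use the chain rule on the joint laws of $(x_t,x_{t-dt})$:
\[
D_{\mathrm{KL}}(q_{t-dt}\|\bar p_{t-dt})\le D_{\mathrm{KL}}(q_t\otimes K^q_t\,\|\,\bar p_t\otimes K^p_t)=D_{\mathrm{KL}}(q_t\|\bar p_t)+\mathbb E_{q_t}D_{\mathrm{KL}}(K^q_t\|K^p_t).
\]
The second term is bounded by $\rho_t$ by definition. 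This crude data-processing bound already gives $\gB_{t-dt}\le\gB_t+\rho_t$, i.e.\ $\eta_t\le1$, for every kernel. It therefore settles the masking case: the absorbing reverse kernel only unmasks and never alters a committed token, so no contraction is available and $\eta_{t,\mathrm{mask}}\approx1$.

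To get $\eta_t<1$ for the uniform and semantic kernels, we sharpen the first term with a strong data-processing inequality. A naive $(1+\epsilon)$-splitting would inflate the step error, so we treat the "mild conditions" as allowing a controlled remainder. The goal is an estimate of the form
\[
\gB_{t-dt}\le \eta^{\mathrm{KL}}(K^p_t)\,\gB_t+\rho_t+o(dt),
\]
where $\eta^{\mathrm{KL}}(K)$ is the KL contraction coefficient of the kernel. The $o(dt)$ term comes from the convexity/chain-rule argument applied to the mixture decomposition below, with cross terms of order $dt^2$. We then bound the contraction coefficient by the Dobrushin coefficient, $\eta^{\mathrm{KL}}(K)\le 1-\min_{x,x'}\sum_y\min(K(y\mid x),K(y\mid x'))$, which is the standard bound $\eta^{\mathrm{KL}}\le\eta^{\mathrm{TV}}$.

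Next we compute overlap lower bounds $\lambda_t$ from the $x$-prediction form in \Cref{eq:x-prediction}. For the uniform kernel \Cref{eq:uniform_kernel}, the posterior $q(x_{t-dt}\mid x_t,x_0)$ places mass proportional to $-\alpha_t' dt/V$ (times the appropriate normalization) on every token, independently of $x_t$. Hence every row of $K_t^p$ contains a common component of weight $\lambda_t^{\mathrm{uni}}\gtrsim c\,|\alpha_t'|dt$, which gives $\eta_{t,\mathrm{uni}}\le1-\lambda_t^{\mathrm{uni}}$. For the semantic kernel \Cref{eq:semantic-proposal}, the softmax with finite $\tau_t$ has full support. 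This gives $s_t^{\mathrm{sem}}(j\mid i)\ge \exp(-\tau_t^{-1}\Delta)/V$, where $\Delta$ is the range of $\mathrm{sim}$, and the same argument yields $\lambda_t^{\mathrm{sem}}\gtrsim |\alpha_t'|dt\,e^{-\Delta/\tau_t}$, which is positive but smaller. For masking, by contrast, the reverse rows from inputs with different unmasked tokens have disjoint supports, so the Dobrushin overlap is zero and $\eta\approx1$. On a product space of length $L$ these coefficients are applied per position under the factorized parameterization. The tensorization of Dobrushin coefficients over positions is the technical point to address here.

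Finally we unroll the recursion. If $\eta_t\le\eta<1$ and $\rho_t\le\rho$, induction from $\gB_1=D_{\mathrm{KL}}(q_1\|p_1)=0$ (matched reference) gives $\gB_t\le\rho\sum_k\eta^k\le\rho/(1-\eta)$, and summing over the $T$ steps gives $\mathcal B_{\exp}=\mathcal O(T\rho/(1-\eta))$. If $\eta\approx1$, then $\gB_{t_k}\le k\rho$, and the sum is $\mathcal O(T^2\rho)$.

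The main obstacle is the second step: combining a contraction of the bias with an additive step error in KL without losing constants. If this does not close, we will work in total variation, where the triangle inequality holds, derive the recursion there, and convert back to KL via Pinsker and reverse-Pinsker under a lower bound on the marginals. Such a lower bound is available for the full-support uniform and semantic kernels.
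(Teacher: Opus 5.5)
The gap is the step you already flag as the main obstacle. Under the hypothesis you adopt, it does not close: from $\rho_t\ge\sup_{x_t}D_{\mathrm{KL}}\bigl(K^q_t(\cdot\mid x_t)\,\|\,K^p_t(\cdot\mid x_t)\bigr)$ alone, you cannot get a contraction factor and an additive KL step error at the same time.

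The paper's proof reads the ``mild conditions'' as a strictly stronger, pointwise likelihood-ratio bound: $\rho_t\ge\sup_{x,y}\log\bigl(K^q_t(y\mid x)/K^p_t(y\mid x)\bigr)$, i.e.\ $K^q_t\le e^{\rho_t}K^p_t$ entrywise. Averaging over $\bar p_t$ gives $(K^q_t\bar p_t)(y)\le e^{\rho_t}(K^p_t\bar p_t)(y)$ for every $y$. Hence $D_{\mathrm{KL}}(K^q_tq_t\|K^p_t\bar p_t)\le D_{\mathrm{KL}}(K^q_tq_t\|K^q_t\bar p_t)+\rho_t\le\eta(K^q_t)\,\gB_t+\rho_t$. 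You had already introduced the right intermediate distribution $K^q_t\bar p_t$. What replaces the missing triangle inequality is the one-sided bound $D_{\mathrm{KL}}(a\|c)\le D_{\mathrm{KL}}(a\|b)+\sup_y\log\bigl(b(y)/c(y)\bigr)$.

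This route also forces the contraction coefficient to be that of the \emph{true} reverse kernel $K^q_t$ (the paper's $Q_t$). The paper minorizes it by Bayes' rule, $Q^{\mathrm{uni}}_t(y\mid j)\ge\frac{1-a_t}{Vq_t(j)}q_{t-1}(y)$. In the semantic case it uses the global-jump floor $s^{\mathrm{sem}}_t(j\mid y)\ge\alpha^{\mathrm{sem}}_t\nu_t(j)$, which also covers the KNN kernel, where your softmax floor is unavailable. Contracting with the learned kernel $K^p_t$ instead would require replacing the \emph{first} argument of the KL. That leaves a cross term weighted by the unbounded log-ratio $\log\bigl((K^p_tq_t)/(K^p_t\bar p_t)\bigr)$, which even entrywise domination does not control.

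Neither of your patches repairs the sup-KL version. Write $a=K^q_tq_t$, $b=K^q_t\bar p_t$, $c=K^p_t\bar p_t$. Then exactly $D_{\mathrm{KL}}(a\|c)=D_{\mathrm{KL}}(a\|b)+D_{\mathrm{KL}}(b\|c)+\sum_y(a-b)(y)\log\frac{b(y)}{c(y)}$. With $K^q_t=I+dt\,G$ and $K^p_t=I+dt\,G'$, the weight $a-b\approx q_t-\bar p_t$ is of order one, and the log-ratio is $\approx dt\,(\bar p_t(G-G'))(y)/\bar p_t(y)$. So the cross term is first order in $dt$, not $O(dt^2)$. That is the same order as the contraction gain $(1-\eta_t)\gB_t$ you want to keep. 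Moreover, for fixed $q_t\neq\bar p_t$ the cross term is linear in the rate error $G-G'$, while the row-wise KL defining $\rho_t$ is quadratic in small relative rate errors. An aligned model error therefore makes it positive and much larger than $\rho_t$, so your claimed $o(dt)$ remainder is unfounded.

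The TV fallback gives a recursion only for $\|q_t-\bar p_t\|_{\mathrm{TV}}$, with step error $\sqrt{\rho_t/2}$. Going back to KL by reverse Pinsker needs a lower bound on the model roll-out $\bar p_t$, not on $q_t$, and on $\mathcal V^L$ that bound is at most $V^{-L}$. This does not produce the stated recursion.

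Your chain-rule bound $\eta\le 1$, the masking argument, and the unrolling agree with the paper. One caveat applies to both your instantiation and the paper's. If the minorization mass is $\lambda_t\asymp dt$, as you compute (the paper's $1-a_t$ is likewise a one-step corruption probability), then $1/(1-\eta)\asymp T$. In that case $\mathcal O(T\rho/(1-\eta))$ is no better than $\mathcal O(T^2\rho)$ unless $\eta<1$ is assumed uniformly in $T$.
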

\end{lightgrey}
\textbf{Implication.} The coefficient \(\eta_t\) measures how strongly sampling errors propagate across reverse steps. For masking diffusion, \(\eta_t\approx 1\), so early errors will accumulate quadratically w.r.t \(T\) and can become ``early commitments'' that later steps cannot easily repair. Remasking can mitigate this but cannot completely fix the issue. By contrast, uniform diffusion and semantic diffusion have \(\eta_t<1\), so the reverse dynamics can contract accumulated error and provide an intrinsic repair mechanism, which leads to linear error accumulation w.r.t \(T\).

\subsubsection{The Impact of Generators on Variance}
\label{sec:gradient-noise-heterogeneity}

Then we explain how the transition kernel affects optimization variance. 
Let \(\gL_{x_t}(\theta):=-\log p_\theta(x_0\mid x_t)\) denote the per-sample denoising likelihood and \(\gL_t(\theta):=\mathbb E_{x_t}[\gL_{x_t}(\theta)]\) the per-step loss. Let \(\theta^\star\) minimize \(\gL(\theta)=\mathbb E_t[\gL_t(\theta)]\) so that \(\mathbb E_t[\nabla_\theta \gL_t(\theta^\star)]=0\). We look into the gradient variance as a surrogate for the total variance (Details in Appendix~\ref{apdx:gradient_variance}). At \(\theta^\star\), the law of total variance yields
\begin{equation}
\begin{aligned}
\mathrm{Var}_{t,x_t}\big(\nabla_\theta \gL_{x_t}(\theta^\star)\big)
=
\underbrace{
\mathbb E_t\left[\mathrm{Var}_{x_t}\big(\nabla_\theta \gL_{x_t}(\theta^\star)\big)\right]
}_{\text{within-step noise}}
+
\underbrace{
\mathrm{Var}_t\left(\nabla_\theta \gL_t(\theta^\star)\right)
}_{\text{between-step heterogeneity}}.
\label{eq:total-variance-grad}
\end{aligned}
\end{equation}
The first term is the usual gradient noise within a fixed step $t$. The second term measures how the optimization tasks vary across diffusion times. This term is shaped by the transition kernel because \(\nabla_\theta \gL_t(\theta^\star)\) depends on \(q_t(x_t\mid x_0)\) through the corrupted input \(x_t\) and the induced posterior target \(q(x_0\mid x_t)\). Therefore, kernels with more dispersed or more time-varying corruption patterns can induce larger between-t heterogeneity (such as a uniform). We prove this connection in Appendix~\ref{apdx:gradient_variance}, and \cref{fig:variance_vs_dispersion} empirically validates the correlation between gradient variance and the transition kernel dispersion across time \(t\). This suggests that a dispersed kernel will induce training instability.
\begin{lightgrey}
\textbf{Take-home Message for~\Cref{sec:error_analysis}.}
The analysis in this section reveals a fundamental trade-off in the transition kernel design for DLMs. \textit{Masking diffusion} is easy to train because the absorbing kernel induces sparse denoising posteriors, but it lacks intrinsic repair and can suffer from sampling error propagation. \textit{Uniform diffusion} repairs such errors through global transition, but its dense and dispersed posteriors make training harder and noisier. \textit{Semantic diffusion} is therefore a natural middle ground: semantic locality can reduce approximation difficulty, while global transition can preserve repair ability.
\end{lightgrey}

\section{Semantic Diffusion: Towards Bias-Variance Minimization in DLMs}
\label{sec:semantic_diffusion}

\subsection{Revisiting Semantic Diffusion}
\label{sec:desired_properties}

The analysis in Sec.~\ref{sec:error_analysis} suggests that semantic diffusion has the potential to mitigate the approximation difficulty of uniform diffusion, and can avoid error accumulation in sampling. However, prior studies report that semantic diffusion often underperforms alternatives such as uniform or marginal diffusion~\citep{austin2021structured}. Although a concurrent work~\citep{DBLP:journals/corr/abs-2603-21342} reports extremely low test perplexity with a semantic kernel, it does not evaluate unconditional generation. In our experiments, when testing the generation ability of semantic diffusion, we consistently observe a semantic basin problem.
\begin{lightgrey}
    \begin{definition}[Semantic Basin]
        We define a semantic basin as a local semantic region in which the reverse chain repeatedly samples semantically adjacent tokens, producing locally plausible but low-diversity text. An example is shown in~\cref{table:semantic Basin}.
    \end{definition} 
\end{lightgrey}

\begin{table}[htbp]
    \centering
    \caption{An example of text generation that suffers from semantic basin.}
    \label{tab:generated_text}
    \newcolumntype{L}[1]{>{\raggedright\arraybackslash}p{#1}}
    
    \begin{tabular}{@{} l L{0.75\textwidth} @{}}
        \toprule
        \textbf{Model} & \textbf{Generated Text} \\
        \midrule
        Semantic basin & 
         [CLS] Because bone was being made up, bone bone bone completely destroyed and dies in it. dies in it. dna … dies in it … the bone cell was not so “ dying ” “ the bone cell ” dies the bone marrow decays. \\
        Ours & 
        [CLS] the main aim is to increase the number of people who use the internet and reduce the amount of competition.  \\
        \bottomrule
    \end{tabular}
    \label{table:semantic Basin}
\end{table}

\paragraph{Why does semantic basin occur?}
Consider a token position \(i\), and let \(\hat x_t\) denote the current state during sampling. Ideally, the denoiser should approximate the clean posterior \(q(x_0^i=k\mid \hat x_t)\). By Bayes' rule, the posterior logit can be decomposed as
\begin{equation}
l_i^\star(k;\hat x_t)
=
\underbrace{
\log p_{\mathrm{data}}(x_0^i=k\mid \hat x_t^{-i})
}_{\text{contextual prior}}
+
\underbrace{
\log q_t(\hat x_t^i\mid x_0^i=k,\hat x_t^{-i})
}_{\text{local forward likelihood}}
+
\mathrm{const}.
\label{eq:ideal_logit_decomp}
\end{equation}
During sampling, however, \(\hat x_t\) is generated by the model itself rather than drawn from the true forward marginal. Therefore, the denoiser is evaluated on rollout states that may already contain accumulated errors. We write the actual sampling logit as
\begin{equation}
l_{\theta,i}(k;\hat x_t)
=
l_i^\star(k;\hat x_t)
+
\Delta_{\mathrm{roll},i}(k;\hat x_t)
+
\epsilon_i(k)
\label{eq:sampling_logit_decomp}
\end{equation}
where \(\Delta_{\mathrm{roll},i}\) denotes the logit bias induced by the current sampling trajectory. Let \(n_i^{(W)}(l)\) be the count of token \(l\) in a recent window \(W\), and let \(A(k,l)\) denote the contextual contribution of token \(l\) to the logit of token \(k\). In~\cref{apdx:semantic_basin_analysis}, we show that the rollout bias can be locally approximated as
\(\Delta_{\mathrm{roll},i}(k;\hat x_t)
\approx
\sum_{l\in V} A(k,l)n_i^{(W)}(l)\).
If \(A(k,l)>0\) for semantically related tokens \(k,l\in C\), then over-producing tokens from cluster \(C\) increases the logits of other tokens in the same cluster:
\[
n_i^{(W)}(l\in C)\uparrow
\Rightarrow
\Delta_{\mathrm{roll},i}(k)\uparrow \text{ for } k\in C
\Rightarrow
p_\theta(x_0^i\in C\mid \hat x_t)\uparrow.
\] 
Meanwhile, in a semantic diffusion kernel, the local likelihood \(q_t(j\mid k)\) is large when \(k\) is semantically close to \(j\). Thus, the likelihood term in~\Cref{eq:ideal_logit_decomp} also favors tokens within the same semantic cluster. These two effects reinforce each other and create a positive feedback loop, which drives the reverse chain into a semantic basin. We provide a formal analysis in~\cref{apdx:semantic_basin_analysis}.

\subsection{From SemDLM to SemDLM+: Mitigating Semantic Basins.}

The above analysis suggests that semantic basins can be mitigated by weakening this positive feedback. We use two complementary mechanisms to construct a negative feedback:

\textbf{(a) Global jumping in the forward kernel.}
Instead of using a purely local semantic transition, we add a global jumping component:
\begin{equation}
q_t(j\mid k,c)
=
\alpha_t\delta_{kj}
+
\textcolor{blue}{\beta_t\nu_t(j)}
+
(1-\alpha_t-\beta_t)s_t^{\mathrm{sem}}(j\mid k,c).
\label{eq:global_jump_kernel}
\end{equation}
The global transition \(\beta_t\nu_t(j)\) provides a mixing channel, preventing the model from absorbing into the local semantic cluster. The semantic kernel follows the KNN version as in~\Cref{eq:knn_kernel}, with $s_t^{\mathrm{sem}}(j\mid i)=(k_t)^{-1}\mathbb I\bigl(j\in \mathcal N_{k_t}(i)\bigr)$ where the number of semantic neighborhoods $k_t$ increases over time. Specifically, we set $\alpha_t=1-t$, $\beta_t=t^2$ and $k_t=1+\left(k_{\max }-1\right) t^\gamma$ with a hyperparameter $\gamma$.

\textbf{(b) Semantic-frequency Penalty.}
We further counteract rollout-induced positive feedback during sampling. Ideally, one would subtract the positive feedback term
\(\sum_{l\in V} [A(k,l)]_+ n_i^{(W)}(l)\).
However, \(A(k,l)\) is model- and context-dependent and is generally unavailable during sampling. We therefore use semantic similarity scores \(S_+(k,l)\ge 0\) as a practical surrogate, and define \(m_i^{(W)}(k)=\sum_{l\in V} S_+(k,l)n_i^{(W)}(l)\) as a semantic-frequency regularizer in sampling.

Taken together with a frequency penalty, we then apply a correction on the logits:
\begin{equation}
\tilde l_{\theta,i}(k)
=
l_{\theta,i}(k)
-
\textcolor{blue}{\lambda_{\mathrm{freq}}
\log\left(1+n_i^{(W)}(k)\right)}
-
\textcolor{blue}{\lambda_{\mathrm{sem}}
\log\left(1+m_i^{(W)}(k)\right)}.
\label{eq:hybrid_penalty_logit_main}
\end{equation}
The first penalty suppresses exact token over-production, while the second suppresses over-production of the semantic neighborhood of \(k\). This correction introduces a negative feedback mechanism that counteracts the rollout-induced positive feedback which leads semantic basins. Finally, we show in~\cref{apdx:omit_proofs} that SemDLM+ preserves the desirable properties of semantic diffusion: it reduces posterior approximation difficulty while retaining sampling-side repair ability.

\section{Experiments}

We now evaluate SemDLM+'s ability in both language modeling and text generation. We first outline the experimental setup in \Cref{sec: exp_setup}, followed by the results on language modeling capacity comparison and generation quality comparison in~\Cref{sec:main_results}. Next, we conduct behavior analysis in~\Cref{sec:aba_study} to understand how the designs of SemDLM+ affect training and sampling dynamics.

\subsection{Experiment settings}
\label{sec: exp_setup}


\begin{figure*}[t]
  \centering

  \begin{minipage}[t]{0.35\textwidth}
    \vspace{0pt}
    \centering
    \includegraphics[width=\linewidth]{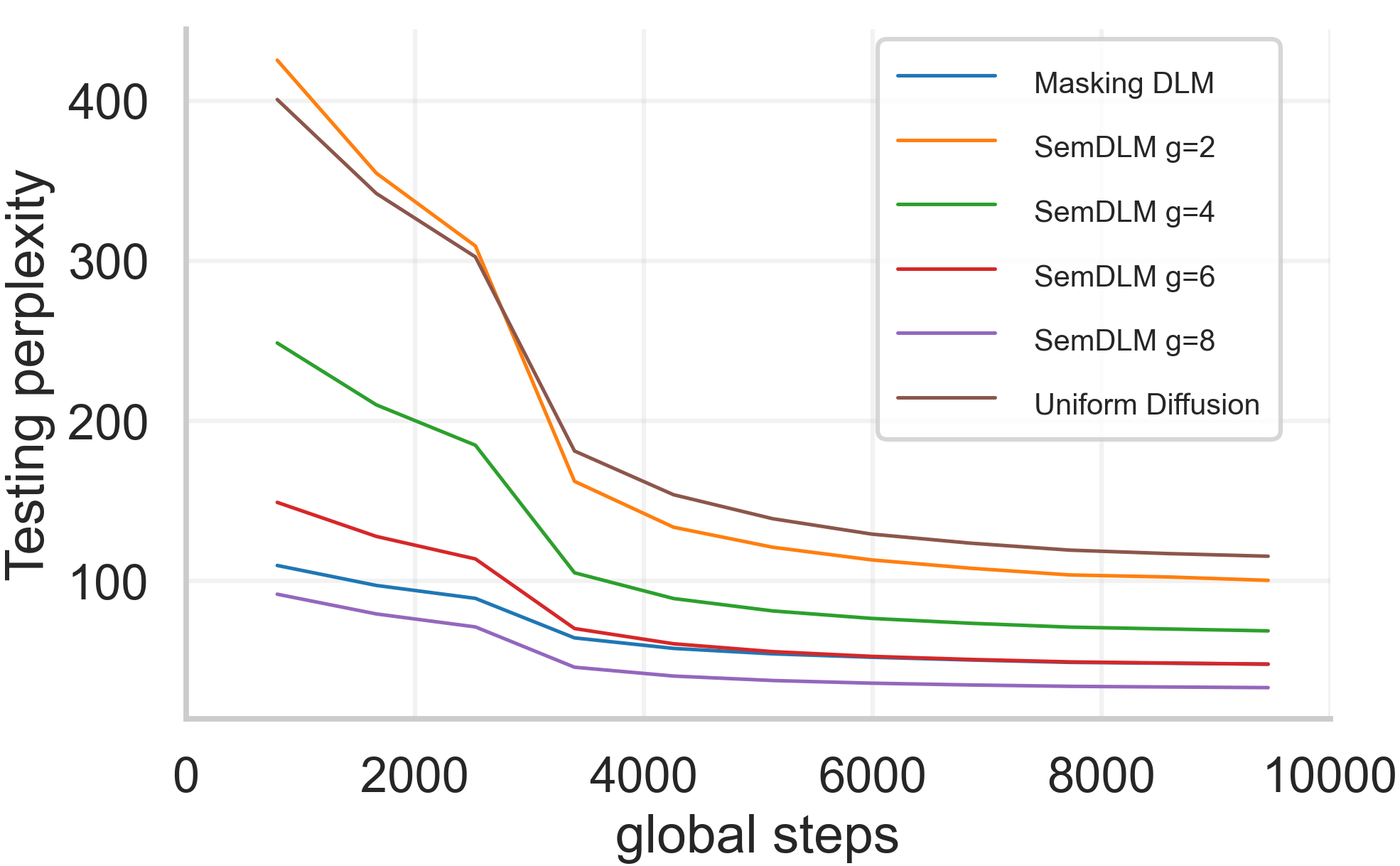}
    \captionof{figure}{Testing perplexity with varying $\gamma$ for model training in LM1B (15k steps).}
    \label{fig:test_ppl}
  \end{minipage}
  \hfill
  \begin{minipage}[t]{0.35\textwidth}
    \vspace{0pt}
    \centering
    \includegraphics[width=\linewidth]{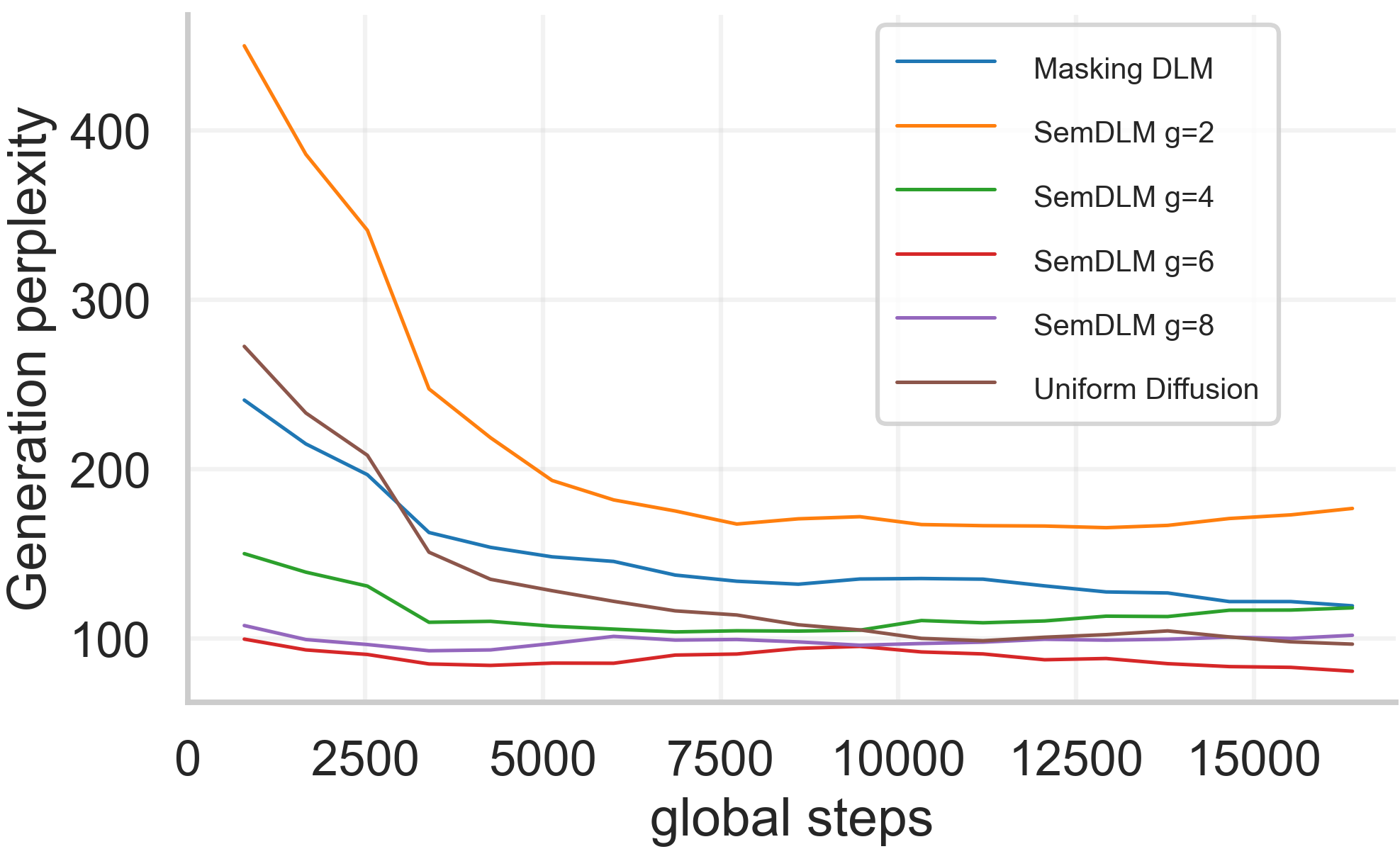}
    \captionof{figure}{Generation perplexity with varying $\gamma$ for model training in LM1B (15k steps).}
    \label{fig:gen_ppl}
  \end{minipage}
  \hfill
  \begin{minipage}[t]{0.28\textwidth}
    \vspace{0pt}
    \centering
    \resizebox{\linewidth}{!}{
      \begin{tabular}{lc}
      \toprule
      {Model} & {PPL \(\downarrow\)} \\
      \midrule
      SemDLM (KNN)~\citep{DBLP:conf/nips/AustinJHTB21} & 149.50 \\
      Uniform DLM~\citep{DBLP:conf/nips/AustinJHTB21} & 88.03 \\
      D3PM (Masking)~\citep{DBLP:conf/nips/AustinJHTB21} & 82.34 \\
      Masking DLM~\citep{sahoo2024simple} & 31.78 \\
      SEDD (uniform)~\citep{DBLP:conf/icml/LouME24} & 40.68  \\
      SEDD (Masking)~\citep{DBLP:conf/icml/LouME24} & 32.68  \\
      GIDD~\citep{vonruette2025scalingbehaviordiscretediffusion} & 34.34 \\
      \rowcolor{Gray} SemDLM+ (Ours) & \textbf{27.19} \\
      \bottomrule
      \end{tabular}
    }
    \captionof{table}{Test perplexities (PPL$\downarrow$) comparison for models trained on LM1B.}
    \label{tab:likelihood}
  \end{minipage}
\end{figure*}

\begin{table*}[!t]
\caption{Generation perplexities (PPL$\downarrow$) of models trained on LM1B and OWT. The generation length for LM1B is 128 and OWT is 1024. We believe scaling sampling steps infinitely is problematic thus constrained the sampling steps to \{128, 256, 512\} for LM1B and \{128, 256, 512, 1024\} for OWT. We conducted 3 tests and results are upper bound, and those with $^\dagger$ are reproduced by us and with $^\ddagger$ are taken from the respective paper.}
\label{tab:generation_ppl_lm1b}
\centering
\resizebox{\linewidth}{!}{
    \begin{tabular}{lcccccccc}
    \toprule
    {Model/ Sampling Steps}& \multicolumn{3}{c}{LM1B}& \multicolumn{4}{c}{Openwebtext}\\
    \cmidrule(lr){2-4} \cmidrule(lr){5-8}
     &128&256&512 &128&256&512&1024 \\
    \midrule
    D3PM (Uniform)~\citep{DBLP:conf/nips/AustinJHTB21} & /&150.37 & 133.92&407.72& 146.89& 112.86 & 96.46\\
    MDLM (no remasking)~\citep{sahoo2024simple} &93.69$^\dagger$& 85.24$^\dagger$&78.02$^\dagger$& 76.50$^\dagger$&64.20$^\dagger$& \underline{50.70}$^\dagger$& 43.00$^\dagger$ \\
    ReMDM (MDLM+remask)~\citep{wang2025remasking} & $91.19^\dagger$ & $85.21^\dagger$ & $75.98^\dagger$ & $\underline{72.61}^\dagger$ & $\textbf{62.43}^\dagger$ & $50.75^\dagger$ & $\underline{41.25}^\dagger$ \\
    SEDD (uniform)~\citep{DBLP:conf/icml/LouME24}& $117.33 ^\dagger$ & $104.51^\dagger$ & $106.06 ^\dagger$ & 153.01&150.21&163.97&52.00$^\ddagger$  \\
    SEDD (Masking)~\citep{DBLP:conf/icml/LouME24} &$\underline{74.48}^\dagger$ &$\underline{65.63}^\dagger$ &$\underline{70.74}^\dagger$ &80.51$^\dagger$&69.93$^\dagger$& 68.97$^\dagger$& 46.8$^\ddagger$ \\
    GIDD ($a=1, b=-2$)~\citep{vonruette2025scalingbehaviordiscretediffusion}   &$85.09^\dagger$ &$92.44^\dagger$ &$89.76^\dagger$ &345.53$^\dagger$&245.71$^\dagger$&106.18$^\dagger$&94.92$^\dagger$\\
    \rowcolor{Gray} SemDLM+ (Ours)&\textbf{74.52} & \textbf{54.08}& \textbf{36.60} &\textbf{64.39}&\underline{62.72}&\textbf{41.91}&\textbf{34.37}  \\
    \bottomrule
    \end{tabular}
}
\end{table*}

\paragraph{Training Setup.} Following~\citet{arriola2025block}, we train two variants of SemDLM+ on the One Billion Words dataset (LM1B~\citep{chelba2014billion}) and OpenWebText (OWT~\citep{Gokaslan2019OpenWeb}) at a model scale of 0.1B. Models trained on both datasets use the \emph{bert-base-uncased} tokenizer. For both datasets, we set up the maximum training steps to be 200K global steps. We fixed the context length to be 128 for LM1B and 1024 for OWT. For the results that we reproduced, we utilize a change-aware loss, which we found helpful in stabilizing training. We refer to Appendix~\ref{sec:knn_approximation} for a detailed model setup and algorithm implementation details. 

\textbf{Generation Setup.} We follow semi-autoregressive sampler. Each stride is initialized from the global transition prior, and each denoising step applies Continuous-time Markov Chain
sampling (CTMC) sampler with tau-leaping and predictor-corrector~\cite{DBLP:conf/nips/CampbellBBRDD22}. We vary the sampling steps ranging $\{128, 256, 512\}$ for LM1B and $\{128, 256, 512, 1024\}$ for OWT, and set the number of strides as 2. 

\textbf{Evaluation Setup.} Given the issue of semantic basin, perplexity (PPL) and entropy together serve as the evaluation metric for language modelling and generation ability. Testing PPL is measured on texts produced by the model-native sampler and Generation PPL is measured by GPT-2-Large.

\begin{figure*}[t]
  \centering

  \begin{minipage}[t]{0.36\textwidth}
    \vspace{0pt}
    \centering
\includegraphics[width=\linewidth]{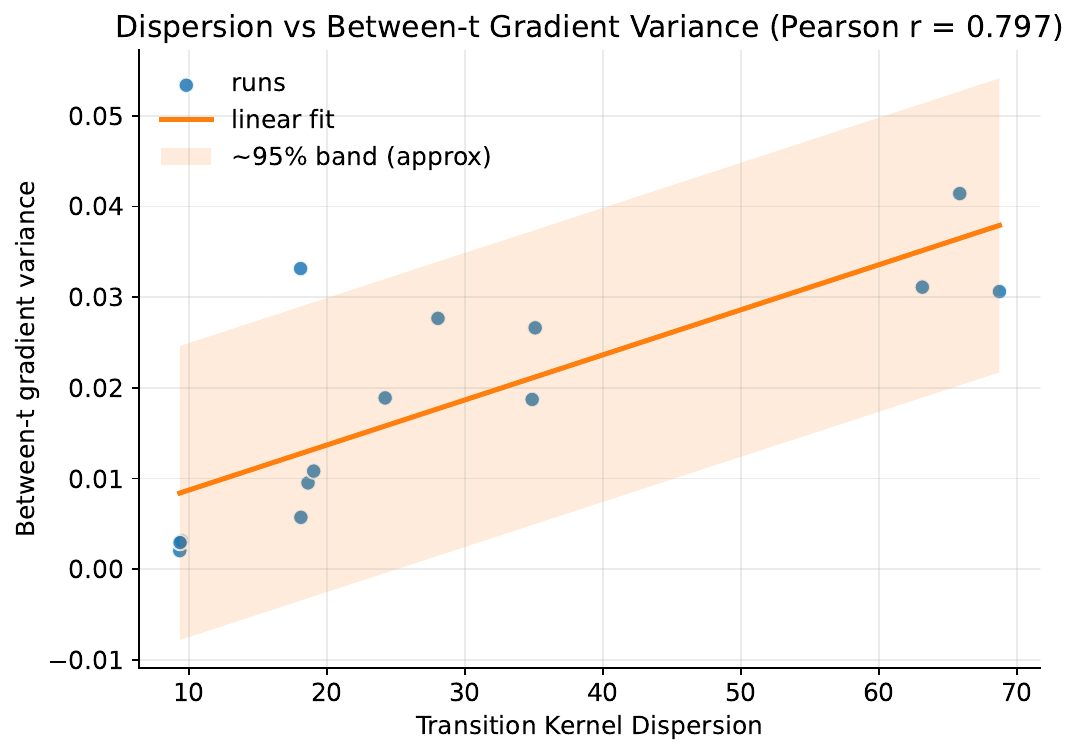}
  \caption{The high correlation of transition kernel dispersion and gradient variance.}
  \label{fig:variance_vs_dispersion}
  \end{minipage}
  \hfill
  \begin{minipage}[t]{0.30\textwidth}
    \vspace{0pt}
    \centering
    \includegraphics[width=\linewidth]{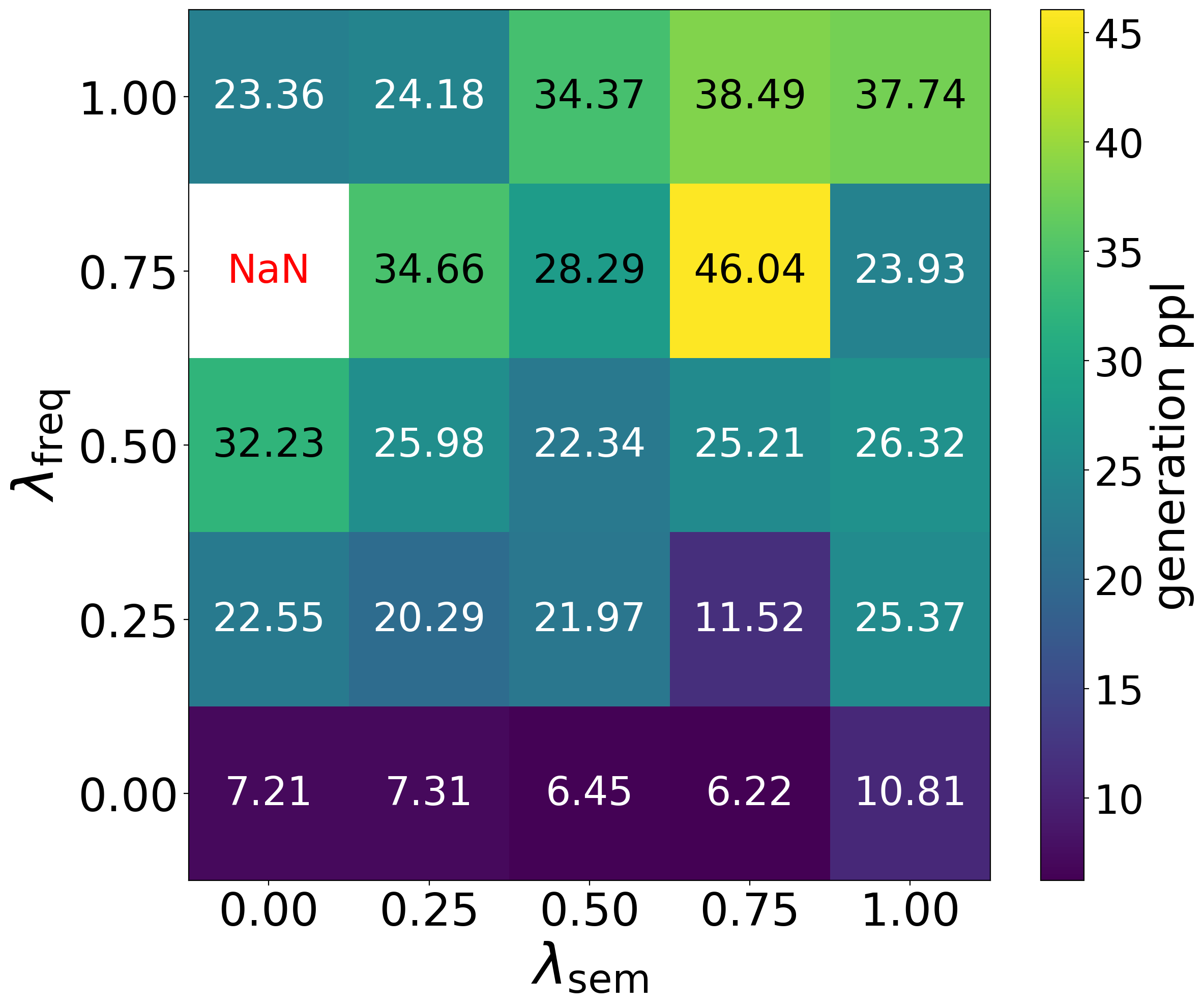}
    \captionof{figure}{Generation Perplexity with varying $\lambda_{\text{freq}}$ and $\lambda_{\text{sem}}$. SemDLM+ trained on OWT.}
    \label{fig:gen_ppl_lambda}
  \end{minipage}
  \hfill
  \begin{minipage}[t]{0.30\textwidth}
    \vspace{0pt}
    \centering
    \includegraphics[width=\linewidth]{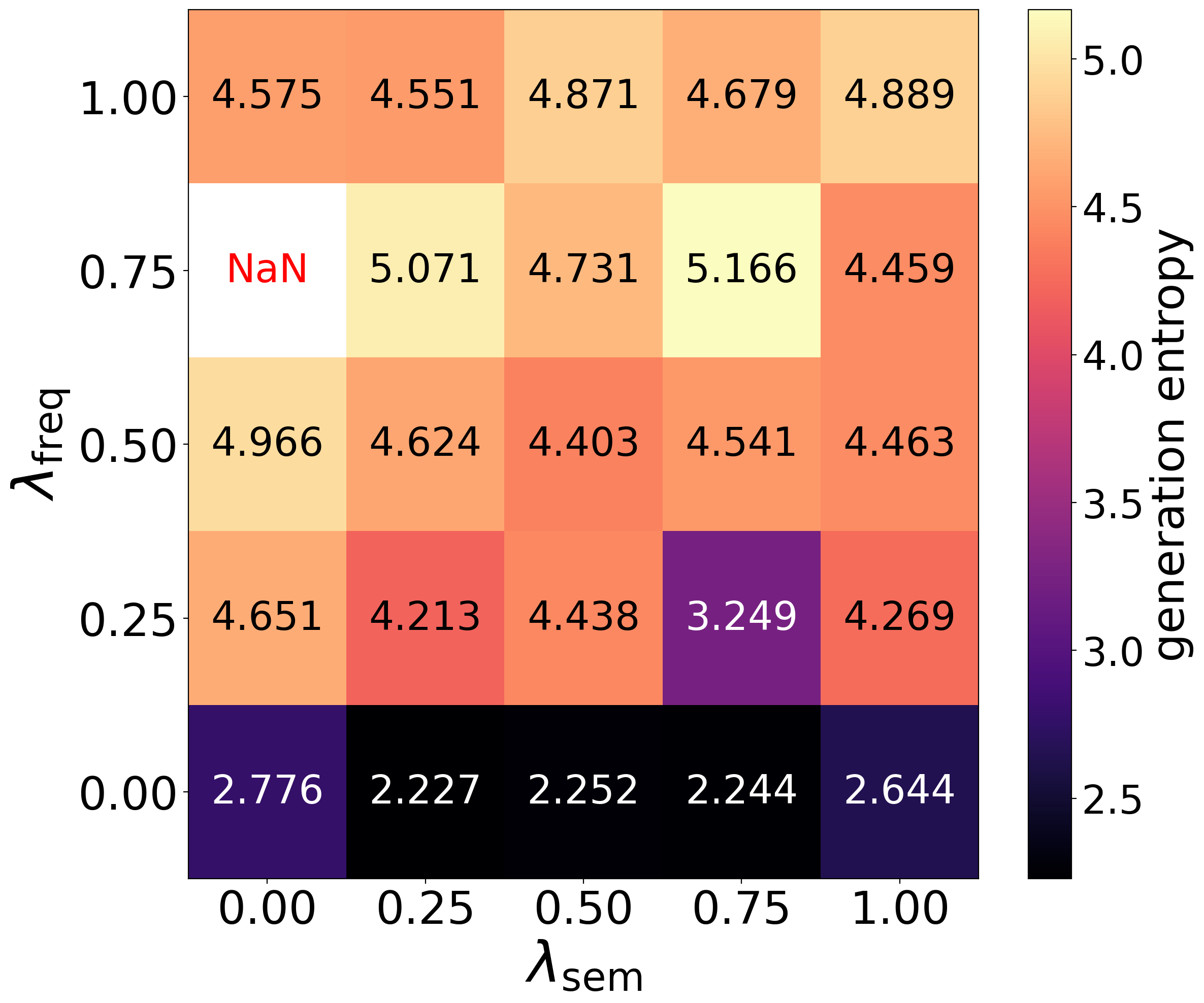}
    \captionof{figure}{Generation entropy with varying $\lambda_{\text{freq}}$ and $\lambda_{\text{sem}}$. SemDLM+ trained on OWT.}
    \label{fig:gen_entropy_lambda}
  \end{minipage}
\end{figure*}

\subsection{Main Results}
\label{sec:main_results}

\paragraph{Likelihood Evaluation.} We first evaluate our model for language modelling based on likelihood metric by reporting perplexities on the test split of LM1B. We only care about the choice of transition kernel, thus omitting the architecture designs like block diffusion~\citep{arriola2025block}. \Cref{tab:likelihood} summarizes the language modeling performance. SemDLM+ achieves a test perplexity that outperforms other competitors with different noising processes, including uniform, masking, and mixture (GIDD). This improvement empirically validates our theoretical analysis in~\Cref{sec:error_analysis}: by constraining the transition kernel via semantic neighborhoods, SemDLM+ effectively reduces the asymptotic bias inherent in uniform diffusion while avoiding the high variance in optimization induced by large vocabulary spaces. Furthermore,~\Cref{fig:test_ppl} demonstrates that SemDLM+ converges significantly faster than Uniform Diffusion, confirming the variance reduction property of our noising kernel design.

\paragraph{Generation Quality Evaluation.} 
We evaluate generation quality on LM1B and OWT by varying the number of sampling steps in~\cref{tab:generation_ppl_lm1b}. Since semantic basin can produce deceptively low PPL with low diversity, we set up a threshold for entropy ($\geq 4.8$) and only report results above. On LM1B and OWT, SemDLM+ consistently outperforms all baselines across sampling budgets, showing that semantic neighborhoods and global jumping kernel provide an effective transition kernel.

\subsection{Behavior Analysis}
\label{sec:aba_study}

\textbf{The Impacts of Generator Dispersion}. In our theoretical results in~\Cref{sec:asy_bias} and \ref{sec:gradient-noise-heterogeneity}, we show that reducing the dispersion of generator $q_t$ can effectively reduce approximation difficulty and stabilize training. In SemDLM+, this is reflected in the choice of $\gamma$. Large $\gamma$ induces a more complex optimization landscape and lower dispersion. Thus, we systematically analyze the impact of $\gamma$ in~\cref{fig:test_ppl} and~\ref{fig:gen_ppl} and validate this hypothesis. With a larger $\gamma$, the training convergence significantly improves. However, with $\gamma$ overly enlarged, the generation quality degrades, as the transition is constrained in the semantic basin without any escape, thus becoming harmful for the sampling.

\paragraph{Sampling Error Accumulation.}
\begin{wrapfigure}{r}{0.30\linewidth}
  \vspace{-1.2em}
  \centering
  \includegraphics[width=\linewidth]{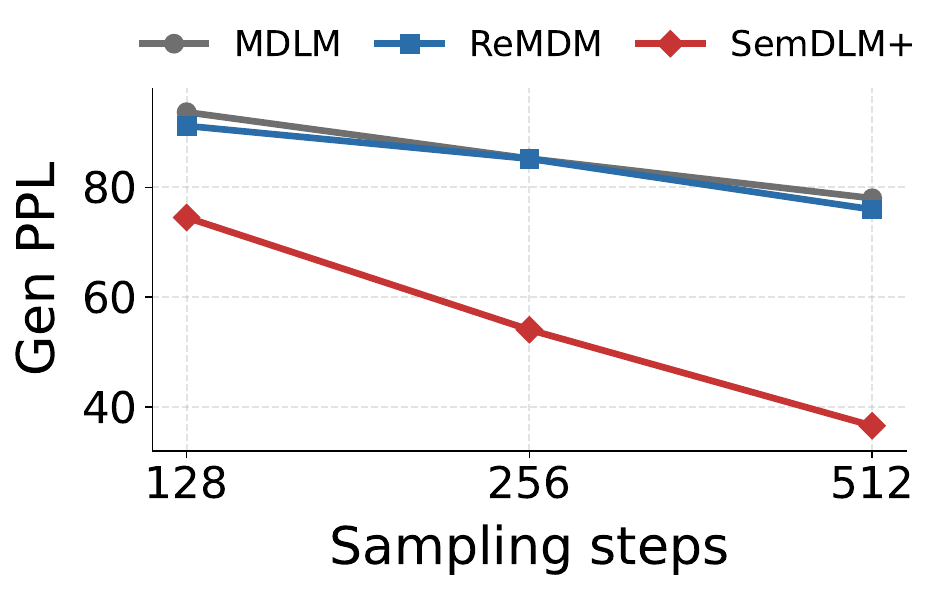}
  \caption{Gen-PPL on LM1B as sampling steps increase.}
  \label{fig:repair_curve_lm1b_wrapped}
  \vspace{-1.0em}
\end{wrapfigure}
The step-scaling results in~\cref{tab:generation_ppl_lm1b} and~\cref{fig:repair_curve_lm1b_wrapped} show how kernels handle accumulated sampling errors. SemDLM+ benefits more clearly from additional reverse steps than MDLM variants: its PPL drops from 74.52 to 36.60 on LM1B and from 64.39 to 34.37 in OWT. This suggests that SemDLM+ successfully preserves the repairing mechanism in uniform DLMs.

\paragraph{The Impacts of Semantic Frequency Penalty.}

\cref{fig:gen_ppl_lambda} and~\ref{fig:gen_entropy_lambda} further illustrate the role of the semantic frequency penalties. We can observe an extremely low generation ppl and entropy when \(\lambda_{\mathrm{freq}}=0\), which coincides with the results in~\cite{DBLP:journals/corr/abs-2603-21342} but this actually indicates a collapsed low-diversity regime rather than genuinely strong generation. Increasing \(\lambda_{\mathrm{freq}}\) raises entropy and prevents exact-token repetition, while \(\lambda_{\mathrm{sem}}\) further suppresses repetition within a semantic neighborhood. However, overly strong penalties can over-flatten the clean posterior and worsen PPL. This shows that these penalties trade off likelihood and diversity by counteracting the semantic basin.

\section{Discussions and Limitations}

In this work, we establish an error analysis framework for understanding how transition kernel design affects DLMs and propose SemDLM+, which balances training difficulty and sampling-side repair. We hereby list important limitations for the benefit of the community. 1) Our semantic graph is token-level, so it cannot fully capture context-dependent meanings. 2) The semantic penalty is still an approximation to the ideal transition, which makes generation sensitive to the hyperparameter design. 3) Semantic basin behavior shows that generation PPL alone can be misleading. We believe broader diversity and quality evaluations are needed for the DLM community for a comprehensive analysis.

\bibliography{colm2026_conference}
\bibliographystyle{colm2026_conference}

\appendix

\section{Related Work}
\label{apdx:related_work}

\paragraph{Discrete Diffusion Models.}
Diffusion language models (DLMs)~\citep{austin2021structured,hoogeboom2021argmax} have emerged as a promising alternative to autoregressive (AR) language models. Unlike AR LLMs~\citep{Dubey2024TheL3,Yang2024Qwen25TR,DeepSeekAI2024DeepSeekV3TR} that decode tokens strictly left-to-right, DLMs generate text by iterative refinement, progressively denoising from completely random noise sampled from reference distribution (e.g., all [MASK] tokens or samples from uniform distribution) into a clean data points~\citep{ho2020ddpm,li2022diffusion}. DLMs have achieved competitive language modeling performance~\citep{nie2025llada, wang2026trainabilitymaskeddiffusionlanguage}, attracting interest for their potential to reduce decoding latency via parallel updates without sacrificing quality. Representative formulations include D3PM \citep{austin2021structured} and its language-model instantiations, masking-based diffusion (MDLM) \citep{sahoo2024simple,DBLP:conf/nips/ShiHWDT24,ou2025your}, as well as alternative discrete-time or continuous-time constructions such as SEDD \citep{DBLP:conf/icml/LouME24}, tau-leaping for discrete-state continuous-time diffusion \citep{DBLP:conf/nips/CampbellBBRDD22}, and discrete flow matching \citep{DBLP:conf/icml/CampbellYBRJ24}.
These models vary in their design of the \emph{transition kernel}. Masking DLMs are the predominant archetype, due to stable training and convenient likelihood objectives \citep{sahoo2024simple,DBLP:conf/nips/ShiHWDT24,ou2025your}.
Beyond masking, prior work considers uniform diffusion \citep{austin2021structured,DBLP:conf/iclr/SchiffSPWBDARPK25}, marginal diffusion \citep{qin2025defog}, and semantic-neighborhood diffusion \citep{austin2021structured} to improve mixing and address exposure to hard corruptions.
Early studies argue that masking-style diffusion can alleviate slow convergence, training instability, and weaker generalization observed in some discrete diffusion setups \citep{austin2021structured,hoogeboom2021argmax,wang202610openchallengessteering}.
More recent scaling analyses, however, report that uniform diffusion can enjoy better scaling behavior given sufficient data and training budget \citep{vonruette2025scalingbehaviordiscretediffusion,DBLP:conf/icml/RutteFDOS025, sahoo2026scalingmaskeddiffusionlanguage}.
To bridge these regimes, \emph{GIDD} \citep{DBLP:conf/icml/RutteFDOS025} interpolates between masking and uniform noising, offering a controlled mixture family for both training and inference.

\paragraph{Large Diffusion Language Models.}
At scale, \citet{nie2025llada} introduces LLaDA, the first 8B scale diffusion Large Language Models (DLLM) with masked diffusion trained \emph{from scratch} with a standard pretraining + supervised fine-tuning (SFT) pipeline. LLaDA demonstrates strong scalability and competitive in-context learning and instruction-following ability at the 8B scale. Beyond purely diffusion training from scratch, hybrid paradigms aim to combine AR coherence with diffusion parallelism: \citet{dream2025} presents Dream 7B and \citet{cheng2025sdarsynergisticdiffusionautoregressionparadigm} propose SDAR. They both convert a pretrained AR model into a block-wise diffusion decoder. Finally, Tencent~\citep{liu2025wedlmreconcilingdiffusionlanguage} target deployment efficiency by reconciling diffusion decoding with standard KV caching: they propose WeDLM, which realizes diffusion-style parallel decoding under causal attention via reordering and streaming commitment, enabling substantial speedups over optimized AR serving (e.g., vLLM). Overall, these works indicate a rapidly maturing ecosystem of large diffusion LLMs.

\paragraph{Comparing Diffusion and Autoregressive Models.}
A growing literature compares diffusion and autoregressive (AR) generation across empirical performance, architectural design, and learning/inference dynamics. Empirically,~\citet{DBLP:journals/corr/abs-2411-07873} compares AR and diffusion models on downstream tasks and characterizes practical trade-offs. From an architectural viewpoint, ~\citet{DBLP:journals/corr/abs-2505-15045} analyzes how causal (AR) versus bidirectional (diffusion-style) attention affects modeling and decoding efficiency.
On the learning side,~\citet{kim2025train} studies diffusion trajectory construction (token update ordering) and shows that it can induce a harder optimization problem than AR's fixed causal ordering; ~\citet{DBLP:conf/iclr/GongA0YZLAZB00K25} demonstrates that diffusion and AR objectives can be cast in closely related forms, enabling transfer and distillation across paradigms.
Recent scaling evidence further suggests that diffusion models may generalize better as data and compute increase \citep{DBLP:journals/corr/abs-2511-03276}.
Despite these advances, existing studies do not offer a unified framework that links training dynamics, generator transition dynamics. We serve as the first to provide a unified framework that originated from generalization error analysis, and identify the  bias-variance trade-offs in DLLMs

\paragraph{Kernel Design in Diffusion Language Models. } 
Beyond model architectures, another line of work in discrete diffusion improves decoding quality by manipulating the generation path. \citet{qin2025defog} propose target-guided diffusion to steer generation toward desired outcomes during sampling. \citet{wang2025learningorder} learns the token update order with reinforcement learning, adapting the path to the input and improving decoding efficiency. \citet{wang2025remasking} introduces remasking strategies that re-corrupt tokens during inference to correct early mistakes and reduce degeneration. From a geometric perspective, \citet{jiang2025bureswasserstein} redesigns the diffusion path via optimal-transport alignment to better match intermediate states with the target distribution. More recently, \citet{peng2025path,liu2025think} propose trainable planners that predict which positions to decode or commit at each step, enabling selective refinement and faster convergence. Collectively, these methods highlight the importance of the transition path as a controllable degree of freedom in diffusion-based sequence generation.

\paragraph{Theoretical Guarantees of DLMs.}
While diffusion language models (DLMs) are often motivated by the prospect of parallel decoding, recent theory makes precise when such acceleration is \emph{provably} attainable and when it is fundamentally limited.
\citet{Feng2025TheoreticalBA} provide a rigorous analysis for masked diffusion models (MDMs) and show that the efficiency--accuracy trade-off is highly metric-dependent.
In particular, under mild regularity assumptions, they prove that MDMs can reach near-optimal \emph{perplexity} with a number of refinement steps that does not grow with the sequence length, supporting the intuition that parallel refinement can match AR-level likelihood quality with bounded-step sampling.~\citet{kim2025train} and ~\citet{tang2026gdsdreinforcementlearningguided} demonstrate that diffusion LLMs are difficult to train, with training instability, severe training-inference mismatch, and a complex optimization landscape, but are more powerful in sampling once trained. Later,~\citet{li2025convergence_dlm} develop an information-theoretic convergence theory for DLM sampling. Their results offers a concrete theoretical lens: DLMs can admit provable acceleration for likelihood-oriented metrics and weakly dependent sequences, yet face intrinsic limitations when the goal is exact-sequence correctness or when token dependencies are strong.

\section{Omitted Proofs}
\label{apdx:omit_proofs}

\subsection{Error decomposition}
\label{apdx:bias_decomp}

\begin{lightgrey}
\begin{prop}[Error Decomposition]
The exact KL divergence in the generation process can be decomposed into:
\begin{equation}
\begin{aligned}
&D_{\mathrm{KL}}\big(q(x_0)\|p_\theta(x_0)\big)\\
=&\underbrace{\mathbb{E}_{t}\Big[D_{\mathrm{KL}}\big(q(x_t)\|p_\theta(x_t)\big)\Big]}_{\text{Sampling Error}}
+\underbrace{\mathbb{E}_{t, x_t\sim q_t}\left[
D_{\mathrm{KL}}\big(q(x_0\mid x_t)\|p_\theta(x_0\mid x_t)\big)
\right]}_{\text{Approximation Error}}
\\&
-\underbrace{\mathbb{E}_{t,x_0\sim q}\left[
D_{\mathrm{KL}} \big(q(x_t\mid x_0)\| p_\theta(x_t\mid x_0)\big)
\right]}_{\text{Forward Process Mismatch}}.
\end{aligned}
\end{equation}
1) \textbf{Approximation Error.} The middle term is the \emph{approximation (denoising) error} under the true marginals
$\{q_t\}$, which has almost the same form as our training objective in~\Cref{eq:ori_loss}. 2) \textbf{Sampling Error.} The first term is a time-averaged \emph{marginal mismatch} between the model's rolled-out
marginals $\{p_t\}$ and the true marginals $\{q_t\}$. 3) \textbf{The forward path mismatch} The last term is a time-averaged correction term depending on whether the model's forward conditionals
$p(x_t\mid x_0)$ match the true noising process $q(x_t\mid x_0)$.
\end{prop}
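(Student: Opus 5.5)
The plan is to derive the identity pointwise in $t$ from the chain rule of KL divergence applied to a single joint distribution over the pair $(x_0,x_t)$, and then average over $t$. Fix $t\in(0,1]$. On the data side, take the joint $q(x_0,x_t)=q(x_0)\,q(x_t\mid x_0)$. Its marginals are $q_0$ and $q_t$, and its reverse conditional is the Bayes posterior $q(x_0\mid x_t)$. On the model side, take $p_\theta(x_0,x_t)$ to be the joint law of $(x_0,x_t)$ under the model's reverse roll-out from $t=1$ to $0$. Its marginals are the roll-out marginals $p_\theta(x_0)$ and $p_\theta(x_t)$. It factorizes both as $p_\theta(x_t)\,p_\theta(x_0\mid x_t)$ and as $p_\theta(x_0)\,p_\theta(x_t\mid x_0)$, where $p_\theta(x_t\mid x_0)$ is simply the model-induced forward conditional obtained by Bayes' rule. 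This is the object appearing in the "forward path mismatch" term.

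The key step is to expand $D_{\mathrm{KL}}\big(q(x_0,x_t)\,\|\,p_\theta(x_0,x_t)\big)$ in two ways via the chain rule. Since $\mathcal X=\mathcal V^L$ is finite, all sums are finite, so the chain rule is just the splitting of $\log\frac{q(x_0,x_t)}{p_\theta(x_0,x_t)}$ into a marginal log-ratio plus a conditional log-ratio, followed by taking expectation under $q$. Conditioning on $x_0$ first gives
\begin{equation*}
D_{\mathrm{KL}}\big(q(x_0,x_t)\|p_\theta(x_0,x_t)\big)=D_{\mathrm{KL}}\big(q(x_0)\|p_\theta(x_0)\big)+\mathbb E_{x_0\sim q}\,D_{\mathrm{KL}}\big(q(x_t\mid x_0)\|p_\theta(x_t\mid x_0)\big).
\end{equation*}
Conditioning on $x_t$ first gives
\begin{equation*}
D_{\mathrm{KL}}\big(q(x_0,x_t)\|p_\theta(x_0,x_t)\big)=D_{\mathrm{KL}}\big(q(x_t)\|p_\theta(x_t)\big)+\mathbb E_{x_t\sim q_t}\,D_{\mathrm{KL}}\big(q(x_0\mid x_t)\|p_\theta(x_0\mid x_t)\big).
\end{equation*}
Equating the two right-hand sides and solving for $D_{\mathrm{KL}}(q(x_0)\|p_\theta(x_0))$ gives the claimed three-term identity at each fixed $t$. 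The left-hand side does not depend on $t$, so taking $\mathbb E_t$ over any time distribution (the same one used in \Cref{eq:ori_loss}) yields the statement as written. Specializing to $p_\theta(x_t\mid x_0)\equiv q(x_t\mid x_0)$ kills the last term and recovers \Cref{eq:kl-two-term-avg}.

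The main obstacle is well-definedness rather than algebra. First, the decomposition requires one common model joint whose $x_t$-marginal is the roll-out $p_t$ and whose conditional is the trained denoiser $p_\theta(x_0\mid x_t)$. I will argue that this holds when $p_\theta(x_0)$ is defined as $\sum_{x_t}p_t(x_t)\,p_\theta(x_0\mid x_t)$, which is consistent with the $x$-prediction parameterization in \Cref{eq:x-prediction} along the roll-out. I will flag it as an interpretive assumption of the proposition. Second, all KL terms must be finite so that the subtraction is legitimate. This needs absolute continuity, $q(x_0,x_t)>0\Rightarrow p_\theta(x_0,x_t)>0$. It holds automatically when the model assigns full support, which a softmax parameterization does. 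In the masking case, where zero-probability pairs occur, I will use the convention $0\log 0=0$ and restrict the sums to $\mathrm{supp}(q)$.
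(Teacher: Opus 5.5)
Your proposal is correct and follows the paper's proof step for step. You expand the joint KL over $(x_0,x_t)$ by the chain rule twice, once conditioning on $x_t$ and once on $x_0$, then equate the two expansions, solve for $D_{\mathrm{KL}}(q(x_0)\|p_\theta(x_0))$, and average over $t$.

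One caution on your well-definedness remark, a point the paper leaves implicit. Defining $p_\theta(x_0):=\sum_{x_t}p_t(x_t)\,p_\theta(x_0\mid x_t)$ with the one-shot denoiser generally makes $p_\theta(x_0)$ depend on $t$. So the averaging step is only valid under your first reading, where the model joint is the single roll-out path law with a $t$-independent $x_0$-marginal, and not under this proposed fix.
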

\end{lightgrey}

\textit{Proof.} For simplicity, in the following we omit the parameter $\theta$ but just denote $p_\theta$ as $p$. We first consider the KL divergence on the joint distribution of $(x_0, x_t)$, where
\begin{equation}
\label{eq:joint-kl-def}
D_{\mathrm{KL}}\big(q(x_0,x_t)\|p(x_0,x_t)\big) :=  \int_{x_0, x_t}dx_0 dx_t q(x_0,x_t) \log\frac{q(x_0,x_t)}{p(x_0,x_t)}.
\end{equation}
where $x_t$ and $x_0$ are sampled through $x_t\sim q(x_t\mid x_0)$, $x_0 \sim q(x_0)$.
We use \(q(x_0,x_t)=q_t(x_t) q(x_0\mid x_t)\), and \(p(x_0,x_t)=p_t(x_t) p(x_0\mid x_t)\) to write,
\begin{equation}
\log\frac{q(x_0,x_t)}{p(x_0,x_t)} = \log\frac{q_t(x_t) q(x_0\mid x_t)}{p_t(x_t) p(x_0\mid x_t)}=
\log\frac{q_t(x_t)}{p_t(x_t)} + \log\frac{q(x_0\mid x_t)}{p(x_0\mid x_t)}.
\label{eq:log-split-xt}
\end{equation}
Substitute~\Cref{eq:log-split-xt} into~\Cref{eq:joint-kl-def}, we obtain,
\begin{equation}
\label{eq:joint-kl-split-two-terms}
\begin{aligned}
&D_{\mathrm{KL}} \big(q(x_0,x_t) \| p(x_0,x_t)\big) \\&= 
\underbrace{ \int_{x_0, x_t}q(x_0,x_t)\log\frac{q_t(x_t)}{p_t(x_t)}dx_0 dx_t }_{\text{Term (1)}}
+
\underbrace{ \int_{x_0, x_t} q(x_0,x_t)\log\frac{q(x_0\mid x_t)}{p(x_0\mid x_t)}dx_0 dx_t }_{\text{Term (2)}}.
\end{aligned}
\end{equation}
Given that $\log\frac{q_t(x_t)}{p_t(x_t)}$ depends only on $x_t$ but not on $x_0$, term (1) becomes
\begin{equation}
\begin{aligned}
\text{Term (1)} &= \int_{x_t}\left(\int_{x_0} q(x_0,x_t)dx_0\right)\log\frac{q_t(x_t)}{p_t(x_t)}dx_t\\
&= \int_{x_t} q_t(x_t) \log\frac{q_t(x_t)}{p_t(x_t)} dx_t = D_{\mathrm{KL}} \big(q_t(x_t) \| p_t(x_t)\big).
\end{aligned}
\label{eq:star-equals-marginal-kl}
\end{equation}
Similarly, using $q(x_0,x_t)=q_t(x_t) q(x_0\mid x_t)$, Term (2) becomes,
\begin{equation}
\begin{aligned}
\text{Term (2)}
&= \int_{x_t, x_0} q_t(x_t) q(x_0\mid x_t)\log\frac{q(x_0\mid x_t)}{p(x_0\mid x_t)} dx_t dx_0\\
&= \int_{x_t}dx_t q_t(x_t)\left[\int_{x_0} q(x_0\mid x_t)\log\frac{q(x_0\mid x_t)}{p(x_0\mid x_t)}dx_0\right]\\
&= \int_{x_t} q_t(x_t) D_{\mathrm{KL}} \big(q(x_0\mid x_t) \| p(x_0\mid x_t)\big)dx_t\\
&= \mathbb{E}_{x_t\sim q_t} \Big[
D_{\mathrm{KL}} \big(q(x_0\mid x_t) \| p(x_0\mid x_t)\big)
\Big].
\label{eq:dagger-equals-conditional-kl}
\end{aligned}
\end{equation}
Plugging~\Cref{eq:star-equals-marginal-kl} and~\Cref{eq:dagger-equals-conditional-kl} into~\Cref{eq:joint-kl-split-two-terms} yields
\begin{equation}
\label{eq:joint-kl-chain-xt-final}
D_{\mathrm{KL}} \big(q(x_0,x_t) \| p(x_0,x_t)\big)
=
D_{\mathrm{KL}} \big(q_t \| p_t\big)
+
\mathbb{E}_{x_t\sim q_t} \Big[
D_{\mathrm{KL}} \big(q(x_0\mid x_t) \| p(x_0\mid x_t)\big)
\Big].
\end{equation}
Now consider \(q(x_0,x_t)=q(x_0) q(x_t\mid x_0)\)
 and \(p(x_0,x_t)=p(x_0) p(x_t\mid x_0)\), we get
\begin{equation}
\log\frac{q(x_0,x_t)}{p(x_0,x_t)} = \log\frac{q(x_0) q(x_t\mid x_0)}{p(x_0) p(x_t\mid x_0)}=
\log\frac{q(x_0)}{p(x_0)} + \log\frac{q(x_t\mid x_0)}{p(x_t\mid x_0)}.
\label{eq:log-split-x0}
\end{equation}
Substitute~\Cref{eq:log-split-x0} into~\Cref{eq:joint-kl-def}:
\begin{equation}
\begin{aligned}
&D_{\mathrm{KL}} \big(q(x_0,x_t) \| p(x_0,x_t)\big)
\\
&=
 \int_{x_0, x_t}dx_0 dx_t q(x_0,x_t)\left[
\log\frac{q(x_0)}{p(x_0)} + \log\frac{q(x_t\mid x_0)}{p(x_t\mid x_0)}
\right]\nonumber\\
&=
\underbrace{ \int_{x_0, x_t}dx_0 dx_t q(x_0,x_t)\log\frac{q(x_0)}{p(x_0)}}_{\text{Term (3)}}
+
\underbrace{ \int_{x_0, x_t}dx_0 dx_t q(x_0,x_t)\log\frac{q(x_t\mid x_0)}{p(x_t\mid x_0)}}_{\text{Term (4)}}.
\label{eq:joint-kl-split-two-terms-x0}
\end{aligned}
\end{equation}
Then, using the same marginalization trick as~\Cref{eq:star-equals-marginal-kl},
\begin{equation}
\begin{aligned}
\text{Term (3)} &= \int_{x_0}\left(\int_{x_t} q(x_0,x_t)dx_t\right)\log\frac{q(x_0)}{p(x_0)}dx_0\\
&= \int_{x_0} q(x_0) \log\frac{q(x_0)}{p(x_0)}dx_0 = D_{\mathrm{KL}} \big(q(x_0) \| p(x_0)\big).
\end{aligned}
\label{eq:clubsuit-equals-data-kl}
\end{equation}
And similarly for Term (4),
\begin{equation}
\begin{aligned}
\text{Term (4)}
&= \int_{x_0, x_t} dx_tdx_0\quad  q(x_0) q(x_t\mid x_0)\log\frac{q(x_t\mid x_0)}{p(x_t\mid x_0)} \\
&= \int{dx_0} q(x_0)\left[\int_{x_t} q(x_t\mid x_0)\log\frac{q(x_t\mid x_0)}{p(x_t\mid x_0)}dx_t\right]\\\
&= \mathbb{E}_{x_0\sim q} \Big[
D_{\mathrm{KL}} \big(q(x_t\mid x_0) \| p(x_t\mid x_0)\big)
\Big].
\end{aligned}
\label{eq:spadesuit-equals-forward-conditional-kl}
\end{equation}
Plugging~\Cref{eq:clubsuit-equals-data-kl} and~\Cref{eq:spadesuit-equals-forward-conditional-kl}
into~\Cref{eq:joint-kl-split-two-terms-x0} yields
\begin{equation}
\label{eq:joint-kl-chain-x0-final}
D_{\mathrm{KL}} \big(q(x_0,x_t) \| p(x_0,x_t)\big)
=
D_{\mathrm{KL}} \big(q(x_0) \| p(x_0)\big)
+
\mathbb{E}_{x_0\sim q} \Big[
D_{\mathrm{KL}} \big(q(x_t\mid x_0) \| p(x_t\mid x_0)\big)
\Big].
\end{equation}
Both~\Cref{eq:joint-kl-chain-xt-final} and~\Cref{eq:joint-kl-chain-x0-final} equal the same quantity
$D_{\mathrm{KL}}(q(x_0,x_t) \| p(x_0,x_t))$, then
\begin{equation}
\begin{aligned}
&D_{\mathrm{KL}} \big(q_t(x_t) \| p_t(x_t)\big)
+
\mathbb{E}_{x_t\sim q(x_t)} \Big[
D_{\mathrm{KL}} \big(q(x_0\mid x_t) \| p(x_0\mid x_t)\big)
\Big]
\\=
&D_{\mathrm{KL}} \big(q(x_0) \| p(x_0)\big)
+
\mathbb{E}_{x_0\sim q(x_0)} \Big[
D_{\mathrm{KL}} \big(q(x_t\mid x_0) \| p(x_t\mid x_0)\big)
\Big].\nonumber
\label{eq:equate-two-forms}
\end{aligned}
\end{equation}
Hence
\begin{equation}
\label{eq:fixed-t-three-term}
\begin{aligned}
&D_{\mathrm{KL}} \big(q(x_0) \| p(x_0)\big) \\
= &D_{\mathrm{KL}} \big(q_t \| p_t\big)
+
\mathbb{E}_{x_t\sim q_t} \Big[
D_{\mathrm{KL}} \big(q(x_0\mid x_t) \| p(x_0\mid x_t)\big)
\Big]
-
\mathbb{E}_{x_0\sim q} \Big[
D_{\mathrm{KL}} \big(q(x_t\mid x_0) \| p(x_t\mid x_0)\big)
\Big].
\end{aligned}
\end{equation}
Taking expectation of~\Cref{eq:fixed-t-three-term} with respect to $t$ yields,
\begin{lightgrey}
\begin{equation}
\begin{aligned}
D_{\mathrm{KL}} \big(q(x_0) \| p(x_0)\big)
= &\mathbb{E}_{t} \Big[D_{\mathrm{KL}} \big(q_t \| p_t\big)\Big]
+
\mathbb{E}_{t, x_t\sim q_t} \Big[
D_{\mathrm{KL}} \big(q(x_0\mid x_t) \| p(x_0\mid x_t)\big)
\Big] 
\\&-
\mathbb{E}_{t,x_0\sim q} \Big[
D_{\mathrm{KL}} \big(q(x_t\mid x_0) \| p(x_t\mid x_0)\big)
\Big].
\end{aligned}
\label{eq:three-term-time-avg}
\end{equation}
\end{lightgrey}
which is exactly the three-term error decomposition that we are interested.

\subsection{When the forward process matches}
Assume that for all $(x_0,t)$,\(p(x_t\mid x_0)\equiv q(x_t\mid x_0).\)
Then for every $(x_0,t)$, \(D_{\mathrm{KL}} \big(q(x_t\mid x_0) \| p(x_t\mid x_0)\big)=0\). Hence the entire time-averaged correction term in~\Cref{eq:three-term-time-avg} is zero:
\begin{equation}
\mathbb{E}_{t, x_0\sim q} \Big[
D_{\mathrm{KL}} \big(q(x_t\mid x_0) \| p(x_t\mid x_0)\big)
\Big]=0.
\end{equation}
Therefore~\Cref{eq:three-term-time-avg} reduces to
\begin{equation}
D_{\mathrm{KL}} \big(q(x_0) \| p(x_0)\big) =\underbrace{
\mathbb{E}_{t} \Big[D_{\mathrm{KL}} \big(q_t \| p_t\big)\Big]}_{\text{Sampling Error}}
+
\underbrace{\mathbb{E}_{t,x_t\sim q_t} \Big[
D_{\mathrm{KL}} \big(q(x_0\mid x_t) \| p(x_0\mid x_t)\big)
\Big]}_{\text{Approximation Error}}.
\end{equation}
This equation is importance, as it helps us to consider the error seperately from sampling and training.

\subsection{Bias-Variance Decomposition}
\label{apdx:bvd}

\subsubsection{Training and sampling bias-variance trade-off}
The bias-variance decomposition holds for both training and sampling, so we utilize $p(x_0\mid x_t)$ in approximation error as an illustration, which can be easily generalized to sampling distribution $p(x_t)$.

With the expected predictor
\begin{equation}
\bar p(x_0\mid x_t)
:= \mathbb E_S\big[p_{\hat \theta_S}(x_0\mid x_t)\big] =
\mathbb E_S\big[\hat p_S(x_0\mid x_t)\big].
\end{equation}
where $\hat \theta(S)$ denote parameter induced by specific training set and optimization randomness $S$ (short as $\hat p_S := p_{\hat \theta_S}$). 
Then~\Cref{eq:ori_loss} admits the following exact decomposition ($p(x_0\mid x_t)$ simplified as $p$):
\begin{equation}
\begin{aligned}
&\mathbb E_S\Big[ D_{\mathrm{KL}}(q \| \hat p_S)\Big]
=
D_{\mathrm{KL}}(q \| \bar p) + \gV(x_t),
\\
& \text{where, }\gV(x_t)
:=
\mathbb E_{q(x_0\mid x_t)}
\Big[
\log \bar p -
\mathbb E_S \log (\hat p_S)
\Big]
\ge 0.
\end{aligned}
\end{equation}
The nonnegativity of $\gV(x_t)$ follows from Jensen's inequality:
$\mathbb E_S[\log \hat p_S(x_0\mid x_t)] \le \log \mathbb E_S[\hat p_S(x_0\mid x_t)]
= \log \bar p(x_0\mid x_t)$. Specifically, 
\begin{equation}
\begin{aligned}
&\mathbb{E}_S \Big[ D_{\mathrm{KL}}\big(q(x_0\mid x_t)  \|  p_{\hat{\theta}_S}(x_0\mid x_t)\big) \Big] = \mathbb{E}_S \left[ \int q(x_0\mid x_t) \log \frac{q(x_0\mid x_t)}{p_{\hat{\theta}_S}(x_0\mid x_t)}   dx_0 \right] \\
&= \int q(x_0\mid x_t) \log q(x_0\mid x_t)   dx_0 - \int q(x_0\mid x_t) \underbrace{\mathbb{E}_S \left[ \log p_{\hat{\theta}_S}(x_0\mid x_t) \right]}_{\log \bar{p}(x_0)}   dx_0 \\
&= \int q(x_0\mid x_t) \left( \log q(x_0\mid x_t) - \log \bar{p}(x_0\mid x_t) \right)   dx_0 \\
&\quad + \int q(x_0\mid x_t) \left( \log \bar{p}(x_0\mid x_t) - \mathbb{E}_S [\log p_{\hat{\theta}_S}(x_0\mid x_t)] \right)   dx_0 \quad \text{(Add \& Sub)} \\
&= \underbrace{D_{\mathrm{KL}}(q  \|  \bar{p})}_{\textbf{Bias}} + \underbrace{\mathbb{E}_{q(x_0\mid x_t)}
\left[
\log \bar p -
\mathbb E_S \log (\hat p_S)
\right]}_{\textbf{Variance}}.
\end{aligned}
\end{equation}
Averaging over $x_t$ yields an expected risk decomposition:
\begin{equation}
\mathbb{E}_{t,x_t}\mathbb E_S\Big[ D_{\mathrm{KL}}(q(x_0\mid x_t) \| \hat p_S(x_0\mid x_t))\Big]
=
\underbrace{
\mathbb E_{t, x_t}\Big[
D_{\mathrm{KL}}\big(q(x_0\mid x_t)\| \bar p(x_0\mid x_t)\big)
\Big]}_{\textbf{Asymptotic Bias } \gB_{\text{asym}}}
+
\underbrace{
\mathbb E_{t, x_t}\big[\mathcal V(x_t)\big]}_{\textbf{Variance } \gV},
\end{equation}
where $x_t$ are distributed according to the forward process, $q(x_t)=\int_{x_0}q(x_t\mid x_0)p(x_0)dx_0$.
The sampling bias-variance trade-off follows the exact same derivation, which only requires replacing \(p(x_0\mid x_t)\) with \(p(x_t)\), such that
\begin{equation}
D_{\mathrm{KL}}\big(q_t(x_t)\|p_t(x_t)\big) = \underbrace{D_{\mathrm{KL}}\big(q_t(x_t)\|\bar{p}_t(x_t)\big)}_{\textbf{Exposure Bias }\gB_{t}} + \underbrace{\mathbb{E}_{x_t}[\log \bar{p}(x_t) - \mathbb{E}_S (\log \hat{p}_S(x_t))]}_{\textbf{Sampling Roll-out Variance }\gV_t}
\end{equation}

\subsection{Derivation for Asymptotic Bias: Local Geometry and Posterior Approximation Difficulty}
\label{apdx:asym_bias}

This section, we prove Proposition ~\ref{prop:appx_diff}, which we restate here
\begin{lightgrey}
\begin{prop}[Approximation difficulty across diffusion variants.]
We derive in~\Cref{apdx:asym_bias} that, 
\begin{equation}
\mathbb E \left[d_{\mathrm{act}}^{\mathrm{mask}}\right]
\leq
\mathbb E \left[d_{\mathrm{act}}^{\mathrm{sem}}\right]
\mathbb E 
\leq\left[d_{\mathrm{act}}^{\mathrm{uni}}\right], \quad
\mathbb E \left[\gI_1^{\mathrm{mask}}\right]
\leq
\mathbb E \left[\gI_1^{\mathrm{sem}}\right]
\leq
\mathbb E \left[\gI_1^{\mathrm{uni}}\right]
\end{equation}
\end{prop}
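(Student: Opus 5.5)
The plan is to work position by position and reduce both orderings to a comparison of posterior \emph{supports}, with one-hot data as the base case. Under the factorization assumption behind \Cref{eq:global-kl-local-decomp}, Bayes' rule gives the per-position posterior at $x_t^i=j$ as
\[
q(x_0^i=k\mid x_t)\;\propto\; p_{\mathrm{data}}(x_0^i=k\mid x_t^{-i})\, q_t(j\mid k).
\]
So $\mathrm{supp}\, q(x_0^i\mid x_t)=\mathrm{supp}\,p_{\mathrm{data}}(\cdot\mid x_t^{-i})\cap\{k: q_t(j\mid k)>0\}$. The argument then splits into two cases according to whether the observed token at position $i$ is corrupted.

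\textbf{Corrupted position.} Here the three kernels differ as follows.
\begin{enumerate}
\item \emph{Masking} (\Cref{eq:masking_kernel}): if $x_t^i=\mathtt{[MASK]}$, every $k$ is compatible, so the posterior is the contextual prior and has support $S_{\mathrm{ctx}}$. If $x_t^i\neq\mathtt{[MASK]}$, the posterior is $\delta_{x_t^i}$, so $d_{\mathrm{act}}=0$ and $\gI_1=0$.
\item \emph{Uniform} (\Cref{eq:uniform_kernel}): $q_t(j\mid k)>0$ for all $k$, so the support is $S_{\mathrm{ctx}}$ at every position, visible or not.
\item \emph{Semantic} with the KNN kernel: $q_t(j\mid k)>0$ only if $k=j$ or $j\in\mathcal N_{k_t}(k)$. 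The support is therefore $S_{\mathrm{ctx}}\cap(\{j\}\cup\mathcal N^{-1}_{k_t}(j))\subseteq S_{\mathrm{ctx}}$.
\end{enumerate}
For the softmax-based kernel of \Cref{eq:semantic-proposal} supports are full. In that case I would switch to an effective-support or mass argument, since the kernel is a sharp exponential concentrated on neighbours.

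\textbf{Step (i): ordering $d_{\mathrm{act}}$.} Once supports are nested in distribution, $d_{\mathrm{act}}=|\mathrm{supp}|-1$ is monotone, and taking expectations over $t$ and $x_t$ gives the ordering. Two points need care.
\begin{itemize}
\item \emph{Sem vs.\ uni.} Couple the two processes by using the same corruption indicator, since both share $\alpha_t$. Pointwise on a corrupted position the semantic support is contained in the uniform one. On an uncorrupted position the uniform posterior is still dense while the semantic one is restricted.
\item \emph{Mask vs.\ sem.} Masking has zero active directions on every visible position and $|S_{\mathrm{ctx}}|-1$ on masked ones. I would need $\mathbb E[d^{\mathrm{sem}}]\ge\mathbb E[d^{\mathrm{mask}}]$. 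This holds if, on corrupted positions, the semantic support is comparable to $S_{\mathrm{ctx}}$, and visible positions contribute a nonnegative amount for sem but exactly zero for mask.
\end{itemize}

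\textbf{Main obstacle: the mask-versus-sem comparison on masked positions.} There the semantic posterior is \emph{smaller} than the masking one, so a pointwise comparison fails. My first attempt is to add an explicit mild assumption that the contextual prior is concentrated, meaning $|S_{\mathrm{ctx}}\cap\mathcal N^{-1}_{k_t}(j)|$ is on average comparable to $|S_{\mathrm{ctx}}|$ when $k_t$ is large. I would also use the fact that semantic corruption leaves no unambiguous observed tokens. With the paper's schedule $k_t$ grows to $k_{\max}$, so corrupted positions late in the process behave like masked ones, while early visible positions still carry a nonzero ambiguity cost. If that route does not close, I would state the inequality under the assumption that the extra ambiguity on uncorrupted semantic positions dominates the support reduction on corrupted ones.

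\textbf{Step (ii): ordering $\gI_1=1-\|q\|_2^2$.} I would run the same comparison through the following chain:
\begin{itemize}
\item Restricting a distribution $r$ to a subset $A$ and renormalizing increases the collision probability, since $\sum_{k\in A} r_k^2/r(A)^2\ge\sum_k r_k^2$ whenever $r$ is close to uniform on its support. This condition holds in general only approximately, so I would state it as an assumption.
\item Reweighting by the likelihood $q_t(j\mid k)$, which puts weight $\alpha_t$ on $k=j$, concentrates mass further.
\item Hence $\gI_1^{\mathrm{sem}}\le\gI_1^{\mathrm{uni}}$ pointwise under the coupling, and the masking case follows exactly as in step (i).
\end{itemize}
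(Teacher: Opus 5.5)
Your route is the paper's own. Bayes' rule reduces everything to the posterior support $S\cap\{k:q_t(j\mid k)>0\}$. Masking gives $\delta_j$ or the full prior, uniform gives the full support, and the KNN kernel gives $S\cap(\{j\}\cup\mathcal N^{-1}_{k_t}(j))$. Dense softmax kernels are handled through an effective support, and $\gI_1$ is compared through near-flat posteriors.

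The sem-versus-uni half is fine, and easier than your coupling suggests. The semantic and uniform posteriors depend only on the observed token $j$ (whether it was corrupted is not observable), and $d_{\mathrm{act}}^{\mathrm{uni}}\equiv|S|-1$ is already the maximal value. One caveat: $S_{\mathrm{ctx}}$ is not a common set across the three processes, because $x_t^{-i}$ is corrupted by the same kernel. The paper sidesteps this by suppressing the context and using the marginal support $S$, and I do the same below.

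The genuine gap is the one you flagged, and no sharper argument can close it: $\mathbb E[d_{\mathrm{act}}^{\mathrm{mask}}]\le\mathbb E[d_{\mathrm{act}}^{\mathrm{sem}}]$ is false without a strong extra hypothesis. The paper's appendix does not close it either. It compares $S_{\mathrm{sem}}(j)$ only with ``a copied masking token'', i.e.\ the visible case. It never compares against the $|S|-1$ active directions that masking has on the $(1-\alpha_t)$-fraction of masked positions.

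For every $j$ one has $d_{\mathrm{act}}^{\mathrm{sem}}(j)\le|\mathcal N^{-1}_{k_t}(j)|$, the in-degree of $j$, whose average over $\mathcal V$ is exactly $k_t$. Concretely, take $q_{\mathrm{data}}$ uniform on $S=\mathcal V$ and a neighbour graph in which each token is a neighbour of exactly $k_t$ others (self excluded). The semantic posterior then puts $\alpha_t$ on $j$ and $(1-\alpha_t)/k_t$ on each in-neighbour, so $d_{\mathrm{act}}^{\mathrm{sem}}\equiv k_t$, against $\mathbb E[d_{\mathrm{act}}^{\mathrm{mask}}]=(1-\alpha_t)(V-1)$. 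With $\alpha_t=1-t$ and $k_t\le k_{\max}\ll V$, the ordering fails for every $t>k_{\max}/(V-1)$, and also after averaging over $t$.

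This has three consequences for your fixes. Your remark that late corrupted positions ``behave like masked ones'' is wrong unless $k_{\max}$ is comparable to $|S|$. Your second fallback is just the conclusion restated. Any usable hypothesis must say, roughly, that in-neighbourhoods cover a $(1-\alpha_t)$-fraction of the clean support, which is the opposite of the locality that motivates the kernel.

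For $\gI_1$, reducing ``exactly as in step (i)'' is the wrong route, but the ordering is far more robust. A semantic posterior pays about $1-\alpha_t^2$ even on uncorrupted tokens. In the same example, $\gI_1^{\mathrm{sem}}=(1-\alpha_t)\bigl(1+\alpha_t-(1-\alpha_t)/k_t\bigr)$ versus $\mathbb E[\gI_1^{\mathrm{mask}}]=(1-\alpha_t)(1-1/V)$, and the inequality fails only for $\alpha_t<1/(k_t+1)$. So treat $\gI_1$ by such a direct computation rather than by support nesting, under an explicit non-degeneracy condition on the neighbour graph.
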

\end{lightgrey}

\subsubsection{Characterizing the local geometry with Hessian matrix}

To describe the posterior approximation difficulty, we characterize the \emph{local geometry} of the target posterior. Let \(f_\theta(x_t)^i \in \mathbb R^{|V|}\) denote the predicted logits at position \(i\in[1:L]\) such that \(p_\theta^i(\cdot\mid x_t)=\softmax(f_\theta(x_t)^i)\).
Assuming conditional factorization across positions given \(x_t\), the asymptotic bias can be decomposed into local losses:
\begin{equation}
\gB_{\text{asym}}
=
\mathbb E_{t,x_t} \left[
\sum_{i=1}^L
\ell_i \left(f_\theta(x_t)^i; x_t\right)
\right],
\qquad
\ell_i(f_\theta;x_t):=
D_{\mathrm{KL}}\bigl(q(x_0^i\mid x_t)\|\softmax(f_\theta)\bigr).
\label{eq:appendix-global-kl-local-decomp}
\end{equation}

Fix a position \(i\) and a corrupted \(x_t\), we have,
\begin{equation}
\ell_i(f;x_t)
=
D_{\mathrm{KL}}\bigl(q(x_0^i=\cdot\mid x_t)\,\|\,\softmax(f_\theta)\bigr)
=
\sum_{k\in V} q(x_0^i=k\mid x_t)
\log
\frac{q(x_0^i=k\mid x_t)}{p_\theta(x_0^i=k\mid x_t)}.
\label{eq:appendix-local-objective-explicit}
\end{equation}
Since \(\sum_k q(x_0^i=k\mid x_t)\log q(x_0^i=k\mid x_t)\) is constant w.r.t \(\theta\), minimizing \(\ell_i(f;x_t)\) is equivalent to minimizing the cross-entropy
\[
-\sum_{k\in V} q(x_0^i=k\mid x_t)\log p_\theta(x_0^i=k\mid x_t).
\]
Recall that \(\log p_\theta(x_0^i=k\mid x_t)
=
f_\theta(x_t)^i_k - \log \sum_{m\in V} e^{f_\theta(x_t)^i_m}\), when \(f_\theta(x_t)^i_k\) denote the \(k\)-th logit at position \(i\) (we use $f_{k}$ when there is no ambuguity), we obtain
\[
\frac{\partial \log p_\theta(x_0^i=k\mid x_t)}{\partial f_{r}}
=
\delta_{kr}-p_\theta(x_0^i=r\mid x_t).
\]
Therefore, the gradient is
\begin{equation}
\begin{aligned}
\frac{\partial \ell_i(f;x_t)}{\partial f_{r}}
&=
-\sum_{k\in V} q(x_0^i=k\mid x_t) \frac{\partial \log p_\theta(x_0^i=k\mid x_t)}{\partial f_{\theta,r}}\\
&=
-\sum_{k\in V} q(x_0^i=k\mid x_t)(\delta_{kr}-p_\theta(x_0^i=r\mid x_t))\\
&=
-q(x_0^i=r\mid x_t) + p_\theta(x_0^i=r\mid x_t) \underbrace{\sum_{k\in V} q(x_0^i=k\mid x_t)}_{=1}
\\&=
p_\theta(x_0^i=r\mid x_t)-q(x_0^i=r\mid x_t),\nonumber
\end{aligned}
\label{eq:first_order_gradient}
\end{equation}

\begin{lightgrey}
By the chain rule and the decomposition of the global loss,
\begin{align}
\nabla_\theta \gL
&=
\nabla_\theta
\mathbb E_{t,x_t}\!\left[
\sum_{i=1}^L \ell_i\!\left(f_\theta(x_t)^i;x_t\right)
\right]
\nonumber\\
&=
\mathbb E_{t,x_t}
\left[
\sum_{i=1}^L
\sum_{k\in V}
\frac{\partial \ell_i(f_\theta(x_t)^i;x_t)}{\partial f_\theta(x_t)^i_k}
\cdot
\nabla_\theta f_\theta(x_t)^i_k
\right]
\nonumber\\
&=
\mathbb E_{t,x_t}
\left[
\sum_{i=1}^L
\sum_{k\in V}
\Big(
p_\theta(x_0^i=k\mid x_t)-q(x_0^i=k\mid x_t)
\Big)
\cdot
\nabla_\theta f_\theta(x_t)^i_k
\right],
\end{align}
which is exactly~\Cref{eq:param-grad-local-mismatch}.
\end{lightgrey}
Since \(q\) is irrelevant with f, differentiate~\Cref{eq:first_order_gradient} once more we obtain 
\[
\nabla_f^2 \ell(f;q)=\frac{\partial p}{\partial f}.
\]
For softmax, the Jacobian is
\[
\frac{\partial p_k}{\partial f_r}
=
p_k(\delta_{kr}-p_r),
\]
hence
\begin{equation}
\nabla_f^2 \ell(f;q)
=
\text{Diag}(p)-pp^\top.
\label{eq:appendix-hessian-general}
\end{equation}
At any optimum \(f^\star(q)\) such that \(\softmax(f^\star(q))=q\), we obtain
\begin{equation}
\nabla_f^2 \ell(f^\star(q);q)
=
\text{Diag}(q)-qq^\top
=
\Sigma(q).
\label{eq:appendix-hessian-opt}
\end{equation}

\paragraph{Local Taylor expansion.}
By standard second-order expansion around \(f^\star(q)\), and denote \(\delta f \approx f-f^{*}(q)\) a small variation around \(f\), using taylor expansion around \(f^*\) we have,
\begin{equation}
\ell(f^\star(q)+\delta f;q)
=
\ell(f^\star(q);q)
+
\underbrace{\nabla_f \ell(f^\star(q);q)}_{=\,0}^\top \delta f
+
\frac12 \delta f^\top \Sigma(q)\delta f
+
o(\|\delta f\|^2),
\end{equation}
which gives
\begin{equation}
\ell(f;q)-\ell(f^\star(q);q)
=
\frac12 \delta f^\top \Sigma(q)\delta f
+
o(\|\delta f\|^2).
\end{equation}
hence, we obtain~\Cref{eq:local-grad-linearization}, where
\begin{equation}
\nabla_f \ell(f;q)
=
\Sigma(q)(f- f^*(q))
+o \left(\|f - f^*(q)\|\right), \text{with }\Sigma(q):=\nabla^2_f\ell(f;q)=\text{Diag}(q)-qq^\top.
\end{equation}
This suggest that the shape of the second order hessian will dominant the optimization dynamics of our diffusion LLMs.

\subsubsection{Metrics for posterior approximation difficulty.}

Through our Hessian matrix \(\Sigma(q)\), we can then define a few metrics that is helpful in analyzing the posterior approximation difficulty in DLMs. Specifically,
\begin{equation}
d_{\mathrm{act}}(q) := \operatorname{rank}\bigl(\Sigma(q)\bigr),\quad
\gI_1(q) := \operatorname{tr}\bigl(\Sigma(q)\bigr),
\quad
\gI_2(q)
:=
\operatorname{tr}\bigl(\Sigma(q)^2\bigr).
\label{eq:appendix-difficulty-defs}
\end{equation}
Specifically,
\begin{enumerate}
    \item \(d_{\mathrm{act}}\) counts the dimension of the tangent space on which the KL objective has non-zero curvature. Hence it is the number of independent logit directions that must be fitted simultaneously.
    \item \(\gI_1\) measures the average curvature and therefore how much an isotropic local logit error increases the KL loss.
    \item \(\gI_2\) measures the squared curvature and therefore the local gradient energy induced by the same logit error.
\end{enumerate}

We provide some justification on what they mean in the optimization problem. 

\paragraph{\(d_{\mathrm{act}}(q)\) measures the independent direction to be fitted.}

Given that 
\(d_{\mathrm{act}}(q) = \operatorname{rank}(\Sigma(q) = \operatorname{supp}(q)\). It directly measure the independent directions that is required to be fitted.

\paragraph{2) \(\gI_1(q)\) measures local error sensitivity.}
Using \(\operatorname{tr}(qq^\top)=\|q\|_2^2\),
\begin{equation}
\gI_1(q)
=
\operatorname{tr}(\Sigma(q))
=
\operatorname{tr}(\operatorname{Diag}(q))-\operatorname{tr}(qq^\top)
=
1-\|q\|_2^2.
\label{eq:appendix-i1-trace}
\end{equation}
Around the optimal logit \(f^\star(q)\),
\(
\mathbb E[\delta f]=0\) and \(\mathbb E[\delta f\delta f^\top]=\sigma^2 I \), The Taylor expansion gives
\begin{equation}
\mathbb E\bigl[\ell(f^\star(q)+\delta f;q)-\ell(f^\star(q);q)\bigr]
=
\frac{\sigma^2}{2}\operatorname{tr}(\Sigma(q))
+o(\sigma^2)
=
\frac{\sigma^2}{2}\gI_1(q)+o(\sigma^2).
\label{eq:appendix-i1-loss-sensitivity}
\end{equation}
Therefore \(\gI_1(q)\) is exactly the first-order coefficient converting local logit mismatch into excess posterior KL.
\paragraph{\(\gI_2(q)\) measures gradient energy.}
The local gradient linearization gives \(\nabla_f\ell(f^\star(q)+\delta f;q) = \Sigma(q)\delta f+o(\|\delta f\|)\).
Hence
\begin{equation}
\mathbb E\|\nabla_f\ell(f^\star(q)+\delta f;q)\|_2^2
=
\sigma^2\operatorname{tr}(\Sigma(q)^2)+o(\sigma^2)
=
\sigma^2\gI_2(q)+o(\sigma^2).
\label{eq:appendix-i2-gradient-energy}
\end{equation}
Thus \(\gI_2(q)\) measures the local gradient energy created by posterior fitting. A larger \(\gI_2\) means that small logit errors induce larger gradients and can contribute to noisier optimization.

\begin{lightgrey}
Together, \(d_{\mathrm{act}}\), \(\gI_1\), and \(\gI_2\) summarize three aspects of posterior fitting: how many logit directions are active, how strongly local mismatch becomes KL error, and how much gradient energy the mismatch creates. Smaller values indicate an easier local posterior approximation problem.
\end{lightgrey}

\paragraph{Approximation difficulty across diffusion variants.}
We next show how the transition kernel changes these quantities. For simplicity, we fix one position and suppress the position index and the context. Let \(q_{\mathrm{data}}(x_0=k)\) be the clean-token data prior, and define the support as
\(S:=\{k:q_{\mathrm{data}}(x_0=k)>0\}\).We denote the forward kernel as \(q_t(j\mid k):=q(x_t=j\mid x_0=k)\) and gives the posterior
\begin{equation}
q(x_0=k\mid x_t=j)
=
\frac{q_{\mathrm{data}}(x_0=k)q_t(j\mid k)}
{\sum_{r\in S}q_{\mathrm{data}}(x_0=r)q_t(j\mid r)}.
\label{eq:appendix-bayes-local-kernel}
\end{equation}
Thus the support for the posterior is,
\begin{equation}
\operatorname{supp} q(x_0=\cdot\mid x_t=j)
=
S\cap \{k:q_t(j\mid k)>0\}.
\label{eq:appendix-posterior-support-kernel}
\end{equation}
This candidate set controls \(d_{\mathrm{act}}\), while its concentration controls \(\gI_1=1-\|q\|_2^2\).

We then discuss different diffusion variants. Please note that all the following discussion we use \(q_t(j\mid k):=q(x_t=j\mid x_0=k)\) for simplicity.

\paragraph{Masking diffusion.}
For the absorbing kernel,
\[
q_t^{\mathrm{mask}}(j\mid k)
=
\alpha_t\mathbb I\{j=k\}
+(1-\alpha_t)\mathbb I\{j=\mathtt{[MASK]}\}.
\]
If \(j\neq\mathtt{[MASK]}\), the clean token must be \(j\), so the posterior is \(\delta_j\) and
\(d_{\mathrm{act}}=\gI_1=0\). If \(j=\mathtt{[MASK]}\), the corrupted token gives no token-level information and the posterior is \(q_{\mathrm{data}}(x_0=\cdot)\), so \(d_{\mathrm{act}}=|S|-1\) and \(\gI_1=1-\sum_{k\in S}q_{\mathrm{data}}(x_0=k)^2\). Since the mask event has probability \(1-\alpha_t\), we can take expectation and get
\begin{equation}
\mathbb E[d_{\mathrm{act}}^{\mathrm{mask}}]
=
(1-\alpha_t)(|S|-1),
\qquad
\mathbb E[\gI_1^{\mathrm{mask}}]
=
(1-\alpha_t)\left(1-\sum_{k\in S}q_{\mathrm{data}}(x_0=k)^2\right).
\label{eq:appendix-mask-difficulty}
\end{equation}

\paragraph{Uniform diffusion.}
For the uniform kernel,
\[
q_t^{\mathrm{uni}}(j\mid k)
=
\alpha_t\mathbb I\{j=k\}
+|\mathcal V|^{-1}(1-\alpha_t).
\]
When \(0<\alpha_t<1\), \(q_t^{\mathrm{uni}}(j\mid k)>0\) for every \(k\in S\). Hence every observed \(j\) leaves all clean tokens in \(S\) possible:
\begin{equation}
d_{\mathrm{act}}^{\mathrm{uni}}(j)=|S|-1.
\label{eq:appendix-uni-dact}
\end{equation}
Uniform corruption is therefore harder than masking with more active dimension. It is also less informative such that
\begin{equation}
\mathbb E[\gI_1^{\mathrm{mask}}]
\le
\mathbb E[\gI_1^{\mathrm{uni}}].
\label{eq:appendix-mask-uni-i1}
\end{equation}

\paragraph{Semantic diffusion.}
For a localized semantic kernel,
\[
q_t^{\mathrm{sem}}(j\mid k)
=
\alpha_t\mathbb I\{j=k\}
+(1-\alpha_t)s_t^{\mathrm{sem}}(j\mid k),
\]
the possible clean explanations of \(j\) are
\[
S_{\mathrm{sem}}(j)
:=
S\cap
\left(
\{j\}\cup
\{k:s_t^{\mathrm{sem}}(j\mid k)>0\}
\right).
\]
Then
\begin{equation}
d_{\mathrm{act}}^{\mathrm{sem}}(j)
=
|S_{\mathrm{sem}}(j)|-1.
\label{eq:appendix-sem-dact}
\end{equation}
If the semantic neighborhoods are genuinely local, then
\[
\{j\}\cap S
\subseteq
S_{\mathrm{sem}}(j)
\subseteq
S,
\]
so semantic diffusion activates a local set of clean explanations: larger than a copied masking token, but smaller than the full clean support used by uniform diffusion. This gives
\begin{equation}
\mathbb E[d_{\mathrm{act}}^{\mathrm{mask}}]
\le
\mathbb E[d_{\mathrm{act}}^{\mathrm{sem}}]
\le
\mathbb E[d_{\mathrm{act}}^{\mathrm{uni}}].
\label{eq:appendix-dact-order-context}
\end{equation}

For \(\gI_1\), recall that \(\gI_1(q)=1-\|q\|_2^2\) increases as the posterior becomes less concentrated. Masking gives delta posteriors on visible tokens, uniform keeps the broadest set of explanations active, and semantic diffusion lies between them by restricting explanations to semantic neighborhoods. Therefore, under the same intermediate-kernel condition,
\begin{equation}
\mathbb E[\gI_1^{\mathrm{mask}}]
\le
\mathbb E[\gI_1^{\mathrm{sem}}]
\le
\mathbb E[\gI_1^{\mathrm{uni}}].
\label{eq:appendix-i1-order-context}
\end{equation}
A simple sufficient case is a locally flat posterior over each candidate set: if \(q\) is approximately uniform over \(m\) candidates, then \(\gI_1(q)=1-1/m\), which grows with \(m\).

Finally, averaging \eqref{eq:appendix-dact-order-context} and \eqref{eq:appendix-i1-order-context} over \(t\) yields the main-text comparison:
\begin{equation}
\mathbb E \left[d_{\mathrm{act}}^{\mathrm{mask}}\right]
\le
\mathbb E \left[d_{\mathrm{act}}^{\mathrm{sem}}\right]
\le
\mathbb E \left[d_{\mathrm{act}}^{\mathrm{uni}}\right],
\qquad
\mathbb E \left[\gI_1^{\mathrm{mask}}\right]
\le
\mathbb E \left[\gI_1^{\mathrm{sem}}\right]
\le
\mathbb E \left[\gI_1^{\mathrm{uni}}\right].
\label{eq:appendix-main-difficulty-order}
\end{equation}

\paragraph{Dense semantic proposals.}
If \(s_t^{\mathrm{sem}}\) is implemented by a dense softmax over semantic scores, then the exact support can become the full vocabulary, making the exact rank equal to the uniform rank. In that case the same argument should be read in terms of the \(\epsilon\)-effective support
\[
\operatorname{supp}_\epsilon(q):=\{k:q_k\ge\epsilon\},
\qquad
d_{\mathrm{act},\epsilon}(q):=|\operatorname{supp}_\epsilon(q)|-1,
\]
or equivalently in the low-temperature regime where most posterior mass lies in the semantic neighborhood. This is the practical sense in which semantic diffusion reduces the active posterior geometry relative to uniform diffusion while retaining more active directions than masking.
\subsection{Proof for Exposure-bias Propagation}
\label{apdx:exposure_bias_proof}

We restate proposition~\ref{prop:exposure_bias} and give proof here.
\begin{lightgrey}
\begin{prop}[Exposure Bias Propagation]
With mild conditions, the propagation satisfies:
\begin{equation}
\gB_{t-1}
\le
\eta_t \,\gB_t + \rho_t,\nonumber
\end{equation}
where \(\rho_t\) upper-bounds the step-wise error:
\( \rho_t \geq \sup_{x_t}
D_{\mathrm{KL}}\bigl(q_t(x_{t-1}\mid x_t) \| p_t^\theta(x_{t-1}\mid x_t)\bigr) \) and \(\eta_t^{\mathrm{KL}}\in[0,1]\) is the step-wise error propagation coefficient. In particular,
\begin{equation}
\eta_{t,\mathrm{mask}} = 1,
\quad
\eta_{t,\mathrm{rm}}
\le
1-r_t\lambda_t^{\mathrm{ref}},
\quad
\eta_{t,\mathrm{uni}}
\le
1-\lambda_t^{\mathrm{uni}}<1,
\quad
\eta_{t,\mathrm{sem}}
\le
1-\lambda_t^{\mathrm{sem}}<1. \nonumber
\end{equation}
Consequently, for \textbf{semantic and uniform diffusion}, \(\rho_t\le \rho\) and \(\eta_t \le \eta<1\), then 
\[\mathcal{B}_{\exp} = \sum_{t=0}^{T}\gB_t = \mathcal{O}\!\left(\frac{T\,\rho}{1-\eta}\right);\]
whereas for \textbf{masking diffusion} \(\eta_t^{\mathrm{KL}}\approx 1\) and 
\[\mathcal{B}_{\exp} = \mathcal{O}\!\left(T^2 \rho\right).\]
\end{prop}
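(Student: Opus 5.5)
We take as given the standard telescoping argument behind such recursions. We split the one-step reverse update into a \emph{propagation} part and a \emph{local error} part. Write the true and model reverse kernels at step $t$ as $K_t:=q_t(x_{t-1}\mid x_t)$ and $P_t:=p_t^\theta(x_{t-1}\mid x_t)$. Then $q_{t-1}=q_tK_t$ and $\bar p_{t-1}=\bar p_tP_t$ (with $\bar p$ the expected roll-out, so that $\gB_t=D_{\mathrm{KL}}(q_t\|\bar p_t)$ as in~\Cref{eq: sampling_decomp}). We introduce the intermediate distribution $\bar p_tK_t$ and use the joint-KL chain rule exactly as in the proof of the Error Decomposition (\Cref{apdx:bias_decomp}), now for the pair $(x_t,x_{t-1})$:
\[
D_{\mathrm{KL}}(q_{t-1}\|\bar p_{t-1})\le D_{\mathrm{KL}}(q_tK_t\|\bar p_tP_t)\le D_{\mathrm{KL}}(q_t\|\bar p_t)+\mathbb E_{x_t\sim q_t}D_{\mathrm{KL}}(K_t(\cdot\mid x_t)\|P_t(\cdot\mid x_t)).
\]
The first inequality is the data-processing inequality for marginalizing out $x_t$. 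The second term is bounded by $\rho_t$ by definition of the supremum. This already gives the recursion with $\eta_t=1$, which is the masking statement. To obtain contraction we refine the first term. We split instead as $D_{\mathrm{KL}}(q_tK_t\|\bar p_tK_t)$ plus a local error term; making this split legitimate requires a mild condition, such as bounded density ratios or a triangle-type inequality for KL up to constants, and this is the ``mild conditions'' clause. We then bound $D_{\mathrm{KL}}(q_tK_t\|\bar p_tK_t)\le\eta_t\,D_{\mathrm{KL}}(q_t\|\bar p_t)$, where $\eta_t$ is the strong data-processing (contraction) coefficient of the Markov kernel $K_t$.

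The kernel-specific part is then a Doeblin/minorization argument. If $K_t(y\mid x)\ge\lambda_t\,\mu_t(y)$ for all $x,y$ and some probability measure $\mu_t$, we write $K_t=\lambda_t\mu_t+(1-\lambda_t)R_t$. Joint convexity of KL then gives $\eta_t\le 1-\lambda_t$. For uniform diffusion, the reverse kernel inherits a uniform floor from $V^{-1}(1-\alpha_t)$: via Bayes, $K_t(y\mid x)\propto q(x\mid y)q_{t-1}(y)$ and $q(x\mid y)\ge V^{-1}(\alpha_{t-1}-\alpha_t)/\alpha_{t-1}$ per token. This yields $\lambda_t^{\mathrm{uni}}>0$. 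For semantic diffusion we use the softmax proposal of~\Cref{eq:semantic-proposal}, which has full support for $\tau_t<\infty$, or the global-jump term $\beta_t\nu_t$ in~\Cref{eq:global_jump_kernel}; either gives $\lambda_t^{\mathrm{sem}}>0$, possibly smaller than $\lambda_t^{\mathrm{uni}}$. For masking, an unmasked token is never re-corrupted in reverse, so the reverse kernel restricted to committed coordinates is the identity. No minorization holds and $\eta_{t,\mathrm{mask}}=1$. For remasking with rate $r_t$, the kernel is a mixture $r_t\cdot(\text{refresh})+(1-r_t)\cdot(\text{identity})$, and applying the same convexity argument to the refresh component with floor $\lambda_t^{\mathrm{ref}}$ gives $1-r_t\lambda_t^{\mathrm{ref}}$.

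Finally we unroll the recursion from $\gB_T=D_{\mathrm{KL}}(q_1\|p_1)=0$, since the reference distributions match. This gives $\gB_t\le\sum_{s>t}\rho_s\prod_{s'}\eta_{s'}\le\rho/(1-\eta)$ under uniform bounds, and summing over the $T$ steps yields $\mathcal O(T\rho/(1-\eta))$. With $\eta_t=1$, each $\gB_t\le (T-t)\rho$, and summing gives $\mathcal O(T^2\rho)$.

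The main obstacle is the refined split that isolates $\eta_t\gB_t$. KL does not satisfy a triangle inequality, so we cannot simply add $D_{\mathrm{KL}}(q_tK_t\|\bar p_tK_t)$ and the local error. We would first try working in total variation, or in $\chi^2$ where contraction coefficients behave well, and convert back to KL under a bounded-likelihood-ratio assumption. If that fails, we would state the result as the recursion $\gB_{t-1}\le\eta_t\gB_t+\rho_t$ up to a constant factor. A secondary difficulty is that the minorization constants scale per token, roughly like $\lambda^L$ for length-$L$ sequences, so $\lambda_t$ should be understood per position under the factorization assumption used in~\Cref{eq:global-kl-local-decomp}.
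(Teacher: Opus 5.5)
Your chain-rule bound, the Doeblin step $K_t\ge\lambda_t\mu_t\Rightarrow\eta_t\le 1-\lambda_t$, the kernel-wise minorization constants, and the unrolling all agree with the paper. The chain-rule bound already gives the $\eta_t=1$ recursion under the sup-KL hypothesis as stated.

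The gap is the step you flag yourself: you never establish $\mathcal{B}_{t-1}\le D_{\mathrm{KL}}(q_tK_t\|\bar p_tK_t)+\rho_t$. Your fallbacks would not give the stated inequality. Routing through total variation or $\chi^2$ costs constants or square roots when you convert back to KL, and "up to a constant" is a weaker statement.

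In fact the step is false when $\rho_t$ only bounds $\sup_x D_{\mathrm{KL}}(K_t(\cdot\mid x)\|P_t(\cdot\mid x))$, as the statement literally says. Here is a counterexample:
\begin{itemize}
\item take $K_t$ to be the binary symmetric channel with crossover $1/4$, so $\eta_t=(1-2\cdot\tfrac14)^2=\tfrac14$;
\item take $P_t$ with rows $(0.74,0.26)$ and $(0.24,0.76)$;
\item take $\bar p_t=(0.5,0.5)$ and $q_t=(0.6,0.4)$.
\end{itemize}
Then $\mathcal{B}_{t-1}=D_{\mathrm{KL}}\bigl((0.55,0.45)\|(0.49,0.51)\bigr)\approx 0.0072$. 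By contrast, $\eta_t\mathcal{B}_t+\sup_x D_{\mathrm{KL}}\approx 0.25\cdot 0.0201+0.0003\approx 0.0053$. The excess comes from a cross term between rollout mismatch and kernel mismatch of order $\sqrt{\mathcal{B}_t\rho_t}$. That term cannot be absorbed once the coefficient of $\mathcal{B}_t$ is fixed at $\eta_t$.

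The missing idea, which is what the paper's ``mild condition'' really is, is a pointwise likelihood-ratio bound: $\rho_t\ge\sup_{x,y}\log\frac{K_t(y\mid x)}{P_t(y\mid x)}$. This bound dominates the sup-KL in the statement, so nothing claimed is lost. Kernel domination $K_t\le e^{\rho_t}P_t$ survives mixing against any input distribution: $(\bar p_tK_t)(y)=\sum_x\bar p_t(x)K_t(y\mid x)\le e^{\rho_t}(\bar p_tP_t)(y)$. Hence, pointwise in $y$, $\log\frac{(q_tK_t)(y)}{(\bar p_tP_t)(y)}\le\log\frac{(q_tK_t)(y)}{(\bar p_tK_t)(y)}+\rho_t$. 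Averaging under $q_tK_t$ gives $\mathcal{B}_{t-1}\le D_{\mathrm{KL}}(q_tK_t\|\bar p_tK_t)+\rho_t\le\eta_t\mathcal{B}_t+\rho_t$ exactly, with no constant and no detour through other divergences. You name ``bounded density ratios'' as a candidate condition. Without this push-through, however, the contraction half of the proposition, which is its substantive content, is not proved.
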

\end{lightgrey}

Please note that, in the main text, we use \(t-dt\) instead of \(t-1\) in the recursion. This is just a re-indexing of the \(t\) from exact continuous time to sampling step \(T\), thus have no impact in our conclusion. In this proof, we follow \(t=[0:T]\) notation as this can simplify the proof. 

\textit{Proof.} For one reverse step \(t\to t-1\), write
\begin{equation}
Q_t(y\mid x)
:=
q(x_{t-1}=y\mid x_t=x),
\qquad
P_t^\theta(y\mid x)
:=
p_\theta(x_{t-1}=y\mid x_t=x).
\end{equation}
Thus \(q_{t-1}=q_tQ_t\), \(p_{t-1}=p_tP_t^\theta\), and
\(\gB_t=D_{\mathrm{KL}}(q_t\|p_t)\).  We measure the repair ability of the
true reverse kernel by its KL contraction coefficient
\begin{equation}
\eta_t
:=
\sup_{\mu,\nu}
\frac{D_{\mathrm{KL}}(\mu Q_t\|\pi Q_t)}
{D_{\mathrm{KL}}(\mu\|\pi)}
\in[0,1],
\label{eq:eta-kl-appendix}
\end{equation}
where the supremum is over pairs with finite nonzero denominator. \(\eta_t\) measures under true transition kernel \(Q_t\), what will be the error propagated. 

We then use \(\rho_t\) to measure the error between true kernel \(Q_t\) and learned kernel \(P_t^\theta\), 
\[
\rho_t
\ge
\sup_{x,y}
\log
\frac{Q_t(y\mid x)}{P_t^\theta(y\mid x)}.
\]
This gives 
\begin{equation}
Q_t(y\mid x)\le e^{\rho_t}P_t^\theta(y\mid x),
\qquad \forall x,y.
\label{eq:kernel-domination-appendix}
\end{equation}
With the definition of \(\eta_t\) and \(\rho_t\), we can derive the one-step exposure bias recursion. 
\begin{theorem}[One-step exposure-bias recursion]
\label{thm:kl-exposure-bias}
Under~\Cref{eq:eta-kl-appendix} and \Cref{eq:kernel-domination-appendix}, we have
\begin{equation}
\gB_{t-1}
\le
\eta_t\,\gB_t+\rho_t .
\label{eq:linear-kl-recursion-appendix}
\end{equation}
\end{theorem}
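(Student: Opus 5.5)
We insert the true one-step marginal \(q_tP_t^\theta\) between \(q_{t-1}=q_tQ_t\) and \(p_{t-1}=p_tP_t^\theta\), and bound the two resulting pieces separately. The first piece is controlled by the domination \Cref{eq:kernel-domination-appendix}, which costs at most \(\rho_t\). The second is controlled by contraction of the \emph{learned} kernel. As written, \Cref{eq:eta-kl-appendix} defines \(\eta_t\) for the true kernel \(Q_t\), so we either assume the same contraction holds for \(P_t^\theta\), or reorder the interpolation so that \(Q_t\) carries the contraction. We plan the second route first.

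\textbf{Step 1 (log-ratio splitting).} For every \(y\), write
\[
\log\frac{q_{t-1}(y)}{p_{t-1}(y)}=\log\frac{(q_tQ_t)(y)}{(p_tQ_t)(y)}+\log\frac{(p_tQ_t)(y)}{(p_tP_t^\theta)(y)}.
\]
Take expectation under \(q_{t-1}\). The first term then equals \(D_{\mathrm{KL}}(q_tQ_t\|p_tQ_t)\), which by \Cref{eq:eta-kl-appendix} with \(\mu=q_t\) and \(\pi=p_t\) is at most \(\eta_t D_{\mathrm{KL}}(q_t\|p_t)=\eta_t\gB_t\). If \(\gB_t=0\), then \(q_t=p_t\) and the bound is trivial.

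\textbf{Step 2 (kernel mismatch term).} Apply \Cref{eq:kernel-domination-appendix} pointwise: \(Q_t(y\mid x)\le e^{\rho_t}P_t^\theta(y\mid x)\). Multiplying by \(p_t(x)\) and summing over \(x\) gives \((p_tQ_t)(y)\le e^{\rho_t}(p_tP_t^\theta)(y)\) for every \(y\). Hence the second log-ratio is at most \(\rho_t\) pointwise, and so is its expectation under any distribution, in particular \(q_{t-1}\). Adding Steps 1 and 2 gives \(\gB_{t-1}\le\eta_t\gB_t+\rho_t\).

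This is the key reason to work with the sup-log-ratio version of \(\rho_t\) rather than the KL version in the main text. A pointwise bound survives the change of measure from \(p_t\) to \(q_{t-1}\), whereas a KL bound would not. It also explains why the appendix strengthens \(\rho_t\) to a sup over \((x,y)\).

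\textbf{Main obstacle.} The main obstacle is the bookkeeping in Step 1. The expectation is taken under \(q_{t-1}=q_tQ_t\), and the first term must be exactly a KL divergence between two images under the same kernel \(Q_t\). Note that \(D_{\mathrm{KL}}(q_tQ_t\|p_tQ_t)\) is genuinely a KL divergence, since \(q_{t-1}\) is the first argument, so the decomposition is consistent. The remaining care is with supports: we need \(p_tQ_t\ll p_tP_t^\theta\), which Step 2 ensures, and \(q_t\ll p_t\), which is implied by \(\gB_t<\infty\). If \(\gB_t=\infty\), the inequality holds vacuously.
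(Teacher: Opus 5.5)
Your proposal is correct and follows the paper's proof. The paper also derives $(p_tQ_t)(y)\le e^{\rho_t}(p_tP_t^\theta)(y)$ from the kernel domination, uses it to bound $D_{\mathrm{KL}}(q_tQ_t\|p_tP_t^\theta)$ by $D_{\mathrm{KL}}(q_tQ_t\|p_tQ_t)+\rho_t$ (your split with $p_tQ_t$ as the intermediate), and then applies the contraction coefficient of the true kernel $Q_t$ to reach $\eta_t\gB_t+\rho_t$. Your support bookkeeping is a small rigor addition that the paper leaves implicit: $q_t\ll p_t$ gives $q_tQ_t\ll p_tQ_t$, so both log-ratios are finite on the support of $q_{t-1}$.
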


\begin{proof}
We expand the per-step exposure bias as
\[
\gB_{t-1}
=D_{\mathrm{KL}}(q_{t-1}\|p_{t-1}^\theta) \text{ and }\gB_{t-1}
=D_{\mathrm{KL}}(q_t\|p_t^\theta)
\]
By \Cref{eq:kernel-domination-appendix}, the mixed distributions also satisfy
\[
(p_tQ_t)(y)
=\sum_xp_t(x)Q_t(y\mid x)
\le
e^{\rho_t}\sum_xp_t(x)P_t^\theta(y\mid x)
=e^{\rho_t}(p_tP_t^\theta)(y).
\]
Therefore, with \(q_{t-1} = q_tQ_t\) and \(p_{t-1} = p_t P_t^\theta\) 
\begin{align}
D_{\mathrm{KL}}(q_{t-1}\|p_{t-1}^\theta) = D_{\mathrm{KL}}(q_tQ_t\|p_tP_t^\theta)
&=
\sum_y(q_tQ_t)(y)
\log\frac{(q_tQ_t)(y)}{(p_tP_t^\theta)(y)}
\nonumber\\
&\le
\sum_y(q_tQ_t)(y)
\log\frac{(q_tQ_t)(y)}{(p_tQ_t)(y)}
+\rho_t
\nonumber\\
&=
D_{\mathrm{KL}}(q_tQ_t\|p_tQ_t)+\rho_t
\nonumber\\
&\le
\eta_tD_{\mathrm{KL}}(q_t\|p_t^\theta)+\rho_t .
\end{align}
This is exactly \Cref{eq:linear-kl-recursion-appendix}.
\end{proof}
The recursion shows that exposure bias has two sources.  The term \(\rho_t\)
is the local denoising error of this step.  The coefficient \(\eta_t\) decides
whether previous rollout mismatch is preserved or contracted.

\paragraph{When is \(\eta_t<1\)?}
The useful case is when the reverse kernel contains a shared component (for instance, global jumping).  If
there exist \(\lambda_t>0\) and a distribution \(\nu_t\) such that
\begin{equation}
Q_t(\cdot\mid x)\ge \lambda_t\nu_t(\cdot)\quad\forall x,\text{ such that }\eta_t\le 1-\lambda_t
\quad 
\label{eq:minorization-appendix}
\end{equation}
We can \Cref{eq:minorization-appendix} allows us to rewrite the transition
\[
Q_t(\cdot\mid x)
=
\lambda_t\nu_t(\cdot)+(1-\lambda_t)\widetilde Q_t(\cdot\mid x)
\]
for another Markov kernel \(\widetilde Q_t\). This is because for any \(\mu,\nu\), joint
convexity of KL gives
\[
D_{\mathrm{KL}}(\mu Q_t\|\nu Q_t)
\le
(1-\lambda_t)
D_{\mathrm{KL}}(\mu\widetilde Q_t\|\nu\widetilde Q_t),
\]
and ordinary data processing gives
\[
D_{\mathrm{KL}}(\mu\widetilde Q_t\|\nu\widetilde Q_t)
\le
D_{\mathrm{KL}}(\mu\|\nu).
\]
Thus, a source-independent
refresh mass (for instance, the global jumping in uniform and semantic diffusion) can make the reverse step reduce exposure bias.

\paragraph{Instantiating the coefficient for diffusion variants.}
We now apply the above criterion to the transition kernels discussed in the
main text.  Let \(a_t\in(0,1)\) denote the one-step probability of retaining
the current token in the forward kernel.

\textbf{Masking diffusion.}
For the absorbing kernel, observing a visible token \(x_t=j\neq\mathtt{[MASK]}\)
forces \(x_{t-1}=j\).  Hence, on the visible-token subspace, \(Q_t^{\mathrm{mask}}(\cdot\mid j)=\delta_j\).
The reverse step acts as the identity on that subspace, so one can choose two
visible-token distributions \(\mu,\nu\) for which
\[
D_{\mathrm{KL}}(\mu Q_t^{\mathrm{mask}}\|\nu Q_t^{\mathrm{mask}})
=
D_{\mathrm{KL}}(\mu\|\nu).
\]
Therefore \(\eta_{t,\mathrm{mask}}=1\). This reflects the ``early commitment'' effect: once a wrong visible token appears, the standard masking reverse kernel has no mechanism to repair it.

\textbf{Remasking.}
If a remasking sampler has the effective kernel
\(Q_t^{\mathrm{rm}}=(1-r_t)Q_t^{\mathrm{mask}}+r_tQ_t^{\mathrm{ref}}\), 
and the refresh part satisfies
\[Q_t^{\mathrm{ref}}(\cdot\mid x)\ge \lambda_t^{\mathrm{ref}}\nu_t(\cdot),
\qquad \forall x,
\]
then \(Q_t^{\mathrm{rm}}(\cdot\mid x)\ge
r_t\lambda_t^{\mathrm{ref}}\nu_t(\cdot)\). Hence \(\eta_{t,\mathrm{rm}} \le
1-r_t\lambda_t^{\mathrm{ref}}\).
Remasking therefore helps precisely through the amount of shared refresh it
injects.

\textbf{Uniform diffusion.}
For a one-step uniform forward kernel on a vocabulary of size \(V\),
\(
q(x_t=j\mid x_{t-1}=y) = a_t\delta_{jy}+V^{-1}(1-a_t).
\)
Bayes' rule gives
\[
Q_t^{\mathrm{uni}}(y\mid j)
=
\frac{
\left(a_t\delta_{jy}+\frac{1-a_t}{V}\right)q_{t-1}(y)
}{
q_t(j)
}
\ge
\frac{1-a_t}{Vq_t(j)}q_{t-1}(y).
\]
Thus \(Q_t^{\mathrm{uni}}\) has the shared component \(q_{t-1}\) with mass
\[
\lambda_t^{\mathrm{uni}}
:=
\min_j\frac{1-a_t}{Vq_t(j)} ,
\]
assuming \(q_t(j)>0\).  By~\Cref{eq:minorization-appendix},
\begin{equation}
\eta_{t,\mathrm{uni}}
\le
1-\lambda_t^{\mathrm{uni}}
<1.
\label{eq:uniform-eta-appendix}
\end{equation}
Uniform diffusion can therefore contract rollout mismatch, which is the
sampling-side repair advantage emphasized in the main text.

\textbf{Semantic diffusion and SemDLM+.}
For a semantic forward kernel
\[
q(x_t=j\mid x_{t-1}=y)
=
a_t\delta_{jy}+(1-a_t)s_t^{\mathrm{sem}}(j\mid y),
\]
similarly, if the semantic diffusion is equiped with a global transition \(\nu_t\) as a shared component:
\[
s_t^{\mathrm{sem}}(j\mid y)\ge \alpha_t^{\mathrm{sem}}\nu_t(j),
\qquad \forall y,j .
\]
Then
\[
Q_t^{\mathrm{sem}}(y\mid j)
\ge
\frac{(1-a_t)\alpha_t^{\mathrm{sem}}\nu_t(j)}
{q_t(j)}
q_{t-1}(y),
\]
so
\begin{equation}
\eta_{t,\mathrm{sem}}
\le
1-\lambda_t^{\mathrm{sem}}
<1,
\qquad
\lambda_t^{\mathrm{sem}}
:=
\min_j
\frac{(1-a_t)\alpha_t^{\mathrm{sem}}\nu_t(j)}
{q_t(j)}.
\label{eq:semantic-eta-appendix}
\end{equation}
This serves another important factor that we need global transition in~\Cref{eq:global_jump_kernel}.

\paragraph{Cumulated Exposure Bias with Propagation.}
We then extend the one-step analysis to the overall analysis of exposure bias. 

Assume \(\rho_t\le\rho\) for all \(t\) and \(\eta_t\le\eta<1\) (this is the case for semantic and uniform diffusion), then iterating
\(\gB_{t-1}\le\eta\gB_t+\rho\) from time \(t = [0:T]\) gives
\begin{equation}
\gB_t
\le
\eta^{T-t}\gB_T
+
\frac{1-\eta^{T-t}}{1-\eta}\rho .
\label{eq:Bt-contracting-appendix}
\end{equation}
Summing over all the time steps \(t=0,\ldots,T\) yields
\begin{equation}
\sum_{t=0}^{T}\gB_t
\le
\frac{\gB_T}{1-\eta}
+
\frac{T\rho}{1-\eta}.
\label{eq:accumulated-contracting-appendix}
\end{equation}
If \(q_T=p_T\), we get \(\gB_T=0\), and therefore
\[
\sum_{t=0}^{T}\gB_t
=
\mathcal O\!\left(\frac{T\rho}{1-\eta}\right).
\]

For masking diffusion, \(\eta_t=1\).  The same recursion becomes
\(\gB_{t-1}\le \gB_t+\rho\), so
\[
\gB_t\le \gB_T+(T-t)\rho
\]
and hence
\begin{equation}
\sum_{t=0}^{T}\gB_t
\le
(T+1)\gB_T+\frac{T(T+1)}{2}\rho .
\label{eq:accumulated-masking-appendix}
\end{equation}
When \(q_T=p_T\), this reduces to
\[
\sum_{t=0}^{T}\gB_t
=
\mathcal O(T^2\rho).
\]
This proves the comparison in Proposition~\ref{prop:exposure_bias}: kernels with shared support, such as uniform diffusion and SemDLM+ with global jumping, contract
exposure bias across steps, while masking can accumulate it quadratically in the number of reverse steps.

\subsection{Transition Kernel Dispersion and Variance}

\subsubsection{Unifying Variance in sampling and approximation}
\label{apdx:unify_ts_variance}

We first unify the Training and Sampling Variance.
\begin{align}
\gV(x_t)
&:= \E_{q(x_0\mid x_t)}
\Big[
\log \bar p(x_0\mid x_t) - \E_S \log \hat p_S(x_0\mid x_t)
\Big],\\
\text{Per step sampling variance}
&:= \E_{q(x_t)}\big[\gV(x_t)\big]
= \E_{q(x_t,x_0)}
\Big[
\log \bar p(x_0\mid x_t) - \E_S \log \hat p_S(x_0\mid x_t)
\Big],\\
\text{Per-step Training variance}
&:= \E_{q(x_t)}
\Big[
\log \bar p(x_t) - \E_S \log \hat p_S(x_t)
\Big].
\end{align}

For any fixed $x_t$ (and any fixed $S$), suppose
\begin{equation}
\bar p(x_t)=\int \bar p(x_0\mid x_t) \bar p(x_0) dx_0,
\hat p_S(x_t)=\int \hat p_S(x_0\mid x_t) \hat p_S(x_0) dx_0.
\end{equation}
Define the (normalized) reference measure
\begin{equation}
r_{\bar p}(x_0\mid x_t)
:=\frac{\bar p(x_0\mid x_t)\bar p(x_0)}{\bar p(x_t)}.
\end{equation}
Then
\begin{equation}
\frac{\hat p_S(x_t)}{\bar p(x_t)}
=
\E_{x_0\sim r_{\bar p}(\cdot\mid x_t)}
\left[
\frac{\hat p_S(x_0\mid x_t)\hat p_S(x_0)}
{\bar p(x_0\mid x_t)\bar p(x_0)}
\right].
\end{equation}
By Jensen's inequality (concavity of $\log$),
\begin{equation}
\log\frac{\hat p_S(x_t)}{\bar p(x_t)}
=
\log \E_{r_{\bar p}}[a(x_0)]
\ge \E_{r_{\bar p}}[\log a(x_0)],
\end{equation}
where
\begin{equation}
a(x_0):=
\frac{\hat p_S(x_0\mid x_t)\hat p_S(x_0)}
{\bar p(x_0\mid x_t)\bar p(x_0)}.
\end{equation}
Equivalently, for each $x_t$,
\begin{align}
\log \bar p(x_t)-\log \hat p_S(x_t)
&\le
\E_{x_0\sim r_{\bar p}(\cdot\mid x_t)}
\Big[
\log \bar p(x_0\mid x_t)-\log \hat p_S(x_0\mid x_t)
+ \log \bar p(x_0)-\log \hat p_S(x_0)
\Big].
\label{eq:marginal-vs-conditional-jensen}
\end{align}
Taking $\E_{q(x_t)}$ and then $\E_S$ yields the bound
\begin{equation}
\begin{aligned}
B
&=
\E_{q(x_t)}\Big[\log \bar p(x_t)-\E_S\log \hat p_S(x_t)\Big]\nonumber\\
&\le
\E_{q(x_t)} \E_S
\E_{x_0\sim r_{\bar p}(\cdot\mid x_t)}
\Big[
\log \bar p(x_0\mid x_t)-\log \hat p_S(x_0\mid x_t)
+ \log \bar p(x_0)-\log \hat p_S(x_0)
\Big].
\label{eq:B-upper}
\end{aligned}
\end{equation}
If in addition the $x_0$-marginals match, i.e.$\hat p_S(x_0)=\bar p(x_0), \forall S$,
then~\cref{eq:B-upper} reduces to
\begin{equation}
\log \bar p(x_t)-\log \hat p_S(x_t)
\le
\E_{x_0\sim r_{\bar p}(\cdot\mid x_t)}
\Big[
\log \bar p(x_0\mid x_t)-\log \hat p_S(x_0\mid x_t)
\Big],
\end{equation}
and hence
\begin{equation}
B
\le
\E_{q(x_t)} \E_S
\E_{x_0\sim r_{\bar p}(\cdot\mid x_t)}
\Big[
\log \bar p(x_0\mid x_t)-\log \hat p_S(x_0\mid x_t)
\Big].
\end{equation}

If moreover $q(x_0\mid x_t)=\bar p(x_0\mid x_t)$ (so that $r_{\bar p}(x_0\mid x_t)=\bar p(x_0\mid x_t)$),
then
\begin{equation}
\boxed{
\text{Per-step Sampling Variance} \le \text{Per-step Approximation Variance}.
}
\end{equation}
This concludes that, the per-step sampling variance is strictly upper-bounded by the per-step training variance. Thus, in what follows, we only analyze the training variance for simplicity.

\subsubsection{The gradient guided variance}
\label{apdx:gradient_variance}
To connect the variance term to a more classical estimation-variance picture, we rewrite the KL divergence as the combination of negative log-likelihood and entropy.
\begin{equation}
    \begin{aligned} 
    \gL &= -\mathbb{E}_{x\sim p_\text{data}(x)} \left[\log p_\theta(x) \right] - \text{Entropy}
    \end{aligned}
\end{equation}
And define the per-sample loss $\ell_t(\theta) := -\log p_\theta(x_0\mid x_t)$, per-step risks $\gL_t(\theta):=\mathbb E_{x_t}[\ell_t(\theta)]$ and use $\nabla_\theta$ as the gradient operator. Let $\theta^\star$ minimize the population risk
$\gL(\theta)=\mathbb E_t[\mathcal L_t(\theta)]$, so that $\mathbb E_t[g_t(\theta^\star)]=0$. At $\theta^\star$,
the law of total variance yields
\begin{equation}
\begin{aligned}
&\mathrm{Var}_{t, x_t}\big(\nabla \ell_t(\theta^*)\big)\\
=&\mathbb E_t\left[\mathrm{Var}_{x_t}\big(\nabla \ell_t(\theta^*)\big)\right]
+ \mathrm{Var}_t\left(\mathbb E_{x_t} \left[\nabla \ell_t(\theta^*)\right]\right),\\
=&\underbrace{\mathbb E_t\left[\mathrm{Var}_{x_t}\big(\nabla \ell_t(\theta^*)\big)\right]}_{\text{within-step noise}}
+
\underbrace{\mathrm{Var}_t\left(\nabla \gL_t(\theta^\star)\right)}_{\text{between-step heterogeneity}}.
\end{aligned}
\end{equation}
The second term $\mathrm{Var}_t\left(\nabla \gL_t(\theta^\star)\right)$ is a \emph{kernel-induced heterogeneity}. It quantifies how the gradient varies across different diffusion steps $t$. 

\begin{lightgrey}
    \textbf{Remark.} Crucially, $\nabla \gL_t(\theta^\star)$ depends on $Q_t$ via sampling $x_t$, hence a larger dispersion of $Q_t$ will induce larger between-t heterogeneity. Other than the theoretical results,~\cref{fig:variance_vs_dispersion} empirically valid that the gradient variance is strongly correlated with the dispersion of the transition kernel across time t.
\end{lightgrey}

Now we aim to rigorously derive that as Dispersion $Q_t$ increases, Between-$t$ Heterogeneity increases.

We define the between-$t$ heterogeneity as:
\begin{equation}
    \gV(Q) \coloneqq \mathrm{Var}_t\big( \nabla \gL_t(\theta^\star; Q) \big) = \mathbb{E}_t\big[ \| \nabla \gL_t(\theta^\star; Q) \|^2 \big],
\end{equation}
where the equality holds because $\mathbb{E}_t[\nabla \gL_t(\theta^\star; Q)] = \mathbf{0}$.

We define the posterior entropy as $h(t; Q) \coloneqq \mathbb{E}_{x_t \sim q_t(\cdot; Q_t)} \big[ H\big( q(x_0 \mid x_t; Q_t) \big) \big]$, where $H(\cdot)$ is Shannon entropy. Then we define the kernel dispersion as $\mathcal{D}(Q) \coloneqq \mathrm{Var}_t\big( h(t; Q) \big) = \mathbb{E}_t\big[ h(t; Q)^2 \big] - \big( \mathbb{E}_t[h(t; Q)] \big)^2.$ This quantifies how posterior uncertainty fluctuates across diffusion steps. Uniform diffusion yields large $\mathcal{D}(Q)$ (entropy spans $[0, \log|\gV|]$); semantic diffusion yields small $\mathcal{D}(Q)$ (entropy concentrated near $\log k$), and Masking diffusion has the lowest dispersion.

We make the following assumptions:

1) the approximation error scales with posterior entropy, such that there exists $c_1 > 0$ such that for all $t$,
\begin{equation}
\mathbb{E}_{x_t} \big[ \| p_{\theta^\star}(\cdot \mid x_t) - q(\cdot \mid x_t; Q_t) \|_1 \big] \geq c_1 \cdot h(t; Q).
\end{equation}
This is induced by the fact that high-entropy posteriors (induced by dispersed $Q_t$) are harder to approximate with finite-capacity models

2) The gradients are bounded and stable.
$\| \nabla_\theta f_{\theta^\star}(x_t) \| \leq M$ for all $x_t$, and the direction of $\nabla_\theta f_{\theta^\star}(x_t)$ varies smoothly over semantically coherent neighborhoods.

\begin{lightgrey}
\begin{prop}
\label{prop:dispersion_q_gradient}
There exists a constant $c = c_1 / M > 0$ such that:
\begin{equation}
    \gV(Q) \geq c^2 \cdot \mathcal{D}(Q).
\end{equation}
Consequently, if two generators satisfy $\mathcal{D}(Q^A) > \mathcal{D}(Q^B)$, then $\gV(Q^A) > \gV(Q^B)$.
\end{prop}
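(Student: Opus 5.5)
The plan is to lower-bound the norm of each per-step gradient by the per-step posterior entropy, and then pass from second moments to variances. The starting point is the gradient formula in~\Cref{eq:param-grad-local-mismatch}, specialised to a single step $t$:
\[
\nabla \gL_t(\theta^\star;Q)=\mathbb E_{x_t}\Big[\sum_k \big(p_{\theta^\star}(k\mid x_t)-q(k\mid x_t;Q_t)\big)\nabla_\theta f_{\theta^\star}(x_t)_k\Big].
\]

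The first step uses Assumption 2. Because the direction of $\nabla_\theta f_{\theta^\star}$ varies smoothly, the Jacobian acts on the residual $r(x_t):=p_{\theta^\star}(\cdot\mid x_t)-q(\cdot\mid x_t)$ without systematic cancellation across $x_t$. I would read this as a lower-Lipschitz or "no cancellation" condition with constant tied to $M$: $\|\nabla\gL_t(\theta^\star)\|\ge M^{-1}\,\mathbb E_{x_t}\|r(x_t)\|_1$, after normalising the Jacobian scale. This is the step I expect to be the real obstacle. Boundedness by $M$ alone gives only an upper bound $\|\nabla\gL_t\|\le M\,\mathbb E\|r\|_1$, so a lower bound needs the smoothness clause to be interpreted as a reverse inequality. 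I would state that interpretation explicitly as the precise form of Assumption 2. This is exactly what produces $c=c_1/M$.

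The second step combines this with Assumption 1 to get $\|\nabla\gL_t(\theta^\star)\|\ge (c_1/M)\,h(t;Q)=c\,h(t;Q)$ for every $t$. Squaring and averaging over $t$, and using the paper's identity $\gV(Q)=\mathbb E_t\|\nabla\gL_t(\theta^\star)\|^2$ (valid because $\mathbb E_t\nabla\gL_t(\theta^\star)=0$), gives
\[
\gV(Q)\ge c^2\,\mathbb E_t[h(t;Q)^2]\ge c^2\big(\mathbb E_t[h^2]-(\mathbb E_t h)^2\big)=c^2\mathcal D(Q),
\]
since $(\mathbb E_t h)^2\ge 0$.

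For the \emph{consequently} clause, I would note that $\gV(Q^A)\ge c^2\mathcal D(Q^A)>c^2\mathcal D(Q^B)$ only compares $\gV(Q^A)$ with the lower bound for $Q^B$, not with $\gV(Q^B)$ itself. To close the gap I would add a matching upper bound: Assumption 2 gives $\|\nabla\gL_t\|\le M\,\mathbb E\|r\|_1$, and I would assume $\mathbb E\|r\|_1$ is comparable to $h$ up to the mean-entropy level, so that $\gV\asymp\mathcal D$. Failing that, I would present the ordering as an ordering of certified lower bounds, namely that the guaranteed heterogeneity is larger for $Q^A$.
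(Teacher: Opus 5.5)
Your route is the paper's route. You bound $\|\nabla\gL_t(\theta^\star;Q)\|\ge (c_1/M)\,h(t;Q)$ for each $t$, square, average over $t$ using $\gV(Q)=\mathbb E_t\|\nabla\gL_t(\theta^\star;Q)\|^2$, and discard $(\mathbb E_t h)^2$. Your caution about the per-step lower bound is justified, because the paper's own justification of it does not work. The paper writes $\|\mathbb E_{x_t}[\bm\delta_t]\|\ge \mathbb E_{x_t}\|\bm\delta_t\|$ ``by H\"older's inequality''. That is the reverse of the triangle (Jensen) inequality. It then divides by $M$ using $\|\nabla_\theta f_{\theta^\star}\|\le M$, which can only give an upper bound on $\|\bm\delta_t\|$. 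So the explicit reverse inequality you propose is exactly the hypothesis the argument needs: $\|\nabla\gL_t(\theta^\star)\|\ge M^{-1}\,\mathbb E_{x_t}\|r(x_t)\|_1$, meaning no cancellation across $x_t$ plus a lower bound on how the logit Jacobian acts on the residual. Once you state this as the content of Assumption 2, $\gV(Q)\ge c^2\mathcal D(Q)$ follows exactly as you write.

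You are also right that the ``consequently'' clause does not follow from a lower bound, and the paper offers nothing beyond ``this directly completes the proof''. However, your proposed repair does not close the gap either. The upper bound $\|\nabla\gL_t\|\le M\,\mathbb E_{x_t}\|r\|_1\le MC'h(t;Q)$ controls $\gV(Q)$ by $\mathbb E_t[h^2]=\mathcal D(Q)+(\mathbb E_t h)^2$, not by $\mathcal D(Q)$. Even a genuine two-sided bound $c^2\mathcal D\le\gV\le C\mathcal D$ would give $\gV(Q^A)>\gV(Q^B)$ only when $\mathcal D(Q^A)>(C/c^2)\,\mathcal D(Q^B)$, not whenever $\mathcal D(Q^A)>\mathcal D(Q^B)$. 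The statement you can actually prove is your fallback: the certified lower bound on between-step heterogeneity is larger for the more dispersed kernel.
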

\end{lightgrey}

\textit{proof:} We previously derived the gradient derivation for cross-entropy loss, and we re-formula it here. (in~\Cref{apdx:asym_bias})
\begin{align}
\begin{aligned}
        \nabla \gL_t(\theta^\star; Q) 
    &= \mathbb{E}_{x_t} \Big[ \mathbb{E}_{x_0 \mid x_t} \big[ (p_{\theta^\star}(x_0 \mid x_t) - q(x_0 \mid x_t; Q_t)) \odot \nabla_\theta f_{\theta^\star}(x_t) \big] \Big] \\
    &= \mathbb{E}_{x_t} \big[ \bm{\delta}_t(x_t; Q) \big],
\end{aligned}
\end{align}
where $\ell_t(x_t; Q)$ denotes the per-sample gradient signal. By Hölder's inequality:
\begin{align}
\begin{aligned}
    \| \nabla \gL_t(\theta^\star; Q) \| 
    &\geq \big| \mathbb{E}_{x_t} \big[ \| \bm{\delta}_t(x_t; Q) \| \big] \big| \\
    &\geq c_1 \cdot \mathbb{E}_{x_t} \big[ H(q(x_0 \mid x_t; Q_t)) \big] \cdot \frac{1}{M} \\
    &= c \cdot h(t; Q), \quad c \coloneqq c_1 / M > 0.
\end{aligned}
\end{align}
Squaring both sides and taking expectation over $t$:
\begin{align}
\begin{aligned}
\gV(Q) 
    &= \mathbb{E}_t \big[ \| \nabla \gL_t(\theta^\star; Q) \|^2 \big] \\
    &\geq c^2 \cdot \mathbb{E}_t \big[ h(t; Q)^2 \big] \\
    &= c^2 \cdot \Big( \mathrm{Var}_t\big( h(t; Q) \big) + \big( \mathbb{E}_t[h(t; Q)] \big)^2 \Big) \\
    &\geq c^2 \cdot \mathrm{Var}_t\big( h(t; Q) \big) \\
    &= c^2 \cdot \mathcal{D}(Q).
\end{aligned}
\end{align}
The directly completes the proof.

\textbf{Remark}: $\mathcal{D}(Q)$ explicitly captures how $Q_t$'s structural dispersion, such as support size, entropy trajectory, propagates to gradient statistics. This suggests that minimizing kernel dispersion $\mathcal{D}(Q)$ during design can reduce between-$t$ gradient heterogeneity. This inspires us to design SemDLM.

\section{More about Diffusion as CTMC}

In~\Cref{sec:preliminaries}, we mentioned that the local transition \(q_{t\mid t-dt}\), the cumulative forward \(q_{t\mid 0}\), and the generator $Q_t$ are equivalent representations of the same forward process. The equivalence is obtained via Bayesian rule: \(q\left(x_{t-dt} \mid x_t, x_0\right) \propto q\left(x_t \mid x_{t-dt}\right) q(x_{t-dt} \mid x_0)\). 

We thus present other details for the process. We first detail the forward process and reverse process in CTMC. The forward process is derived by the \textit{infinitesimal generator} $Q_t$, i.e., \(\frac{d q_t}{d t}=Q_t q_t,0\leq t\leq 1\), where
\begin{equation}
\text{Forward Process via Euler Sampling: }q\left(x_{t+dt}=y \mid x_t=z\right)=\delta_{z y}+Q_t(z, y) dt+O\left(dt^2\right)
\end{equation}
and the corresponding reverse process is 
\[
\text{Reverse-time Euler Step: }q(x_{t-dt}=z\mid x_t=y)
=
\delta_{yz}+\bar Q_t(y,z)dt+O(dt^2),
\]
With the reverse-time infinitesimal generator
\[
\bar Q_t(y,z)
=
Q_t(z,y)\frac{q_t(z)}{q_t(y)},\quad z\neq y,
\qquad
\bar Q_t(y,y)=-\sum_{z\neq y}\bar Q_t(y,z).
\]
where the density ratio \(q_t(z)/q_t(y)\) is typically approximated by a learned score or posterior model in practice.
 
Thus, it is clear that the local transition \(q_{t\mid t-dt}\), the cumulative forward \(q_{t\mid 0}\), and the generator $Q_t$ are equivalent representations of the same forward/reverse process. For completeness, we list the other representation of masking, uniform and semantic diffusion.

\textbf{Absorbing (Mask) Transition} defines an absorbing token $\mathtt{[MASK]}$ to make $Q_t(\mathtt{[MASK]},y)=0$ and 
\begin{equation}
Q_t(z,y)=
\begin{cases}
\lambda(t), & z\neq \mathtt{[MASK]}, y=\mathtt{[MASK]},\\
-\lambda(t), & z\neq \mathtt{[MASK]},y=z.
\end{cases}
\end{equation} 
where $\lambda(t)$ is the time schedule. This induces a cumulated kernel directly from $x_0$ to $x_t$:
\begin{equation}
    q(x_t = j \mid x_0 = i) = \alpha_t  \delta_{ij} + (1 - \alpha_t)  \delta_{j, \mathtt{[MASK]}},
\end{equation}
where $\alpha_t = \exp\!\left(- \int_0^t \lambda(s) ds\right)$ and $\delta_{ij}$ is the Kronecker delta.

\textbf{Uniform Transition} is another choice that has been shown to have favorable scaling behavior~\citep{vonruette2025scalingbehaviordiscretediffusion}:
\begin{equation}
Q_t(z,y)=
\begin{cases}
\frac{\lambda(t)}{K}, & y\neq z,\\
-\lambda(t)\big(1-\frac{1}{K}\big), & y=z.
\end{cases}
\end{equation}
which has the following cumulated kernel:
\begin{equation}
    q(x_t = j \mid x_0 = i) = \alpha_t  \delta_{ij} + \frac{1 - \alpha_t}{K}, \quad \forall  i, j \in \mathcal{V},
\end{equation}
with the same decay factor $\alpha_t = \exp\!\left(- \int_0^t \lambda(s) ds\right)$.

\textbf{Semantic Transition} Let $\mathrm{KNN}(z)$ be the $k$ nearest neighbors of $z$ and $w(z,y)\ge 0$ be edge weights, the transition kernel is then,
\begin{equation}
Q_t(z,y)=
\begin{cases}
\lambda(t)\frac{w(z,y)}{\sum_{y'\in \mathrm{KNN}(z)} w(z,y')}, & y\in \mathrm{KNN}(z),\\
-\lambda(t), & y=z,\\
0, & \text{otherwise}.
\end{cases}
\end{equation}
This gives a forward kernel as,
\begin{equation}
    q(x_t = j \mid x_0 = i)
    =
    \alpha_t \delta_{ij}+
    \frac{1-\alpha_t}{k_t}
    \mathbb I\bigl(j\in \mathcal N_{k}(i)\bigr),
\end{equation}
where $\mathcal N_{k}(i)$ is the top-$k$ semantic neighborhood of token $i$. 

\section{Additional Design for Semantic Diffusion}

\begin{minipage}[t]{0.48\textwidth}
\begin{algorithm}[H] 
\caption{Training of SemDLM}
\label{alg:semdlm-training}
\begin{algorithmic}[1]
\Require Dataset $\mathcal{D} \sim p_0$
\Ensure Trained model $f_\theta(x_t,t)$
\State Initialize parameters $\theta$
\While{not converged}
    \State Sample a minibatch $\{x_0^{(i)}\}_{i=1}^B \sim \mathcal{D}$
    \State Sample $t \sim \mathcal{U}(0,1)$
    \State Sample noisy states $x_t^{(i)} \sim p(x_t \mid x_0^{(i)})$ via Eq.~(\ref{eq:global_jump_kernel})
    \State Compute $p_{0\mid t}^\theta(\cdot \mid x_t^{(i)}) = f_\theta(x_t^{(i)}, t)$
    \State Update $\theta$ by minimizing Eq.~(\ref{eq:ori_loss})
\EndWhile
\end{algorithmic}
\end{algorithm}
\end{minipage}
\hfill 
\begin{minipage}[t]{0.48\textwidth}
\begin{algorithm}[H] 
\caption{Sampling from SemDLM}
\label{alg:semdlm-sampling}
\begin{algorithmic}[1]
\Require Reference distribution $p_1$, trained model $f_\theta(x_t,t)$, step size $\Delta t$
\Ensure Generated sample $\hat{x}_0$
\State Sample initial state $\hat{x}_1 \sim p_1$
\For{$t = 1, 1-\Delta t, \dots, \Delta t$}
    \State Compute $p_{0\mid t}^\theta(\cdot \mid \hat{x}_t) = f_\theta(\hat{x}_t,t)$
    \State Sample $\tilde{x}_0 \sim p_{0\mid t}^\theta(\cdot \mid \hat{x}_t)$
    \State Compute generator $Q_\theta(\hat{x}_t \mid \tilde{x}_0)$
    \State Sample $\hat{x}_{t-\Delta t} \sim \hat{x}_t + Q_\theta(\hat{x}_t \mid \tilde{x}_0)\,\Delta t$
\EndFor
\State \Return $\hat{x}_0$
\end{algorithmic}
\end{algorithm}
\end{minipage}

\subsection{Analysis of Semantic Basins}
\label{apdx:semantic_basin_analysis}

\paragraph{Posterior logit decomposition.}
Again, for a fixed
position \(i\), let the current sampling state be \(\hat x_t\), and write
\(\hat x_t^i=j\). The ideal denoising posterior satisfies
\begin{equation}
q(x_0^i=k\mid \hat x_t)
\propto
p_{\mathrm{data}}(x_0^i=k\mid \hat x_t^{-i})
q_t(j\mid x_0^i=k,\hat x_t^{-i}).
\label{eq:posterior_bayes_appendix}
\end{equation}
Thus, up to a normalization constant, the ideal logits \(l_i^\star(k;\hat x_t)\)
\begin{equation}
l_i^\star(k;\hat x_t)
=
\underbrace{
\log p_{\mathrm{data}}(x_0^i=k\mid \hat x_t^{-i})
}_{\text{contextual prior}}
+
\underbrace{
\log q_t(\hat x_t^i\mid x_0^i=k,\hat x_t^{-i})
}_{\text{local forward likelihood}}
+
\mathrm{const}.
\end{equation}
The first term is the contextual prior, while the second term is the local
forward likelihood. During sampling, the context \(\hat x_t^{-i}\) is generated
by the model itself. Hence the actual model logit can be written as
\begin{equation}
l_{\theta,i}(k;\hat x_t)
=
l_i^\star(k;\hat x_t)
+
\Delta_{\mathrm{roll},i}(k;\hat x_t)
+
\epsilon_i(k),
\label{eq:appendix_sampling_logit}
\end{equation}
where \(\Delta_{\mathrm{roll},i}\) denotes rollout-induced logit bias. The rollout bias can be locally approximated as
\[\Delta_{\mathrm{roll},i}(k;\hat x_t)
\approx
\sum_{l\in V} A(k,l)n_i^{(W)}(l), \text{ with }A(k,l)>0.\]

\paragraph{Why do semantically similar tokens induce positive contextual contribution?}
\(A(k,l)\) measures contextual association: whether the occurrence of token \(l\) in the recent context provides positive
evidence for predicting token \(k\) at position \(i\).

At the data-distribution level, one can define the contextual contribution as a
log-prior shift:
\begin{equation}
A_i(k,l)
:=
\log
\frac{
p_{\mathrm{data}}(x_0^i=k\mid x_t^r=l,\mathrm{rest})
}{
p_{\mathrm{data}}(x_0^i=k\mid x_t^r=\mathrm{neutral},\mathrm{rest})
}.
\label{eq:contextual_contribution_data}
\end{equation}
Thus, \(A_i(k,l)>0\) means that observing \(l\) in the context increases the conditional prior probability of \(k\). More generally,
tokens in the same semantic cluster tend to have positive pointwise contextual
association under coherent natural language contexts.

At the model level, the same effect can be understood through a local
linearization of the Transformer logits. Suppose the sampling logit is
\[
l_{\theta,i}(k;\hat x_t)
=
u_k^\top h_i(\hat x_t)+b_k,
\]
where \(h_i(\hat x_t)\) is the hidden state at position \(i\) and \(u_k\) is the
output vector for token \(k\). Around a neutral baseline \(\bar x_t\), the
hidden state can be locally approximated as
\[
h_i(\hat x_t)
\approx
h_i(\bar x_t)
+
\sum_{r\in W} B_{ir} e_{\hat x_t^r},
\]
where \(e_{\hat x_t^r}\) is the embedding of the token at position \(r\), and
\(B_{ir}\) summarizes the local attention and feed-forward influence from
position \(r\) to position \(i\). Substituting this into the logit gives
\begin{align}
l_{\theta,i}(k;\hat x_t)-l_{\theta,i}(k;\bar x_t)
&\approx
\sum_{r\in W} u_k^\top B_{ir}e_{\hat x_t^r}.
\end{align}
Hence, the contextual contribution of token \(l\) to the logit of \(k\) can be
approximated by
\begin{equation}
A_{\theta,i}(k,l)
\approx
u_k^\top B_{ir}e_l.
\label{eq:model_level_Akl}
\end{equation}
If \(k\) and \(l\) are semantically related, their embedding and output
directions are often aligned in language models, making \(A\) contribution more
likely to be positive.

\paragraph{Rollout-induced positive feedback.}
The rollout-induced logit bias can
be locally approximated as
\begin{equation}
\Delta_{\mathrm{roll},i}(k;\hat x_t)
\approx
\sum_{r\in W} A(k,\hat x_t^r)
=
\sum_{l\in V} A(k,l)n_i^{(W)}(l).
\label{eq:rollout_feedback_additive_appendix}
\end{equation}
This approximation states that repeated contextual contributions accumulate in
the logit. If \(A(k,l)>0\) for semantically related tokens \(k,l\in C\), then
over-production of cluster \(C\) increases the logits of tokens in \(C\). Thus,
\[
n_i^{(W)}(C)\uparrow
\Rightarrow
\Delta_{\mathrm{roll},i}(k)\uparrow \text{ for } k\in C
\Rightarrow
p_\theta(x_0^i\in C\mid \hat x_t)\uparrow.
\]
This is the rollout-induced positive feedback that drives semantic basin
formation.

\paragraph{Likelihood amplification.}
Let \(C\subseteq V\) be a semantic cluster and suppose the current token
\(j=\hat x_t^i\) belongs to \(C\). For a semantic kernel, we have
\(q_t(j\mid k)\ \text{large for } k\in C\)
and \(q_t(j\mid k)\ \text{small for } k\notin C\).
Define
\(a_{\mathrm{in}}
:=
\inf_{k\in C} q_t(j\mid k)\)
and \(
a_{\mathrm{out}}
:=
\sup_{k\notin C} q_t(j\mid k).
\)
If \(a_{\mathrm{in}}>a_{\mathrm{out}}\), then the posterior odds of cluster
\(C\) are amplified relative to the contextual prior odds:
\begin{equation}
\begin{aligned}
\frac{
q(x_0^i\in C\mid \hat x_t)
}{
q(x_0^i\notin C\mid \hat x_t)
} =
\frac{
\sum_{k\in C}
p_{\mathrm{data}}(k\mid \hat x_t^{-i})q_t(j\mid k)
}{
\sum_{k\notin C}
p_{\mathrm{data}}(k\mid \hat x_t^{-i})q_t(j\mid k)
}
\ge
\frac{a_{\mathrm{in}}}{a_{\mathrm{out}}}
\cdot
\frac{
p_{\mathrm{data}}(x_0^i\in C\mid \hat x_t^{-i})
}{
p_{\mathrm{data}}(x_0^i\notin C\mid \hat x_t^{-i})
}.
\label{eq:cluster_likelihood_amplification}
\end{aligned}
\end{equation}
Thus, the semantic likelihood itself increases the cluster-level log-odds. For a highly local semantic kernel, \(a_{\mathrm{out}}\) can be very small, so the likelihood amplification can be large.

Combining the semantic likelihood amplification in
\Cref{eq:cluster_likelihood_amplification} and the rollout-induced feedback in
\Cref{eq:rollout_feedback_additive_appendix}, semantic diffusion has two significant
sources of cluster amplification, which makes semantic basins more
severe in SemDLM.

\subsection{Global Jumping as Basin Escape}
\label{apdx:global_jumping}

We show why the global jumping
kernel can mitigate semantic basin. Consider a kernel,
\begin{equation}
q_t(j\mid k,c)
=
\alpha_t\delta_{kj}
+
\beta_t\nu_t(j)
+
(1-\alpha_t-\beta_t)s_t^{\mathrm{sem}}(j\mid k,c).
\label{eq:appendix_global_jump_kernel}
\end{equation}
The global component gives \( q_t(j\mid k,c) \ge \beta_t\nu_t(j)\)
As a result, even if \(k\) is outside the semantic neighborhood of \(j\), it still receives non-zero likelihood.

Let \(j\in C\), \(k_{\mathrm{in}}\in C\) and \(k_{\mathrm{out}}\notin C\). With global jumping, 
\(q_t(j\mid k_{\mathrm{out}},c)\ge \beta_t\nu_t(j),\)
and thus the likelihood ratio is bounded by
\begin{equation}
\frac{
q_t(j\mid k_{\mathrm{in}},c)
}{
q_t(j\mid k_{\mathrm{out}},c)
}
\le
\frac{
\alpha_t\delta_{k_{\mathrm{in}}j}
+
\beta_t\nu_t(j)
+
(1-\alpha_t-\beta_t)s_t^{\mathrm{sem}}(j\mid k_{\mathrm{in}},c)
}{
\beta_t\nu_t(j)
}.
\label{eq:global_jump_ratio_bound}
\end{equation}
Therefore, global jumping caps the log-likelihood advantage of the local
semantic cluster, which prevents the positive feedback introduced in~\Cref{sec:desired_properties}.

\subsection{Semantic-Frequency Penalty as Negative Feedback}
\label{apdx:semantic_frequency_penalty}

We rewrite the
semantic frequency penalty term:
\begin{equation}
m_i^{(W)}(k)
=
\sum_{l\in V} S_+(k,l)n_i^{(W)}(l),
\label{eq:appendix_semantic_frequency}
\end{equation}
where \(S_+(k,l)\ge 0\) measures positive semantic association. with the semantic frequency penalty, the new logits are
\begin{equation}
\tilde l_{\theta,i}(k)
=
l_{\theta,i}(k)
-
\lambda_{\mathrm{freq}}\psi(n_i^{(W)}(k))
-
\lambda_{\mathrm{sem}}\psi(m_i^{(W)}(k)),
\qquad
\psi(u)=\log(1+u).
\label{eq:appendix_hybrid_penalty}
\end{equation}
For two tokens \(k\) and \(l\), the corrected log-ratio satisfy
\[
\begin{aligned}
\log
\frac{
\tilde p_\theta(k)
}{
\tilde p_\theta(l)
}
&=
\log
\frac{
p_\theta(k)
}{
p_\theta(l)
}
-
\lambda_{\mathrm{freq}}
\left[
\psi(n_i^{(W)}(k))-\psi(n_i^{(W)}(l))
\right]
\\
&\quad
-
\lambda_{\mathrm{sem}}
\left[
\psi(m_i^{(W)}(k))-\psi(m_i^{(W)}(l))
\right].
\end{aligned}
\]
Therefore, a token receives lower relative odds if either the token itself has
been over-produced, or its semantic neighborhood has been over-produced. We can connect this penalty to the rollout-induced positive feedback. Suppose the semantic feedback satisfies
\begin{equation}
\bar\Delta_{\mathrm{roll}}(C)
-
\bar\Delta_{\mathrm{roll}}(C^c)
\le
\eta_{\mathrm{sem}}
\left[
\psi(m_i^{(W)}(C))-\psi(m_i^{(W)}(C^c))
\right],
\label{eq:semantic_feedback_bound}
\end{equation}
where \(\bar\Delta_{\mathrm{roll}}(C)\) denotes the average rollout-induced
logit shift for tokens in \(C\). After applying the semantic penalty, the net
cluster-level semantic feedback is bounded by
\[
\begin{aligned}
&
\left[
\bar\Delta_{\mathrm{roll}}(C)
-
\bar\Delta_{\mathrm{roll}}(C^c)
\right]
-
\lambda_{\mathrm{sem}}
\left[
\psi(m_i^{(W)}(C))-\psi(m_i^{(W)}(C^c))
\right]
\\
&\le
-(\lambda_{\mathrm{sem}}-\eta_{\mathrm{sem}})
\left[
\psi(m_i^{(W)}(C))-\psi(m_i^{(W)}(C^c))
\right].
\end{aligned}
\]
Thus, when \(\lambda_{\mathrm{sem}}\ge \eta_{\mathrm{sem}}\), the semantic
penalty cancels the positive semantic feedback.

\paragraph{Pure frequency penalty as a special case.}
The exact frequency penalty is recovered by setting \(\lambda_{\mathrm{sem}}=0\)
in \Cref{eq:appendix_hybrid_penalty}, or equivalently by choosing
\(S_+(k,l)=\mathbb I\{k=l\}\). In this case,
\[
m_i^{(W)}(k)=n_i^{(W)}(k),
\]
and the semantic penalty reduces to an exact-token repetition penalty. 
\section{Additional Experimental Details}

\subsection{Practical Designs}
\label{sec:knn_approximation}

\paragraph{Model setup.}
Following~\citet{arriola2025block} and~\citet{sahoo2024simple}, we use a
Transformer backbone with rotary positional embeddings. We adopt the small-scale
architecture of~\citet{arriola2025block}, resulting in a 110M-parameter model.

\paragraph{Practical SemDLM+ kernel.}
Since a full-vocabulary semantic softmax in~\Cref{eq:global_jump_kernel} is expensive
for large vocabularies, we use a top-\(k\) nearest neighborhoods as a practical approximation, i.e.
\begin{equation}
    q(x_t = j \mid x_0 = i)
    =
    \alpha_t \delta_{ij}
    +
    \beta_t \nu_t(j)
    +
    \frac{1-\alpha_t-\beta_t}{k_t}
    \mathbb I\bigl(j\in \mathcal N_{k_t}(i)\bigr),
\end{equation}
where $\mathcal N_{k_t}(i)$ is the top-$k_t$ semantic neighborhood of token $i$. We exclude $i$ from $\mathcal N_{k_t}(i)$ so that $\alpha_t$ exclusively controls self-retention. A simple schedule is
\begin{equation}
    \alpha_t = \alpha_{\min} + (1 - \alpha_{\min}) (1 - t)^\beta,
    \qquad
    \beta_t = \beta_{\max} t^\eta,
    \qquad
    k_t = 1 + (k_{\max} - 1) t^\gamma,
\end{equation}
with $\alpha_{\min}\in(0,1)$ and $\beta,\eta,\gamma>0$. Early diffusion ($t\to 1$) uses larger neighborhoods and a stronger shared transition term for exploration and overlap; late diffusion ($t\to 0$) increases self-retention, shrinks the neighborhood, and can anneal $\beta_t$ downward for precise refinement.

This kernel keeps the main properties of SemDLM+: the semantic neighborhood
keeps the posterior concentrated, while the global transition branch provides shared
support and improves sampling-side repair. In implementation, we store only a
finite top-\(k_{\max}\) neighbor table and absorb the remaining probability mass
into the global transition branch, which keeps memory usage manageable.

\paragraph{Dataset-specific global transition.}
We also test the unigram global transition  For
LM1B, a simple uniform transition  is sufficient. For OpenWebText, we use
a unigram transition where the transition weights is determined by the word frequency in the dataset:
\begin{equation}
    \nu_{\tau}^{\mathrm{owt}}(j)
    =
    \frac{\mathbb I\{j\in \mathcal V_{\mathrm{valid}}^{\mathrm{owt}}\}\,\hat\nu(j)^\tau}
    {\sum_{k\in \mathcal V_{\mathrm{valid}}^{\mathrm{owt}}}\hat\nu(k)^\tau},
    \qquad 0<\tau\le 1,
    \label{eq:owt_tempered_refresh}
\end{equation}
where \(\hat\nu\) is the empirical unigram distribution and
\(\mathcal V_{\mathrm{valid}}^{\mathrm{owt}}\) removes special symbols,
continuation wordpieces, punctuation-only tokens, and extremely rare tokens.
This avoids the unnatural proposals produced by uniform transition on open-domain text.

\paragraph{Changed-aware training objective.}
In practice, many positions remain unchanged under the forward kernel. If they
are weighted equally, the model can overuse a trivial copy shortcut. We therefore
upweight corrupted positions during training. Let \(m_i\in\{0,1\}\) indicate
whether position \(i\) was changed by the forward kernel. We optimize
\begin{equation}
    \mathcal L_{\mathrm{train}}
    =
    \frac{\sum_i w_i \, \ell_i}{\sum_i w_i},
    \qquad
    w_i
    =
    \lambda_{\mathrm{chg}}\mathbb I\{m_i=1\}
    +
    \lambda_{\mathrm{same}}\mathbb I\{m_i=0\},
    \label{eq:practical_changed_loss}
\end{equation}
with \(\lambda_{\mathrm{chg}} \gg \lambda_{\mathrm{same}}\). This preserves the
semantic denoising signal while keeping optimization stable.

\paragraph{Blockwise predictor sampler.}
For generation, we use a semi-autoregressive blockwise sampler. At each step,
the model predicts \(p_\theta(x_0\mid x_t)\), samples a truncated clean proposal,
projects it to the next lower-noise level using the same practical forward
kernel, and updates only low-confidence positions. The replacement probability
is approximated by
\begin{equation}
    \Pr(\text{replace at position }i)
    \approx
    \frac{p_t-p_s}{p_t}
    \Big(1-p_\theta(x_t^i \mid x_t)\Big)^\rho,
    \label{eq:practical_replace_prob}
\end{equation}
optionally with a freeze threshold for already confident tokens.

\subsection{Addtional Figures}
\Cref{fig:repair_curves} draws the impact of sampling steps on generation quality.

\begin{figure*}[t]
  \centering
  \begin{subfigure}[t]{0.45\textwidth}
    \centering
    \includegraphics[width=\linewidth]{figures/lm1b_repair_curve.pdf}
    \caption{LM1B.}
    \label{fig:repair_curve_lm1b}
  \end{subfigure}
  \hfill
  \begin{subfigure}[t]{0.45\textwidth}
    \centering
    \includegraphics[width=\linewidth]{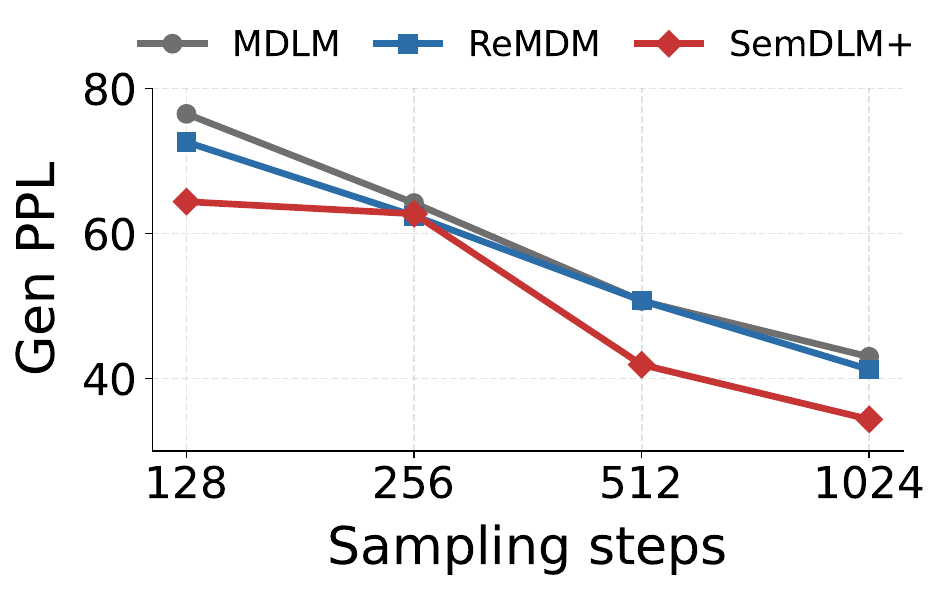}
    \caption{OpenWebText.}
    \label{fig:repair_curve_owt}
  \end{subfigure}
  \caption{Generation PPL as the number of sampling steps increases. SemDLM+ shows stronger improvement with additional refinement steps, indicating better sampling-side repair.}
  \label{fig:repair_curves}
\end{figure*}

\subsection{Computational Cost}

The experiments were run on 8x NVIDIA A100-SXM4-80GB GPUs. For LM1B experiments, the training time is typically 72 hours and for OWT the training time is 144 Hours. The sampling takes approximately 5 mins in the same infrastructure.

\end{document}